\title{Instructions for Authors}
\author{
    Yang Song\thanks{~Joint first authors. Correspondence to Yang Song \textless{}yangsong@cs.stanford.edu\textgreater{} and Stefano Ermon \textless{}ermon@cs.stanford.edu\textgreater{}.}\\
   Stanford University \\
   \\
  \And
   Sahaj Garg$^*$\\
  Stanford University \\
  \And
  Jiaxin Shi \\
  Tsinghua University \\
  \And
  Stefano Ermon\\
  Stanford University
}
\def\1{\bm{1}}
\def\rvy{{\mathbf{y}}}
\def\rvz{{\mathbf{z}}}
\def\vb{{\bm{b}}}
\def\evb{{b}}
\def\mG{{\bm{G}}}
\def\mI{{\bm{I}}}
\DeclareMathAlphabet{\mathsfit}{\encodingdefault}{\sfdefault}{m}{sl}
\SetMathAlphabet{\mathsfit}{bold}{\encodingdefault}{\sfdefault}{bx}{n}
\def\emG{{G}}
\DeclareMathOperator*{\argmin}{arg\,min}
\newcommand{\ptilde}{\tilde{p}}
\newcommand{\cH}{\mathcal{H}}
\newcommand{\tp}{^\mathsf{T}}
\definecolor{smcolor}{rgb}{0.5490196078431373, 0.33725490196078434, 0.29411764705882354}
\definecolor{cpcolor}{rgb}{0.8392156862745098, 0.15294117647058825, 0.1568627450980392}
\definecolor{dsmcolor}{rgb}{0.17254901960784313, 0.6274509803921569, 0.17254901960784313}
\definecolor{ssmcolor}{rgb}{1.0, 0.4980392156862745, 0.054901960784313725}
\definecolor{ssmvrcolor}{rgb}{0.12156862745098039, 0.4666666666666667, 0.7058823529411765}
\definecolor{bpcolor}{rgb}{0.5803921568627451, 0.403921568627451, 0.7411764705882353}
\definecolor{mlecolor}{rgb}{0.8901960784313725, 0.4666666666666667, 0.7607843137254902}
\definecolor{elbocolor}{rgb}{0.4980392156862745, 0.4980392156862745, 0.4980392156862745}
\definecolor{steincolor}{rgb}{0.7372549019607844, 0.7411764705882353, 0.13333333333333333}
\definecolor{spectralcolor}{rgb}{0.09019607843137255, 0.7450980392156863, 0.8117647058823529}
\def\SM{{\color{smcolor}SM}}
\def\SSM{{\color{ssmcolor}SSM}}
\def\DSM{{\color{dsmcolor}DSM}}
\def\SSMVR{{\color{ssmvrcolor}SSM-VR}}
\def\CP{{\color{cpcolor}CP}}
\def\aBP{{\color{bpcolor}approx BP}}
\def\ABP{{\color{bpcolor}Approx BP}}
\def\Stein{{\color{steincolor}Stein}}
\def\Spectral{{\color{spectralcolor}Spectral}}
\def\MLE{{\color{mlecolor}MLE}}
\def\ELBO{{\color{elbocolor}ELBO}}
\newcommand{\mbf}[1]{\mathbf{#1}}
\newcommand{\bs}[1]{\boldsymbol{#1}}
\newcommand{\mbb}[1]{\mathbb{#1}}
\newcommand{\ud}{\mathrm{d}}
\newcommand{\up}{\mathrm}
\newcommand{\mcal}{\mathcal}
\newcommand{\norm}[1]{\left\lVert#1\right\rVert}
\newtheorem{lemma}{Lemma}
\newtheorem{theorem}{Theorem}
\newtheorem{corollary}{Corollary}
\newtheorem{remark}{Remark}
\newtheorem{proposition}{Proposition}
\newtheorem{assumption}{Assumption}
\newenvironment{customthm}[1]
{\innercustomthm}
{\endinnercustomthm}
\newcommand{\be}{\begin{equation}}
	\newcommand{\ee}{\end{equation}}
\definecolor{Gray}{gray}{0.85}
\definecolor{LightCyan}{rgb}{0.88,1,1}
\newcolumntype{a}{>{\columncolor{Gray}}c}
\newcolumntype{b}{>{\columncolor{white}}c}
\def\@onedot{\ifx\@let@token.\else.\null\fi\xspace}
\DeclareRobustCommand\onedot{\futurelet\@let@token\@onedot}
\newcommand{\figref}[1]{Fig\onedot~\ref{#1}}
\newcommand{\algoref}[1]{Alg\onedot~\ref{#1}}
\newcommand{\secref}[1]{Section~\ref{#1}}
\newcommand{\tabref}[1]{Tab\onedot~\ref{#1}}
\newcommand{\thmref}[1]{Theorem~\ref{#1}}
\newcommand{\appref}[1]{Appendix~\ref{#1}}
\newcommand{\corref}[1]{Corollary~\ref{#1}}
\newcommand{\lemref}[1]{Lemma~\ref{#1}}
\newcommand{\propref}[1]{Proposition~\ref{#1}}
\newcommand{\assref}[1]{Assumption~\ref{#1}}
\newcommand{\bfx}{\mathbf{x}}
\newcommand{\bfv}{\mathbf{v}}
\newcommand{\bfz}{\mathbf{z}}
\newcommand{\bfe}{{\bs{\epsilon}}}
\newcommand{\bftheta}{{\boldsymbol{\theta}}}
\newcommand{\bfalpha}{{\boldsymbol{\alpha}}}
\newcommand{\bfphi}{{\boldsymbol{\phi}}}
\newcommand{\bfs}{\mathbf{s}}
\newcommand{\bfh}{\mathbf{h}}
\def\ie{\emph{i.e}\onedot}
\def\cf{\emph{cf}\onedot}
\def\wrt{w.r.t\onedot}
\def\aka{a.k.a\onedot}
\def\iid{i.i.d\onedot}
\title{Sliced Score Matching: A Scalable Approach to \\Density and Score Estimation}%
\begin{document}
\maketitle

\begin{abstract}
    Score matching is a popular method for estimating unnormalized statistical models. %
    However, it has been so far limited to simple, shallow models or low-dimensional data, due to the difficulty of computing the Hessian 
    of log-density functions. We show this difficulty can be mitigated by 
    projecting the scores onto random vectors before comparing them. This objective, called sliced score matching, only involves Hessian-vector products, which can be easily implemented using reverse-mode automatic differentiation. Therefore, sliced score matching is amenable to more complex models and higher dimensional data compared to score matching. Theoretically, we prove the consistency and asymptotic normality of sliced score matching estimators.
    Moreover, we demonstrate that sliced score matching can be used to learn deep score estimators for implicit distributions. 
    In our experiments, we show sliced score matching can learn deep energy-based models effectively, and can produce accurate score estimates for applications such as variational inference with implicit distributions and training Wasserstein Auto-Encoders. %
\end{abstract}
\section{INTRODUCTION}

Score matching~\citep{hyvarinen2005estimation} is 
particularly suitable for learning unnormalized statistical models, such as energy based ones. It is based on minimizing the distance between the derivatives of the log-density functions (\aka, \emph{score}s) of the data and model distributions.
Unlike maximum likelihood estimation (MLE), the objective of score matching only depends on the scores, which are oblivious to the (usually) intractable partition functions. 
However, score matching 
requires the computation of the diagonal elements of the Hessian of the model's log-density function. This Hessian trace computation is generally expensive~\citep{martens2012estimating}, requiring a number of forward and backward propagations proportional to the data dimension. This severely limits its applicability 
to complex models
parameterized by deep neural networks, such as deep energy-based models~\citep{lecun2006tutorial,wenliang2018}.

Several approaches have been proposed to alleviate this difficulty: \citet{kingma2010regularized} propose approximate backpropagation for computing the trace of the Hessian; \citet{martens2012estimating} develop curvature propagation, a fast stochastic estimator for the trace in score matching; and \citet{vincent2011connection} %
transforms score matching to a denoising problem which avoids second-order derivatives.
These methods have achieved some success, but may suffer from one or more of the following problems: inconsistent parameter estimation, large estimation variance, and cumbersome implementation.

To alleviate these problems, %
we propose sliced score matching, a %
variant of score matching that can scale to deep unnormalized models and high dimensional data. The key intuition is that instead of directly matching %
the high-dimensional scores, %
we match their projections along random directions. %
Theoretically, we show that under some regularity conditions, sliced score matching %
is a well-defined statistical estimation criterion %
that yields consistent and asymptotically normal parameter %
estimates. Moreover, compared to the methods of \citet{kingma2010regularized} and \citet{martens2012estimating}, whose implementations require customized backpropagation for deep networks, sliced score matching only involves Hessian-vector products, thus can be easily and efficiently implemented in frameworks %
such as TensorFlow~\citep{abadi2016tensorflow} and PyTorch~\citep{adam2017automatic}.

Beyond training unnormalized models, sliced score matching can also be naturally adapted as an objective for estimating the score function of a data generating distribution~\citep{Sasaki14score,strathmann2015gradient} by training a score function model parameterized by deep neural networks.
This observation enables many new applications of sliced score matching. For example, we show that it can be used to provide accurate score estimates needed for variational inference with implicit distributions~\citep{huszar2017variational} and learning Wasserstein Auto-Encoders (WAE, \citet{tolstikhin2018wasserstein}). 

Finally, we evaluate the performance of sliced score matching on learning unnormalized statistical models (density estimation) and estimating score functions of a data generating process (score estimation). For density estimation, experiments on deep kernel exponential families~\citep{wenliang2018} and NICE flow models~\citep{dinh14nice} show that our method is either more scalable or more accurate than existing score matching variants.

For score estimation, our method improves the performance of variational auto-encoders (VAE) with implicit encoders, and can train WAEs without a discriminator or MMD loss by directly optimizing the KL divergence between aggregated posteriors and the prior. In both situations we outperformed kernel-based score estimators~\citep{li2018gradient,shi2018spectral} by achieving better test likelihoods and better sample quality in image generation.

\section{BACKGROUND}

Given \iid samples $\{\bfx_1, \bfx_2, \cdots, \bfx_N\} \subset \mbb{R}^D$ from a data distribution $p_d(\bfx)$, our task is to learn an unnormalized density, $\tilde{p}_m(\bfx;\bftheta)$, where $\bftheta$ is from some parameter space $\Theta$. The model's partition function is denoted as $Z_\bftheta$, which is assumed to be existent but intractable. Let $p_m(\bfx; \bftheta)$ be the normalized density determined by our model, we have
\begin{align*}
    p_m(\bfx;\bftheta) = \frac{\tilde{p}_m(\bfx;\bftheta)}{Z_\bftheta}, \quad Z_\bftheta = \int \tilde{p}_m(\bfx;\bftheta)\ud \bfx.
\end{align*}
For convenience, we denote the score functions of $p_m$ and $p_d$ as $\bfs_m(\bfx; \bftheta) \triangleq \nabla_\bfx \log p_m(\bfx; \bftheta)$ and $\bfs_d(\bfx) \triangleq \nabla_\bfx \log p_d(\bfx)$ respectively. Note that since $\log p_m(\bfx;\bftheta) = \log \tilde{p}_m(\bfx; \bftheta) - \log Z_\bftheta$, we immediately conclude that $\bfs_m(\bfx; \bftheta)$ does not depend on the intractable partition function $Z_\bftheta$.

\subsection{SCORE MATCHING}\label{sec:sm}
Learning unnormalized models with maximum likelihood estimation (MLE) can be difficult due to the intractable partition function $Z_\bftheta$. To avoid this, score matching~\citep{hyvarinen2005estimation} minimizes the Fisher divergence between $p_d$ and $p_m(\cdot, \bftheta)$, which is defined as
\begin{align}
    L(\bftheta) \triangleq \frac{1}{2}\mbb{E}_{p_d}[\norm{\bfs_m(\bfx; \bftheta) - \bfs_d(\bfx)}^2_2].\label{eqn:fisher}
\end{align}
Since $\bfs_m(\bfx; \bftheta)$ does not involve $Z_\bftheta$, the Fisher divergence does not depend on the intractable partition function. However, Eq.~\eqref{eqn:fisher} is still not readily usable for learning unnormalized models, as we only have samples and do not have access to the score function of the data $\bfs_d(\bfx)$.

By applying integration by parts, \citet{hyvarinen2005estimation} shows that $L(\bftheta)$ can be written as $L(\bftheta) = J(\bftheta) + \up{C}$ (\cf, Theorem~1 in \citet{hyvarinen2005estimation}), where
\begin{gather}
 J(\bftheta) \triangleq \mbb{E}_{p_d}\bigg[\operatorname{tr}(\nabla_{\bfx} \bfs_m(\bfx; \bftheta)) + \frac{1}{2}\norm{\bfs_m(\bfx; \bftheta)}_2^2\bigg],\label{eqn:sm}
\end{gather}
$\up{C}$ is a constant that does not depend on $\bftheta$, $\operatorname{tr}(\cdot)$ denotes the trace of a matrix, and 
\begin{align}
    \nabla_{\bfx} \bfs_m(\bfx; \bftheta) = \nabla_\bfx^2 \log \tilde{p}_m(\bfx; \bftheta)
\end{align}
is the Hessian of the log-density function. 
The constant can be ignored and the following unbiased estimator 
of the remaining terms 
is used to train $\tilde{p}_m(\bfx; \bftheta)$:
\begin{align*}
    \resizebox{\columnwidth}{!}{$\displaystyle
    \hat{J}(\bftheta; \bfx_1^N) \triangleq %
    \frac{1}{N} \sum_{i=1}^N \bigg[\operatorname{tr}(\nabla_\bfx \bfs_m(\bfx_i; \bftheta)) + \frac{1}{2}\norm{\bfs_m(\bfx_i; \bftheta)}_2^2\bigg],$}\vspace{-1em}
\end{align*}
where $\bfx_1^N$ is a shorthand used throughout the paper for a collection of $N$ data points $\{\bfx_1, \bfx_2, \cdots, \bfx_N\}$ sampled i.i.d. from $p_d$, and $\nabla_\bfx \bfs_m(\bfx_i; \bftheta)$ denotes the Hessian of $\log \tilde{p}_m(\bfx;\bftheta)$ evaluted at $\bfx_i$.

\paragraph{Computational Difficulty.} While the score matching objective $\hat{J}(\bftheta; \bfx_1^N)$ avoids the computation of $Z_\bftheta$ for unnormalized models, it introduces a new computational difficulty: computing the trace of the Hessian of a log-density function, $\nabla_\bfx^2 \log \tilde{p}_m$. A na\"{i}ve approach of computing the trace of the Hessian requires $D$ times more backward passes than computing the gradient $\bfs_m = \nabla_\bfx \log \tilde{p}_m$ (see \algoref{alg:sm} in the appendix). For example, the trace could be computed by applying backpropogation $D$ times to $\bfs_m$ to get each diagonal term of $\nabla_\bfx^2 \log \tilde{p}_m$ sequentially. In practice, $D$ can be many thousands, which can render score matching too slow for practical purposes. Moreover, \citet{martens2012estimating} argues from a theoretical perspective that it is unlikely that there exists an algorithm for computing the diagonal of the Hessian defined by an arbitrary computation graph within a constant number of forward and backward passes.

\subsection{SCORE ESTIMATION FOR IMPLICIT DISTRIBUTIONS}
\label{sec:bg-score-est}
Besides parameter estimation in unnormalized models, score matching can also be used to estimate scores of \emph{implicit distributions}, which are distributions that have a tractable sampling process but without a tractable density. For example, the distribution of random samples from the generator of a GAN~\citep{goodfellow2014generative} is an implicit distribution. Implicit distributions can arise in many more situations such as the marginal distribution of a non-conjugate model~\citep{sun2018functional}, and models defined by complex simulation processes~\citep{tran2017hierarchical}. In many cases learning and inference become intractable due to the need of optimizing an objective that involves the intractable density.

In these cases, directly estimating the score function $\bfs_q(\bfx) = \nabla_\bfx \log q_\bftheta(\bfx)$ based on \iid samples from an implicit %
density $q_\bftheta(\bfx)$ can be useful. For example, suppose our learning problem involves optimizing the entropy $H(q_\bftheta(\bfx))$ of an implicit distribution. This situation is common when dealing with variational free energies~\citep{kingma2013autoencoding}. Suppose $\bfx\sim q_{\bftheta}$ can be reparameterized as $\bfx = g_{\bftheta}(\bfe)$, where $\bfe$ is a simple random variable independent of $\bftheta$, such as a standard normal, and $g_\bftheta$ is a deterministic mapping. We can write the gradient of the entropy with respect to $\bftheta$ as
\begin{align*}
    \nabla_\bftheta H(q_\bftheta) &\triangleq -\nabla_\bftheta \mbb{E}_{q_\bftheta(\bfx)}[\log q_\bftheta(\bfx)]\\
    &= -\nabla_\bftheta \mbb{E}_{p(\bfe)}[\log q_\bftheta(g_\bftheta(\bfe))]\\
    &= -\mbb{E}_{p(\bfe)}[\nabla_\bfx \log q_\bftheta(\bfx)|_{\bfx = g_\bftheta(\bfe)} \nabla_\bftheta g_\bftheta(\bfe)],
\end{align*}
where $\nabla_\bftheta g_\bftheta(\bfe)$ is usually easy to compute. The score $\nabla_\bfx \log q_\bftheta(\bfx)$ is intractable but can be approximated by score estimation. 

Score matching is an attractive solution for score estimation since \eqref{eqn:fisher} naturally serves as an objective to measure the difference between the a trainable score function and the score of a data generating process. We will discuss this in more detail in \secref{sec:sse} and mention some other approaches of score estimation in \secref{sec:score_est}.

\section{DENSITY AND SCORE ESTIMATION WITH SLICED SCORE MATCHING}

\subsection{SLICED SCORE MATCHING}\label{sec:ssm}
We observe that one dimensional problems are usually much easier to solve than high dimensional ones. Inspired by the idea of Sliced Wasserstein distance~\citep{10.1007/978-3-642-24785-9_37}, we consider projecting $\bfs_d(\bfx)$ and $\bfs_m(\bfx; \bftheta)$ onto some random direction $\bfv$ and propose to compare their average difference along that random direction. More specifically, we consider the following objective as a replacement of the Fisher divergence $L(\bftheta)$ in Eq. \eqref{eqn:fisher}: 
\begin{align}
    L(\bftheta; p_\bfv) \triangleq \frac{1}{2}  \mbb{E}_{p_\bfv} \mbb{E}_{p_d}\left[ (\bfv^\intercal \bfs_m (\bfx; \bftheta) - \bfv^\intercal \bfs_d(\bfx))^2 \right].\hspace{-.5em} \label{eqn:s_fisher}
\end{align}
Here $\bfv \sim p_\bfv$ and $\bfx \sim p_d$ are independent, and we require $\mbb{E}_{p_\bfv}[\bfv \bfv^\intercal] \succ 0$ and $\mbb{E}_{p_\bfv}[\|\bfv\|_2^2] < \infty$. Many examples of $p_\bfv$ satisfy these requirements. For instance, $p_\bfv$ can be a multivariate standard normal 
($\mcal{N}(0, I_{D})$), a multivariate Rademacher distribution (the uniform distribution over $\{\pm 1\}^{D}$), or a uniform distribution on the hypersphere $\mbb{S}^{D-1}$ (recall that $D$ refers to the dimension of $\bfx$).

To eliminate the dependence of $L(\bftheta; p_\bfv)$ on $\bfs_d(\bfx)$, we use integration by parts as in score matching (\cf, the equivalence between Eq.~\eqref{eqn:fisher} and \eqref{eqn:sm}). Defining
\begin{multline}
    J(\bftheta; p_\bfv) \triangleq  
    \mbb{E}_{p_\bfv} \mbb{E}_{p_d}\big[\bfv^\intercal \nabla_\bfx \bfs_m(\bfx; \bftheta) \bfv \\ + 
    \frac{1}{2} \left(\bfv^\intercal \bfs_m(\bfx; \bftheta) \right)^2 \big],\label{eqn:ssm}
\end{multline}
the equivalence is summarized in the following theorem.
\begin{theorem}\label{app:thm:1}
Under some regularity conditions (\assref{ass:score}-\ref{ass:boundary} in \appref{app:basic}), we have
\begin{align}
    L(\bftheta; p_\bfv) = J(\bftheta; p_\bfv) + \up{C}, \label{eqn:thm1}
\end{align}
where $\up{C}$ is a constant \wrt $\bftheta$.
\end{theorem}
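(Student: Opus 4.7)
The plan is to reduce the equality to the original score matching identity (Theorem~1 of \citet{hyvarinen2005estimation}) applied along each fixed direction $\bfv$, and then take an expectation over $\bfv \sim p_\bfv$. First, I would expand the square inside $L(\bftheta; p_\bfv)$:
\begin{align*}
L(\bftheta; p_\bfv) = \tfrac{1}{2}\mbb{E}_{p_\bfv}\mbb{E}_{p_d}\!\left[(\bfv^\intercal \bfs_m)^2\right] - \mbb{E}_{p_\bfv}\mbb{E}_{p_d}\!\left[(\bfv^\intercal \bfs_m)(\bfv^\intercal \bfs_d)\right] + \tfrac{1}{2}\mbb{E}_{p_\bfv}\mbb{E}_{p_d}\!\left[(\bfv^\intercal \bfs_d)^2\right].
\end{align*}
The first term already matches the $\tfrac{1}{2}(\bfv^\intercal \bfs_m)^2$ term inside $J(\bftheta; p_\bfv)$, and the third term depends only on $p_d$ and $p_\bfv$, so it is the constant $\up{C}$. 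All the work is therefore in rewriting the cross term as $\mbb{E}_{p_\bfv}\mbb{E}_{p_d}[\bfv^\intercal \nabla_\bfx \bfs_m(\bfx;\bftheta)\bfv]$.

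For the cross term, I would use the identity $p_d(\bfx)\bfs_d(\bfx) = \nabla_\bfx p_d(\bfx)$, which converts the inner expectation into an integral that no longer involves $\bfs_d$ explicitly. For a fixed $\bfv$,
\begin{align*}
\mbb{E}_{p_d}\!\left[(\bfv^\intercal \bfs_m)(\bfv^\intercal \bfs_d)\right] = \int (\bfv^\intercal \bfs_m(\bfx;\bftheta))\,\bigl(\bfv^\intercal \nabla_\bfx p_d(\bfx)\bigr)\,\ud \bfx.
\end{align*}
I would then apply integration by parts component-wise: writing the integrand as $\sum_j v_j \partial_{x_j}\!\bigl(\bfv^\intercal \bfs_m(\bfx;\bftheta)\bigr)\,p_d(\bfx)$ after transferring the derivatives, and using the regularity conditions (\assref{ass:score}--\ref{ass:boundary}) to discard the boundary terms at infinity. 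Because $\bfs_m = \nabla_\bfx \log \tilde p_m$, the quantity $\sum_{i,j} v_i v_j \partial_{x_j} s_{m,i}$ is exactly $\bfv^\intercal \nabla_\bfx \bfs_m(\bfx;\bftheta)\bfv$, giving
\begin{align*}
\mbb{E}_{p_d}\!\left[(\bfv^\intercal \bfs_m)(\bfv^\intercal \bfs_d)\right] = -\mbb{E}_{p_d}\!\left[\bfv^\intercal \nabla_\bfx \bfs_m(\bfx;\bftheta)\bfv\right].
\end{align*}

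Finally, I would take the expectation over $\bfv \sim p_\bfv$ on both sides and swap the order of integration via Fubini, justified by the moment condition $\mbb{E}_{p_\bfv}[\|\bfv\|_2^2] < \infty$ together with the integrability assumptions on $\bfs_m$, $\bfs_d$, and the Hessian (these are precisely what \assref{ass:score}--\ref{ass:boundary} supply). Substituting back then yields $L(\bftheta;p_\bfv) = J(\bftheta; p_\bfv) + \up{C}$.

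The main obstacle, and the reason the result is stated with regularity conditions rather than as a bare computation, is verifying that the boundary terms in the multivariate integration by parts vanish and that Fubini applies. Once those technical hypotheses are in place the algebraic manipulation is straightforward; the bulk of a careful write-up is checking that the conditions in \appref{app:basic} really do imply both the absolute integrability needed for Fubini and the decay of $p_d(\bfx)\,\bfv^\intercal\bfs_m(\bfx;\bftheta)$ at infinity needed to kill the surface integrals.
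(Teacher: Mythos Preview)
Your proposal is correct and follows essentially the same route as the paper: expand the square, absorb the $\tfrac{1}{2}\mbb{E}[(\bfv^\intercal\bfs_d)^2]$ term into the constant, and handle the cross term by writing $p_d\bfs_d=\nabla_\bfx p_d$ and integrating by parts coordinate-wise so that the boundary condition kills the surface terms. The only cosmetic difference is that the paper carries the outer expectation $\mbb{E}_{p_\bfv}$ through the integration-by-parts step and bounds the boundary terms via Cauchy--Schwarz on $\mbb{E}_{p_\bfv}|v_iv_j|$, whereas you fix $\bfv$ first and take $\mbb{E}_{p_\bfv}$ afterward with a Fubini justification; both orderings are fine under the stated assumptions.
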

Other than our requirements on $p_\bfv$, the assumptions are exactly the same as in Theorem 1 of \citet{hyvarinen2005estimation}. We advise the interested readers to read \appref{app:basic} for technical statements of the assumptions and a rigorous proof of the theorem.

In practice, given a dataset $\bfx_1^N$, we draw $M$ projection vectors independently for each point $\bfx_i$ from $p_\bfv$. The collection of all such vectors $\{\bfv_{ij}\}_{1\leq i \leq N, 1\leq j \leq M}$ are abbreviated as $\bfv_{11}^{NM}$.%
We then use the following unbiased estimator of $J(\bftheta; p_\bfv)$
\begin{multline}
    \hat{J}(\bftheta; \bfx_1^N, \bfv_{11}^{NM}) \triangleq 
    \frac{1}{N} \frac{1}{M}  \sum_{i=1}^N \sum_{j=1}^M \bfv_{ij}^\intercal \nabla_\bfx \bfs_m(\bfx_i; \bftheta) \bfv_{ij} \\
    + \frac{1}{2} \left(\bfv_{ij}^\intercal \bfs_m(\bfx_i; \bftheta) \right)^2.
\end{multline}
Note that when $p_\bfv$ is a multivariate standard normal or multivariate Rademacher distribution, we have $\mbb{E}_{p_\bfv}[(\bfv^\intercal \bfs_m(\bfx; \bftheta))^2] = \norm{\bfs_m(\bfx; \bftheta)}_2^2$, in which case the second term of $J(\bftheta; p_\bfv)$ can be integrated analytically, and may lead to an estimator with reduced variance:
\begin{multline}
     \hat{J}_{\text{vr}}(\bftheta; \bfx_1^N, \bfv_{11}^{NM}) \triangleq 
    \frac{1}{N} \frac{1}{M}  \sum_{i=1}^N \sum_{j=1}^M \bfv_{ij}^\intercal \nabla_\bfx \bfs_m(\bfx_i; \bftheta) \bfv_{ij} \\
    + \frac{1}{2} \norm{\bfs_m(\bfx_i; \bftheta)}_2^2.
\end{multline}
Empirically, we do find that $\hat{J}_{\text{vr}}$ has better performance than $\hat{J}$. We refer to this version as sliced score matching with variance reduction (SSM-VR). In fact, we can leverage $\mbb{E}_{p_\bfv}[(\bfv^\intercal \bfs_m(\bfx; \bftheta))^2] = \norm{\bfs_m(\bfx; \bftheta)}_2^2$ to create a control variate for
guaranteed reduction of variance (Appendix~\ref{app:sec:vr}).  $\hat{J}_{\text{vr}}$ is also closely related to Hutchinson's trace estimator~\citep{hutchinson1990stochastic}, which we will analyze later in \secref{sec:connection}.

For sliced score matching, the second derivative term $\bfv^\intercal \nabla_\bfx \bfs_m(\bfx; \bftheta) \bfv$ is much less computationally expensive than $\operatorname{tr}(\nabla_\bfx \bfs_m(\bfx; \bftheta))$. Using auto-differentiation systems that support higher order gradients, we can compute it using two gradient operations for a single $\bfv$, as shown in \algoref{alg:ssm}. Similarly, when there are $M$ $\bfv$'s, the total number of gradient operations is $M + 1$. In contrast, assuming the dimension of $\bfx$ is $D$ and $D \gg M$, we typically need $D + 1$ gradient operations to compute $\operatorname{tr}(\nabla_\bfx \bfs_m(\bfx; \bftheta))$ because each diagonal entry of $\nabla_\bfx \bfs_m(\bfx; \bftheta)$ needs to be computed separately (see \algoref{alg:sm} in the appendix).

\algnewcommand{\IfThenElse}[3]{%
\algorithmicif\ #1\ \algorithmicthen\ #2\ \algorithmicelse\ #3}

\begin{algorithm}
	\caption{Sliced Score Matching}
	\label{alg:ssm}
	\begin{algorithmic}[1]
	    \Require{$\tilde{p}_m(\cdot; \bftheta), \bfx, \bfv$}
        \State{$\bfs_m(\bfx;\bftheta) \gets \texttt{grad}(\log \tilde{p}_m(\bfx; \bftheta), \bfx)$}
        \State{$\bfv^\intercal \nabla_\bfx \bfs_m(\bfx;\bftheta) \gets \texttt{grad}(\bfv^\intercal \bfs_m(\bfx;\bftheta), \bfx)$}
        \State{$J \gets \frac12(\bfv^\intercal \bfs_m(\bfx;\bftheta))^2$ (or $J \gets \frac12\norm{\bfs_m(\bfx;\bftheta)}_2^2$)}
        \State{$J \gets J + \bfv^\intercal \nabla_\bfx \bfs_m(\bfx;\bftheta) \bfv$}
        \item[]
        \Return{$J$}
	\end{algorithmic}
\end{algorithm}
In practice, we can tune $M$ to trade off variance and computational cost. In our experiments, we find that oftentimes {$\boxed{M=1}$} is already a good choice.

\subsection{SLICED SCORE ESTIMATION}\label{sec:sse}
As mentioned in section \ref{sec:bg-score-est}, the task of score estimation is to estimate $\nabla_\bfx \log q(\bfx)$ at some test point $\bfx$, given a set of samples $\bfx_1^N \overset{\text{\iid}}{\sim} q(\bfx)$. In what follows, we show how our sliced score matching objective $\hat{J}(\bftheta; \bfx_1^N, \bfv_{11}^{NM})$ can be straightforwardly adapted for this task.

We propose to use a vector-valued deep neural network $\bfh(\bfx; \bftheta): \mbb{R}^D \to \mbb{R}^D$ as our score model. Then, substituting $\bfh$ into $\hat{J}(\bftheta; \bfx_1^N, \bfv_{11}^{NM})$ for $\bfs_m$, we get
\begin{align*}
\resizebox{\columnwidth}{!}{$\displaystyle
    \frac{1}{N} \frac{1}{M}  \sum_{i=1}^N \sum_{j=1}^M \bfv_{ij}^\intercal \nabla_\bfx \bfh(\bfx_i; \bftheta) \bfv_{ij} 
    + \frac{1}{2} \left(\bfv_{ij}^\intercal \bfh(\bfx_i; \bftheta) \right)^2$},
\end{align*}
and we can optimize the objective to get $\hat{\bftheta}$. Afterwards, $\bfh(\bfx; \hat{\bftheta})$ can be used as an approximation to %
$\nabla_{\bfx}\log q(\bfx)$.\footnote{Note that $\bfh(\bfx; \bftheta)$ may not correspond to the gradient of any scalar function. For $\bfh(\bfx; \bftheta)$ to represent a gradient, one necessary condition is $\nabla_\bfx \times \bfh(\bfx; \bftheta) = \mbf{0}$ for all $\bfx$, which may not be satisfied by general networks. However, this is oblivious to the fact that $\bfh(\bfx; \hat{\bftheta})$ will be close to $\nabla_\bfx \log q_\bftheta(\bfx)$ in $\ell_2$ norm. As will be shown later, our argument based on integration by parts does not require $h(\bfx; \bftheta)$ to be a gradient.%
}

Using a similar argument of integration by parts (\cf, Eq.~\eqref{eqn:s_fisher}, \eqref{eqn:ssm} and \eqref{eqn:thm1}), we have
\begin{multline*}
    \mbb{E}_{p_\bfv}\mbb{E}_{p_d} \bigg[ \bfv^\intercal \nabla_\bfx \bfh(\bfx; \bftheta) \bfv + \frac{1}{2} (\bfv^\intercal \bfh(\bfx; \bftheta))^2  \bigg]=\\
    \frac{1}{2}\mbb{E}_{p_\bfv} \mbb{E}_{p_d}[(\bfv^\intercal \bfh(\bfx; \bftheta) - \bfv^\intercal \nabla_{\bfx}\log q(\bfx))^2] + \up{C},
\end{multline*}
which implies that minimizing $\hat{J}(\bftheta; \bfx_1^N,\bfv_{11}^{NM})$ with $\bfs_m(\bfx; \bftheta)$ replaced by $\bfh(\bfx; \bftheta)$ is approximately minimizing the average projected difference between $\bfh(\bfx; \bftheta)$ and $\nabla_{\bfx}\log q(\bfx)$. Hence, $\bfh(\bfx; \hat{\bftheta})$ should be close to $\nabla_{\bfx}\log q(\bfx)$ and can serve as a score estimator.

\section{THEORETICAL ANALYSIS}\label{sec:analysis}
In this section, we present several theoretical results to justify sliced score matching as a principled objective. We also discuss the connection of sliced score matching to other methods. For readability, we will defer rigorous statements of assumptions and theorems to the Appendix.
\subsection{CONSISTENCY}\label{sec:main_consistency}
One important question to ask for any statistical estimation objective is whether the estimated parameter is consistent under reasonable assumptions. Our results confirm that for any $M$, the objective $\hat{J}(\bftheta; \bfx_1^N, \bfv_{11}^{NM})$ 
is consistent under suitable assumptions 
as $N \to \infty$.

We need several standard assumptions to prove the results rigorously. Let $p_m$ be the normalized density induced by our unnormalized model $\tilde{p}_m$, which is assumed to be normalizable. First, we assume $\Theta$ is compact (\assref{ass:compact}), and our model family $\{ p_m(\bfx; \bftheta): \bftheta \in \Theta \}$ is well-specified and identifiable (\assref{ass:identifiability}). These are standard assumptions used for proving the consistency of MLE~\citep{vaart_1998}. We also adopt the assumption in \citet{hyvarinen2005estimation} that all densities are strictly positive (\assref{ass:positiveness}). Finally, we assume that $p_m(\bfx;\bftheta)$ has some Lipschitz properties (\assref{ass:lipschitz}), and $p_\bfv$ has bounded higher-order moments (\assref{ass:v}, true for all $p_\bfv$ considered in the experiments). Then, we can prove the consistency of $\hat{\bftheta}_{N,M} \triangleq \argmin_{\bftheta\in \Theta} \hat{J}(\bftheta; \bfx_1^N, \bfv_{11}^{NM})$:
\begin{theorem}[Consistency]\label{app:thm:consistency}
Assume the conditions of \thmref{app:thm:1} are satisfied. Assume further that the assumptions discussed above hold. Let $\bftheta^*$ be the true parameter of the data distribution. Then for every $M \in \mbb{N}^+$,
\begin{align*}
    \hat{\bftheta}_{N, M} \overset{p}{\to} \bftheta^*, \quad N \to \infty.
\end{align*}
\end{theorem}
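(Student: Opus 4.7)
The plan is to treat $\hat{\bftheta}_{N,M}$ as a classical M-estimator and carry out the standard two-step consistency argument: I would establish (i) that the population criterion $J(\bftheta; p_\bfv)$ is uniquely minimized at $\bftheta^*$, and (ii) that $\hat{J}(\bftheta; \bfx_1^N, \bfv_{11}^{NM})$ converges to $J(\bftheta; p_\bfv)$ \emph{uniformly} over $\bftheta \in \Theta$ in probability. Combined with compactness of $\Theta$ (\assref{ass:compact}), these two ingredients deliver convergence in probability of the argmin by a standard result such as Theorem~5.7 of \citet{vaart_1998}.

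For the identification step I would invoke \thmref{app:thm:1} to write $J(\bftheta; p_\bfv) = L(\bftheta; p_\bfv) - \up{C}$ and then use Fubini to obtain
\begin{align*}
L(\bftheta; p_\bfv) = \tfrac{1}{2}\,\mbb{E}_{p_d}\big[(\bfs_m(\bfx;\bftheta) - \bfs_d(\bfx))^\intercal \mbb{E}_{p_\bfv}[\bfv\bfv^\intercal] (\bfs_m(\bfx;\bftheta) - \bfs_d(\bfx))\big].
\end{align*}
Because $\mbb{E}_{p_\bfv}[\bfv\bfv^\intercal]\succ 0$, $L(\bftheta; p_\bfv)$ vanishes iff $\bfs_m(\bfx;\bftheta) = \bfs_d(\bfx)$ $p_d$-almost everywhere; by strict positivity of the densities on the connected domain $\mbb{R}^D$ (\assref{ass:positiveness}) this forces $\log p_m(\cdot;\bftheta) - \log p_d$ to be constant, so $p_m(\cdot;\bftheta) = p_d$. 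Well-specification and identifiability (\assref{ass:identifiability}) then pin down $\bftheta = \bftheta^*$ as the unique minimizer.

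For the uniform convergence step I would first recast the double sum in i.i.d.\ form by grouping over $i$: setting $F(\bftheta; \bfx_i, \bfv_{i,1:M}) \triangleq \frac{1}{M}\sum_{j=1}^M \big[\bfv_{ij}^\intercal \nabla_\bfx \bfs_m(\bfx_i;\bftheta)\,\bfv_{ij} + \tfrac12(\bfv_{ij}^\intercal \bfs_m(\bfx_i;\bftheta))^2\big]$, so the estimator becomes $\hat{J} = \tfrac1N \sum_{i=1}^N F(\bftheta; \bfx_i, \bfv_{i,1:M})$, a sample mean of i.i.d.\ terms with expectation $J(\bftheta; p_\bfv)$. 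Pointwise convergence then follows from the weak law of large numbers. To upgrade this to uniform convergence, I would use the Lipschitz regularity of $\bfs_m$ and $\nabla_\bfx \bfs_m$ in $\bftheta$ (\assref{ass:lipschitz}) together with the bounded higher-order moments of $\bfv$ (\assref{ass:v}) to construct an envelope $L(\bfx, \bfv_{1:M})$ with finite $p_d \otimes p_\bfv^{\otimes M}$ expectation satisfying $|F(\bftheta; \cdot) - F(\bftheta'; \cdot)| \le L \cdot \|\bftheta - \bftheta'\|$. A finite $\epsilon$-net argument on the compact $\Theta$, combined with the pointwise LLN at the net points, then yields $\sup_{\bftheta\in\Theta} |\hat{J} - J| \overset{p}{\to} 0$.

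The main obstacle is this envelope/equicontinuity step: one must carefully propagate the Lipschitz hypotheses on $\log \tilde{p}_m$ and its first two derivatives through the quadratic forms $\bfv^\intercal \nabla_\bfx \bfs_m \bfv$ and $(\bfv^\intercal \bfs_m)^2$, and verify that the moments of $\bfv$ appearing in the resulting envelope (up to fourth order, because of the squared projection) are controlled by \assref{ass:v}. Once a dominated Lipschitz-in-$\bftheta$ bound is in hand the remainder is standard M-estimation machinery, and consistency of $\hat{\bftheta}_{N,M}$ follows immediately.
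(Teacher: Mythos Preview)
Your proposal is correct and follows the same M-estimation skeleton as the paper: identify $\bftheta^*$ as the unique minimizer of $J(\cdot;p_\bfv)$, establish uniform convergence of $\hat J$ to $J$ over the compact $\Theta$, and conclude via the standard argmin argument. Your identification step is exactly the paper's \lemref{app:thm:2}.

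Where you diverge is in the uniform-convergence step. You propose the elementary route: build an integrable Lipschitz-in-$\bftheta$ envelope for $F(\bftheta;\cdot)$, cover $\Theta$ by a finite $\epsilon$-net, apply the pointwise weak LLN at the net points, and interpolate. The paper instead proves a quantitative uniform bound (\lemref{app:lemma:1}) via symmetrization with Rademacher multipliers followed by Dudley's entropy integral, obtaining $\mbb{E}\big[\sup_{\bftheta}|\hat J - J|\big] = O\big(\operatorname{diam}(\Theta)\sqrt{D/N}\big)$. Both rely on the same Lipschitz lemma (\lemref{app:lemma:lipschitz}), but the paper's chaining argument needs the \emph{second} moment of the Lipschitz constant, which is where the fourth-moment condition $\mbb{E}_{p_\bfv}[\|\bfv\bfv^\intercal\|_F^2]<\infty$ (\assref{ass:v2}) enters. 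Your $\epsilon$-net approach only needs an integrable envelope, so in fact second moments of $\bfv$ (\assref{ass:v}) already suffice---the squared projection $(\bfv^\intercal\bfs_m)^2=\bfv^\intercal\bfs_m\bfs_m^\intercal\bfv$ is handled directly through the Lipschitz assumption on $\bfs_m\bfs_m^\intercal$, so the envelope is quadratic, not quartic, in $\bfv$. In short: your route is simpler and slightly less demanding on moment hypotheses; the paper's buys an explicit nonasymptotic rate in $N$ and the parameter dimension.
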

\begin{proof}[Sketch of proof] 
We first prove that $J(\bftheta; p_\bfv) = 0 \Leftrightarrow \bftheta = \bftheta^*$ (\lemref{app:thm:2} and \thmref{app:thm:1}). Then we prove the uniform convergence of $\hat{J}(\bftheta; \bfx_1^N, \bfv_{11}^{NM})$ (\lemref{app:lemma:1}), which holds regardless of $M$. These two facts lead to consistency. For a complete proof, see Appendix~\ref{sec:consistency}.
\end{proof}
\begin{remark}
In \citet{hyvarinen2005estimation}, the authors only showed that $J(\bftheta) = 0 \Leftrightarrow \bftheta = \bftheta^*$, which leads to ``local consistency'' of score matching. This is a weaker notion of consistency compared to our settings.
\end{remark}
\subsection{ASYMPTOTIC NORMALITY}\label{sec:normality}
Asymptotic normality results can be very useful for approximate hypothesis testing and comparing different estimators. Below we show that $\hat{\bftheta}_{N,M}$ is asymptotically normal when $N \to \infty$.

In addition to the assumptions in \secref{sec:main_consistency}, we need an extra assumption to prove asymptotic normality. We require $p_m(\bfx; \bftheta)$ to have a stronger Lipschitz property (\assref{ass:lipschitz2}).

For simplicity, we denote $\nabla_\bfx h(\bfx)|_{\bfx = \bfx'}$ as $\nabla_\bfx h(\bfx')$, where $h(\cdot)$ is an arbitrary function. In the following, we will only show the asymptotic normality result for a specific $p_\bfv$. More general results are in Appendix~\ref{app:sec:normality}.
\begin{theorem}[Asymptotic normality, special case]\label{app:thm:normality}
Assume the assumptions discussed above hold. If $p_\bfv$ is the multivariate Rademacher distribution, we have
\begin{align*}
\sqrt{N}(\hat{\bftheta}_{N, M} - \bftheta^*) \overset{d}{\to} \mcal{N}(0, \Sigma),
\end{align*}
where
\begin{multline}
    \Sigma \triangleq
    [\nabla_\bftheta^2 J(\bftheta^*)]^{-1}\bigg(\sum_{1\leq i,j\leq D}V_{ij} + \frac{2}{M}\sum_{1\leq i\neq j \leq D}W_{ij}\bigg)\\
    \cdot [\nabla_\bftheta^2 J(\bftheta^*)]^{-1}.\label{eqn:variance}
\end{multline}
Here $D$ is the dimension of data; $V_{ij}$ and $W_{ij}$ are p.s.d matrices depending on $p_m(\bfx; \bftheta^*)$, and their definitions can be found in Appendix~\ref{app:sec:normality}.
\end{theorem}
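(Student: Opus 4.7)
The plan is the classical M-estimator asymptotic normality recipe. Since $\hat{\bftheta}_{N,M}$ minimizes $\hat{J}(\cdot;\bfx_1^N,\bfv_{11}^{NM})$ and is consistent by \thmref{app:thm:consistency}, it eventually satisfies the first-order condition $\nabla_\bftheta \hat{J}(\hat{\bftheta}_{N,M})=\mbf{0}$, and a Taylor expansion around $\bftheta^*$ gives
\begin{align*}
\sqrt{N}(\hat{\bftheta}_{N,M}-\bftheta^*) = -[\nabla_\bftheta^2\hat{J}(\tilde{\bftheta})]^{-1}\sqrt{N}\,\nabla_\bftheta \hat{J}(\bftheta^*)
\end{align*}
for some intermediate $\tilde{\bftheta}$. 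I would use \assref{ass:lipschitz2} to upgrade the pointwise LLN for $\nabla_\bftheta^2 \hat{J}$ to a uniform one on a neighborhood of $\bftheta^*$; together with $\tilde{\bftheta}\overset{p}{\to}\bftheta^*$ and continuous mapping, this yields $\nabla_\bftheta^2 \hat{J}(\tilde{\bftheta})\overset{p}{\to}\nabla_\bftheta^2 J(\bftheta^*)$, an invertible limit since $\bftheta^*$ is the unique minimizer of $J$. By Slutsky's theorem it then suffices to identify the limit law of $\sqrt{N}\,\nabla_\bftheta \hat{J}(\bftheta^*)$.

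Setting $f(\bfx,\bfv;\bftheta)\triangleq \bfv^\intercal \nabla_\bfx \bfs_m(\bfx;\bftheta)\bfv + \tfrac12(\bfv^\intercal \bfs_m(\bfx;\bftheta))^2$, we have $\sqrt{N}\nabla_\bftheta\hat{J}(\bftheta^*)=\tfrac{1}{\sqrt{N}}\sum_{i=1}^N Y_i$ where $Y_i\triangleq \tfrac1M\sum_{j=1}^M \nabla_\bftheta f(\bfx_i,\bfv_{ij};\bftheta^*)$ is i.i.d., has mean zero (by \thmref{app:thm:1} together with $\nabla_\bftheta J(\bftheta^*)=\mbf{0}$), and has finite second moment (by \assref{ass:lipschitz2} and \assref{ass:v}); the multivariate CLT gives $\sqrt{N}\nabla_\bftheta\hat{J}(\bftheta^*)\overset{d}{\to}\mcal{N}(\mbf{0},\Sigma_0)$ with $\Sigma_0=\Var[Y_1]$. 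Applying the law of total variance and using that the $\bfv_{ij}$ are i.i.d.\ given $\bfx_i$ decomposes $\Sigma_0 = \Var_\bfx[\mbb{E}_\bfv[\nabla_\bftheta f]] + \tfrac{1}{M}\mbb{E}_\bfx[\Var_\bfv(\nabla_\bftheta f)]$. Writing $\nabla_\bftheta f(\bfx,\bfv;\bftheta^*)=\sum_{i,j}v_iv_j A_{ij}(\bfx)$, where $A_{ij}(\bfx)\in\mbb{R}^{\dim \bftheta}$ collects the $\bftheta$-gradient of $\partial_{x_i}[\bfs_m]_j + [\bfs_m]_i\nabla_\bftheta[\bfs_m]_j$ at $\bftheta^*$, the Rademacher fourth-moment identity
\begin{align*}
\mbb{E}[v_iv_jv_kv_l]=\delta_{ij}\delta_{kl}+\delta_{ik}\delta_{jl}+\delta_{il}\delta_{jk}-2\delta_{ijkl}
\end{align*}
does the work: it gives $\mbb{E}_\bfv[\nabla_\bftheta f]=\sum_i A_{ii}$, so the between-$\bfx$ piece is $\sum_{i,j}\Cov_\bfx[A_{ii},A_{jj}]=\sum_{i,j}V_{ij}$, and the within-$\bfx$ piece collapses to $\sum_{i\ne j}(A_{ij}A_{ij}^\intercal+A_{ij}A_{ji}^\intercal)$ because the $\delta_{ij}\delta_{kl}$ contribution cancels against the outer-product subtraction in the variance and the $i=j$ diagonal terms cancel against the $-2\delta_{ijkl}$ correction. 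After dividing by $M$ and symmetrizing, this contributes $\tfrac{2}{M}\sum_{i\ne j}W_{ij}$, and the stated $\Sigma$ follows.

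The main obstacle is the Rademacher fourth-moment bookkeeping: one has to organize the quadruple sum $\sum_{ijkl}A_{ij}A_{kl}^\intercal\,\mbb{E}[v_iv_jv_kv_l]$ and verify that exactly the $i=j=k=l$ contributions cancel. This is the algebraic feature special to Rademacher (the $-2\delta_{ijkl}$ correction beyond the Gaussian/Wick pattern) that keeps $W_{ij}$ restricted to off-diagonal indices and produces the clean decomposition $\sum V_{ij}+\tfrac{2}{M}\sum_{i\ne j}W_{ij}$; a different symmetric $p_\bfv$ would give an analogous but less tidy formula. A secondary technicality is the uniform law of large numbers for $\nabla_\bftheta^2 \hat{J}$ needed to validate the Taylor-inversion step, which goes through by a standard Glivenko--Cantelli argument using the envelope provided by \assref{ass:lipschitz2} — precisely why the stronger Lipschitz hypothesis is imposed.
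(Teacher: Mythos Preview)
Your proposal is correct and follows the same M-estimator recipe as the paper (first-order condition, Taylor expansion, CLT for the gradient at $\bftheta^*$, Slutsky). Two organizational differences and one caution are worth noting.

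\textbf{Variance computation.} You decompose $\Var[Y_1]$ via the law of total variance into a between-$\bfx$ piece $\Var_\bfx[\mbb{E}_\bfv(\cdot)]$ and a within-$\bfx$ piece $\tfrac{1}{M}\mbb{E}_\bfx[\Var_\bfv(\cdot)]$, then use the Rademacher fourth-moment identity on the inner piece. The paper instead writes the full second moment as $\sum_{i,j,p,q}\big[(1-\tfrac{1}{M})\Sigma_{ij}\Sigma_{pq}+\tfrac{1}{M}\mathfrak{S}_{ijpq}\big]\mathfrak{V}_{ijpq}$ by splitting the double sum over the $M$ projections into $k\ne l$ and $k=l$ terms, and only afterwards specializes to Rademacher. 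Both give the same answer; the paper's route yields the general-$p_\bfv$ formula for free, while yours is more transparent about why the Rademacher case is tidier. One minor slip: the identification $\Cov_\bfx[A_{ii},A_{jj}]=V_{ij}$ need not hold termwise (the paper's $V_{ij}=\mbb{E}_\bfx[A_{ii}A_{jj}^\intercal]$ is an uncentered moment); only the \emph{sum} over $i,j$ agrees, because $\sum_i\mbb{E}_\bfx[A_{ii}]=\nabla_\bftheta J(\bftheta^*)=\mbf{0}$.

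\textbf{Hessian convergence.} You invoke a mean-value expansion with a single intermediate point $\tilde{\bftheta}$ and then a ULLN for $\nabla_\bftheta^2\hat{J}$. The single-$\tilde{\bftheta}$ form of the mean value theorem is not valid for vector-valued maps, but your ULLN argument repairs this (apply it componentwise with different intermediate points). The paper sidesteps the issue by writing $P_N\nabla_\bftheta f(\hat{\bftheta}_{N,M})=P_N\nabla_\bftheta f(\bftheta^*)+P_N\big(\nabla_\bftheta^2 f(\bftheta^*)+E\big)(\hat{\bftheta}_{N,M}-\bftheta^*)$ with a remainder $E$ bounded in Frobenius norm by $L(\bfx,\bfv_1^M)\norm{\hat{\bftheta}_{N,M}-\bftheta^*}_2$ (this is \lemref{app:lemma:derivative}, which is precisely where \assref{ass:lipschitz2} enters), and then only needs the pointwise LLN at $\bftheta^*$ plus consistency. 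Either route is fine; the paper's avoids the ULLN machinery.
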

\begin{proof}[Sketch of proof]
Once we get the consistency (\thmref{app:thm:consistency}), the rest closely follows the proof of asymptotic normality of MLE~\citep{vaart_1998}. A rigorous proof can be found in Appendix~\ref{app:sec:normality}.
\end{proof}
\begin{remark}
As expected, larger $M$ will lead to smaller asymptotic variance, as can be seen in Eq.~\eqref{eqn:variance}.
\end{remark}
\begin{remark}
As far as we know, there is no published proof of asymptotic normality for the standard (not sliced) score matching. However, by using the same techniques in our proofs, and under similar assumptions, we can conclude that the asymptotic variance of the score matching estimator is $[\nabla_\bftheta^2 J(\bftheta^*)]^{-1}\left(\sum_{ij}V_{ij}\right)[\nabla_\bftheta^2 J(\bftheta^*)]^{-1}$ (\corref{app:cor:sm}), which will always be smaller than sliced score matching with multivariate Rademacher projections. 
However, the gap reduces when $M$ increases.
\end{remark}

\subsection{CONNECTION TO OTHER METHODS}\label{sec:connection}
Sliced score matching is widely connected to many other methods, and can be motivated from some different perspectives. Here we discuss a few of them.
\paragraph{Connection to NCE.} Noise contrastive estimation (NCE), proposed by \citet{gutmann2010noise}, is another principle for training unnormalized statistical models. The method works by comparing $p_m(\bfx; \bftheta)$ with a noise distribution $p_n(\bfx)$.
We consider a special form of NCE which minimizes the following objective
\begin{align}
    -\mbb{E}_{p_d}[\log h(\bfx;\bftheta)] - \mbb{E}_{p_n}[\log (1 - h(\bfx; \bftheta))],\label{eqn:nce}
\end{align}
where $h(\bfx; \bftheta) \triangleq \frac{p_m(\bfx; \bftheta) }{p_m(\bfx; \bftheta) + p_m(\bfx - \bfv; \bftheta)}$, 
and we choose $p_n(\bfx) = p_d(\bfx + \bfv)$. Assuming $\|\bfv\|_2$ to be small, Eq.~\eqref{eqn:nce} can be written as the following by Taylor expansion
\begin{multline}
\frac{1}{4}\mbb{E}_{p_d}\bigg[\bfv^\intercal \nabla_\bfx \bfs_m (\bfx; \bftheta) \bfv + \frac{1}{2}(\bfs_m (\bfx; \bftheta)^\intercal \bfv)^2\bigg] \\+ 2\log 2 + o(\|\bfv\|_2^2). \label{eqn:nce2}
\end{multline}
For derivation, see \propref{prop:1} in the appendix. A similar derivation can also be found in \citet{gutmann2011bregman}. As a result of \eqref{eqn:nce2}, if we choose some $p_\bfv$ and take the expectation of \eqref{eqn:nce} \wrt $p_\bfv$, the objective will be equivalent to sliced score matching whenever $\norm{\bfv}_2 \approx 0$.

\paragraph{Connection to Hutchinson's Trick.} 
Hutchinson's trick~\citep{hutchinson1990stochastic} is a stochastic algorithm to approximate $\operatorname{tr}(A)$ for any square matrix $A$. The idea is to choose a distribution of a random vector $\bfv$ such that $\mbb{E}_{p_\bfv}[\bfv\bfv^\intercal] = I$, and then we have $\operatorname{tr}(A) = \mbb{E}_{p_\bfv}[\bfv^\intercal A \bfv]$. Hence, using Hutchinson's trick, we can replace $\operatorname{tr}(\nabla_\bfx \bfs_m(\bfx; \bftheta))$ with $\mbb{E}_{p_\bfv}[\bfv^\intercal \nabla_\bfx \bfs_m(\bfx; \bftheta) \bfv]$ in the score matching objective $J(\bftheta)$. Then the finite sample objective of score matching becomes
\begin{align*}
    \frac{1}{N} \sum_{i=1}^N  \bigg(\frac{1}{M}  \sum_{j=1}^M \bfv_{ij}^\intercal \nabla_\bfx \bfs_m(\bfx_i; \bftheta) \bfv_{ij}\bigg) 
    + \frac{1}{2}  \norm{\bfs_m(\bfx_i; \bftheta)}^2_2,
\end{align*}
which is exactly the sliced score matching objective with variance reduction $\hat{J}_{\text{vr}}(\bftheta; \bfx_1^N, \bfv_{11}^{NM})$.

\section{RELATED WORK} 
\subsection{SCALABLE SCORE MATCHING} \label{sec:sm_rel}
To the best of our knowledge, there are three existing methods that are able to scale up score matching to learning deep models on high dimensional data.

\paragraph{Denoising Score Matching.} \citet{vincent2011connection} proposes denoising score matching, a variant of score matching that completely circumvents the Hessian. Specifically, consider a noise distribution $q_\sigma(\tilde{\bfx} \mid \bfx)$, and let $q_\sigma(\tilde{\bfx}) = \int q_\sigma(\tilde{\bfx}\mid \bfx) p_d(\bfx)\ud \bfx$. Denoising score matching applies the original score matching to the noise-corrupted data distribution $q_\sigma(\tilde{\bfx})$, and the objective can be proven to be equivalent to the following up to a constant
\begin{multline*}
    \frac{1}{2}\mbb{E}_{q_\sigma(\tilde{\bfx}\mid \bfx)p_d(\bfx)}[\| \bfs_m(\tilde{\bfx};\bftheta) - \nabla_\bfx \log q_\sigma(\tilde{\bfx}\mid \bfx) \|_2^2],
\end{multline*}
which can be estimated without computing any Hessian. Although denoising score matching is much faster than score matching, it has obvious drawbacks. First, it can only recover the noise corrupted data distribution. Furthermore, choosing the parameters of the noise distribution is highly non-trivial and in practice the performance can be very sensitive to $\sigma$, and heuristics have to be used in practice. For example, \citet{saremi2018deep} propose a heuristic for choosing $\sigma$ based on Parzen windows.

\paragraph{Approximate Backpropagation.} 
\citet{kingma2010regularized} propose a backpropagation method to approximately compute the trace of the Hessian. Because the backpropagation of the full Hessian scales quadratically \wrt the layer size, the authors limit backpropagation only to the diagonal so that it has the same cost as the usual backpropagation for computing loss gradients. 
However, there are no theoretical guarantees for the approximation errors. In fact, the authors only did experiments on networks with a single hidden layer, in which case the approximation is exact. Moreover, there is no direct support for the proposed approximate backpropagation method in modern automatic differentiation frameworks.

\paragraph{Curvature Propagation.}
\citet{martens2012estimating} estimate the trace term in score matching by applying curvature propagation to compute an unbiased, complex-valued estimator of the diagonal of the Hessian. The work claims that curvature propagation will have the smallest variance among a class of estimators, which includes the Hutchinson's estimator. However, their proof 
evaluates the pseudo-variance of the complex-valued estimator instead of the variance. In practice, curvature propagation can have large variance when the number of nodes in the network is large, because it introduces noise for each node in the network. Finally, implementing curvature propagation also requires manually modifying the backpropagation code, handling complex numbers in neural networks, and will be inefficient for networks of more general structures, such as recurrent neural networks.

\subsection{KERNEL SCORE ESTIMATORS} \label{sec:score_est}
Two prior methods for score estimation
are based on a generalized form of Stein's identity~\citep{stein1981estimation,gorham2017measuring}:
\begin{align}
    \mbb{E}_{q(\bfx)}[\mbf{h}(\bfx) \nabla_\bfx \log q(\bfx)^\intercal + \nabla_\bfx \mbf{h}(\bfx)] = \mbf{0},\label{eqn:stein}
\end{align}
where $q(\bfx)$ is a continuously differentiable density and $\mbf{h}(\bfx)$ is a function satisfying some regularity conditions. 
\citet{li2018gradient} propose to 
set $\mbf{h}(\bfx)$ as the feature map of some kernel in the \emph{Stein class}~\citep{liu2016kernelized} of $q$,
and solve a finite-sample version of \eqref{eqn:stein} to obtain estimates of $\nabla_\bfx \log q(\bfx)$ at the sample points.
We refer to this method as \emph{Stein} in the experiments.
\citet{shi2018spectral} adopt a different approach but also exploit \eqref{eqn:stein}. They build their estimator by expanding $\nabla_\bfx \log q(\bfx)$ as a spectral series of eigenfunctions and solve for the coefficients using \eqref{eqn:stein}. %
Compared to \emph{Stein}, their method is argued to have theoretical guarantees and principled out-of-sample estimation at an unseen test point. We refer to their method as \emph{Spectral} in the experiments.
\section{EXPERIMENTS}
Our experiments include two parts: (1) to test the effectiveness of sliced score matching (\SSM{}) in learning deep models for density estimation, and (2) to test the ability of \SSM{} in providing score estimates for applications such as VAEs with implicit encoders and WAEs. 
Unless specified explicitly, we choose $M=1$ by default. 

\subsection{DENSITY ESTIMATION}

\begin{figure*}[h]
    \begin{center}
    \includegraphics[width=0.9\textwidth]{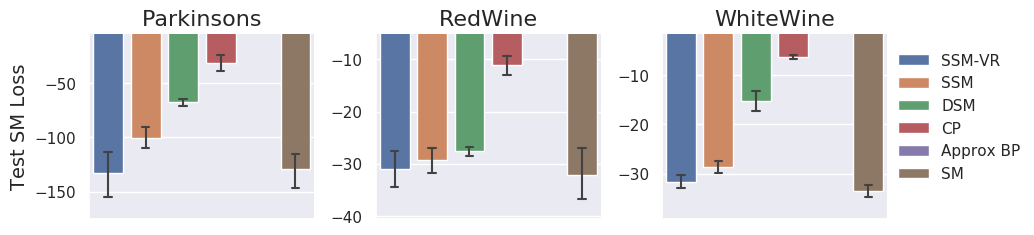}
     \vspace{-5mm}
    \end{center}
    \caption{\SM{} loss after training DKEF models on UCI datasets with different loss functions; lower is better. Results for approximate backprapogation are not shown because losses were larger than $10^{9}$.}
    \label{fig:dkef2}
\end{figure*}
\begin{figure}
    \centering
    \includegraphics[width=0.8\linewidth]{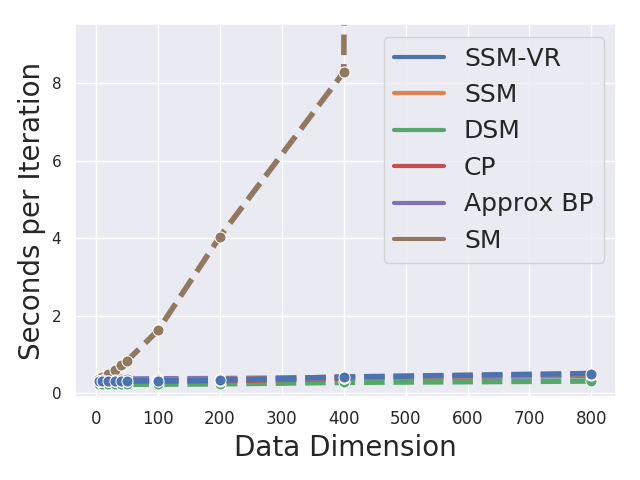}
    \vspace{-5mm}
    \caption{\SM{} performance degrades linearly with the data dimension, while efficient approaches have relatively similar performance.}
    \label{fig:scale}
\end{figure}

We evaluate \SSM{} and its variance-reduced version (\SSMVR{}) for density estimation and compare against score matching (\SM{}) and its three scalable baselines: denoising score matching (\DSM{}), approximate backpropagation (\aBP{}), and curvature propagation (\CP{}). All \SSM{} methods in this section use the multivariate Rademacher distribution as $p_\bfv$. Our results demonstrate that: (1) \SSM{} is comparable in performance to \SM{}, (2) \SSM{} outperforms other computationally efficient approximations to \SM{}, and (3) unlike \SM{}, \SSM{} scales effectively to high dimensional data.

\subsubsection{Deep Kernel Exponential Families}
\paragraph{Model.}
Deep kernel exponential families (DKEF) are unnormalized density models trained using \SM{} \citep{wenliang2018}. DKEFs parameterize the unnormalized log density as $\log\ptilde_f(\bfx) = f(\bfx) + \log q_0(\bfx)$, %
where $f$ is a mixture of Gaussian kernels evaluated at different inducing points: $f(\bfx) = \sum_{l=1}^{L}\alpha_l k(\bfx, \rvz_l)$. The kernel is defined on the feature space of a neural network, and the network parameters are trained along with the inducing points $\rvz_l$. Further details of the model, which is identical to that in \citet{wenliang2018}, are presented in Appendix \ref{app:dkef}.  

\paragraph{Setup.}  Following the settings of \citet{wenliang2018}, we trained models on three UCI datasets: Parkinsons, RedWine, and WhiteWine \citep{dua2019uci}, and used their original code %
for \SM{}. To compute the trace term exactly, %
\citet{wenliang2018}'s manual implementation of backpropagation takes over one thousand lines for a model that is four layers deep, while the implementation of \SSM{} only takes several lines. For \DSM{}, the value of $\sigma$ %
is chosen by grid search using the \SM{} loss on a validation dataset. %
All models are trained with 15 different random seeds and training is stopped when validation loss does not improve for 200 steps. 

\paragraph{Results.}
Results in \figref{fig:dkef2} demonstrate that \SSMVR{} performs comparably to \SM{}, when evaluated using the \SM{} loss on a held-out test set. 
We observe that variance reduction substantially improves the performance of \SSM{}. In addition, \SSM{} outperforms other computationally efficient approaches. \DSM{} can perform comparably to \SSM{} on RedWine. However, it is challenging to select $\sigma$ for \DSM{}. Models trained using $\sigma$ from the heuristic in \citet{saremi2018deep} are far from optimal (on both \SM{} losses and likelihoods), and extensive grid search is needed to find the best $\sigma$. \CP{} performs substantially worse, which is likely because it injects noise for each node in the computation graph, and the amount of noise introduced is too large for a neural-network-based kernel evaluated at 200 inducing points, which supports our hypothesis that \CP{} does not work effectively for deep models. Results for \aBP{} are omitted because the losses exceeded $10^{9}$ on all datasets. This is because \aBP{} provides a biased estimate of the Hessian without any error guarantees. All the results are similar when evaluating according to log-likelihood metric (Appendix \ref{app:dkef}). 

\paragraph{Scalability to High Dimensional Data.}
We evaluate the computational efficiency of different losses on data of increasing dimensionality. We fit DKEF models to a multivariate standard normal in high dimensional spaces.  
The average running time per minibatch of 100 examples is reported in \figref{fig:scale}. \SM{} performance degrades linearly with the dimension of the input data due to the computation of the Hessian, and %
creates out of memory errors for a 12GB GPU after the dimension increases beyond 400. \SSM{} performs similarly to \DSM{}, \aBP{} and \CP{}, and they are all scalable to large dimensions.%

\begin{table}
\centering
\begin{tabular}{lrr}
\toprule
{} &  Test \SM{} Loss &  Test LL \\
\midrule
\MLE{}               &          -579 &     \textbf{-791} \\
\SSMVR{}        &         \textbf{-8054} &    -3355 \\
\SSM{}            &         -2428 &    -2039 \\
\DSM{} ($\sigma=0.10$) &         -3035 &    -4363 \\
\DSM{} ($\sigma=1.74$) &           -97 &    -8082 \\
\CP{}                &         -1694 &    -1517 \\
\ABP{}         &           -48 &    -2288 \\
\bottomrule
\end{tabular}
\caption{Score matching losses and log-likelihoods for NICE models on MNIST. $\sigma=0.1$ is by grid search and $\sigma=1.74$ is from the heuristic of \citet{saremi2018deep}. 
}
\label{tab:nice}
\end{table}

\subsubsection{Deep Flow Models}
\paragraph{Setup.} As a sanity check, we also evaluate %
\SSM{} by training a NICE flow model~\citep{dinh14nice}, whose likelihood is tractable and can be compared to results obtained by \MLE{}. The model we use has
20 hidden layers, and 1000 units per layer. Models are trained to fit MNIST handwritten digits, which are 784 dimensional images. Data are dequantized by adding uniform noise in the range $[-\frac{1}{512}, \frac{1}{512}]$, and transformed using a logit transformation, $\log(x) - \log(1 - x)$. We provide additional details in Appendix~\ref{app:nice}.

Training with \SM{} is extremely computationally expensive in this case. Our \SM{} implementation based on auto-differentiation takes around 7 hours to finish one epoch of training, and 12 GB GPU memory cannot hold a batch size larger than 24, so we do not include these results. Since NICE has tractable likelihoods, we also evaluate \MLE{} as a surrogate objective for minimizing the \SM{} loss. Notably, likelihoods and \SM{} losses might be uncorrelated when the model is mis-specified, which is likely to be the case for a complex dataset like MNIST.

\paragraph{\color{black}Results.} 
\SM{} losses and log-likelihoods on the test dataset are reported in \tabref{tab:nice}, where models are evaluated using the best checkpoint in terms of the \SM{} loss on a validation dataset over 100 epochs of training. \SSMVR{} greatly outperforms all the other methods on the \SM{} loss. \DSM{} performs worse than \SSMVR{}, and $\sigma$ is still hard to tune. Specifically, following the heuristic in \cite{saremi2018deep} leads to $\sigma=1.74$, which performed the worst (on both log-likelihood and \SM{} loss) of the eight choices of $\sigma$ in our grid search. \ABP{} has more success on NICE than for training DKEF models. This might be because the Hessians of hidden layers of NICE are closer to a diagonal matrix, which results in a smaller approximation error for \aBP{}. As in the DKEF experiments, \CP{} performs worse. This is likely due to injecting noise to all hidden units, which will lead to large variance for a network as big as NICE. Unlike the DKEF experiments, we find that good log-likelihoods are less correlated with good \SM{} loss, probably due to model mis-specification.

\subsection{SCORE ESTIMATION}
We consider two typical tasks that require accurate score estimations: (1) training VAEs with an implicit encoder and (2) training Wasserstein Auto-Encoders. We show in both tasks \SSM{} outperforms previous score estimators. Samples generated by various algorithms can be found in Appendix~\ref{app:sec:samples}.
\begin{table}
    \centering
    \begin{adjustbox}{max width=0.8\linewidth}
    \begin{tabular}{c|cc||cc}
    	\Xhline{3\arrayrulewidth} \bigstrut
    	& \multicolumn{2}{c||}{VAE} & \multicolumn{2}{c}{WAE}\\
    	\Xhline{1\arrayrulewidth}\bigstrut
    	Latent Dim &  8 & 32 & 8 & 32\\
    	\Xhline{1\arrayrulewidth}\bigstrut
        \ELBO{} & 96.87 & \textbf{89.06} & N/A  & N/A \\
        \SSM{} & \textbf{95.50} & 89.25 (\textbf{88.29}$^\dagger$) & \textbf{98.24} & \textbf{90.37}\\
        \Stein{} & 96.71 & 91.84 & 99.05 & 91.70 \\
        \Spectral{} & 96.60 & 94.67 & 98.81 & 92.55\\
        \Xhline{3\arrayrulewidth}
    \end{tabular}
    \end{adjustbox}
    \caption{Negative log-likelihoods on MNIST, estimated by AIS. $^\dagger$The result of \SSM{} with $M=100$, in which case \SSM{} matches the computational cost of kernel methods, which used 100 samples for each data point.}
    \label{tab:mnist}
\end{table}

\begin{table}
	\centering
	\begin{adjustbox}{max width=\linewidth}
		\begin{tabular}{c|c|cccccc}
			\Xhline{3\arrayrulewidth} \bigstrut
             & \diagbox{Method}{Iteration} & 10k & 40k & 70k & 100k\\
			\Xhline{1\arrayrulewidth}\bigstrut
			\multirow{4}{*}[-0.9em]{VAE} & \ELBO{} & \textbf{96.20} & 73.70 & 69.42 & 66.32 \\\bigstrut
		    & \SSM{} & 108.52 & \textbf{70.28} & \textbf{66.52} & \textbf{62.50}\\ \bigstrut
		    & \Stein{} & 126.60 & 118.87 & 120.51 & 126.76\\\bigstrut
		    & \Spectral{} & 131.90 & 125.04 & 128.36 & 133.93\\\Xhline{1\arrayrulewidth}\bigstrut
		    \multirow{3}{*}[-0.5em]{WAE} & \SSM{} & 84.11 & \textbf{61.09} & \textbf{56.23} & \textbf{54.33}\\\bigstrut
		    & \Stein{} & 82.93 & 63.46 & 58.53 & 57.61\\\bigstrut
		    & \Spectral{} & \textbf{82.30} & 62.47 & 58.03 & 55.96\\
			\Xhline{3\arrayrulewidth}
		\end{tabular}
	\end{adjustbox}
	\caption{FID scores of different methods versus number of training iterations on CelebA dataset.}\label{tab:celeba}
\end{table}

\subsubsection{VAE with Implicit Encoders}
\paragraph{Background.} Consider a latent variable model $p(\bfx, \bfz)$, where $\bfx$ and $\bfz$ denote observed and latent variables respectively. A variational auto-encoder (VAE) is composed of two parts: 1) an encoder $p_\bftheta(\bfx \mid \bfz)$ modeling the conditional distribution of $\bfx$ given $\bfz$; and a decoder $q_\bfphi(\bfz \mid \bfx)$ that approximates the posterior distribution of the latent variable. A VAE is typically trained by maximizing the following evidence lower bound (\ELBO{})
\begin{align*}
    \mbb{E}_{p_d}[\mbb{E}_{q_\bfphi(\bfz\mid \bfx)} \log p_\bftheta(\bfx \mid \bfz)p(\bfz) - \mbb{E}_{q_\phi(\bfz \mid \bfx)} \log q_\bfphi(\bfz \mid \bfx)],
\end{align*}
where $p(\bfz)$ denotes a pre-specified prior distribution. The expressive power of $q_\bfphi(\bfz \mid \bfx)$ is critical to the success of variational learning. Typically, $q_\bfphi(\bfz \mid \bfx)$ is chosen to be a simple distribution with tractable densities so that $H(q_\bfphi) \triangleq - \mbb{E}_{q_\phi(\bfz \mid \bfx)} \log q_\bfphi(\bfz \mid \bfx)$ is tractable. We call this traditional approach ``\ELBO{}'' in the experiments. With score estimation techniques, we can directly compute $\nabla_\bfphi H(q_\bfphi)$ for implicit distributions, which enables more flexible options for $q_\bfphi$. We consider 3 score estimation techniques: \SSM{}, \Stein{}~\citep{li2018gradient} and \Spectral{}~\citep{shi2018spectral}.

For a single data point $\bfx$, kernel score estimators need multiple samples from $q_\bfphi(\bfz \mid \bfx)$ to estimate its score. On MNIST, we use 100 samples, as done in \citet{shi2018spectral}. On CelebA, however, we can only take 20 samples due to GPU memory limitations. In contrast, \SSM{} learns a score network $h(\bfz \mid \bfx)$ along with $q_\bfphi(\bfz \mid \bfx)$, which amortizes the cost of score estimation. Unless noted explicitly, we use one projection per data point ($M=1$) for \SSM, which is scalable to deep networks.

\paragraph{Setup.} We consider VAE training on MNIST and CelebA~\citep{liu2015faceattributes}. All images in CelebA are first cropped to a patch of $140 \times 140$ and then resized to $64 \times 64$. We report negative likelihoods on MNIST, as estimated by AIS~\citep{neal2001annealed} with 1000 intermediate distributions. We evaluate sample quality on CelebA with FID scores~\citep{heusel2017gans}. For fast AIS evaluation on MNIST, we use shallow fully-connected networks with 3 hidden layers. For CelebA experiments we use deep convolutional networks. The architectures of implicit encoders and score networks are straightforward modifications to the encoders of \ELBO{}. More details are provided in \appref{app:sec:vi}.

\paragraph{Results.} The negative likelihoods of different methods on MNIST are reported in the left part of \tabref{tab:mnist}. We note that \SSM{} outperforms \Stein{} and \Spectral{} in all cases. When the latent dimension is 8, \SSM{} outperforms \ELBO{}, which indicates that the expressive power of implicit $q_\bfphi(\bfz \mid \bfx)$ pays off. When the latent dimension is 32, the gaps between \SSM{} and kernel methods are even larger, and the performance of \SSM{} is still comparable to \ELBO{}. Moreover, when $M=100$ (matching the computation of kernel methods), \SSM{} outperforms \ELBO{}.

For CelebA, we provide FID scores of samples in the top part of \tabref{tab:celeba}. We observe that after 40k training iterations, 
\SSM{} outperforms all other baselines. Kernel methods perform poorly in this case because only 20 samples per data point can be used due to limited amount of GPU memory. 
Early during training, \SSM{} does not perform as well. Since the score network is trained along with the encoder and decoder, a moderate number of training steps is needed to give an accurate score estimation (and better learning of the VAE).

\subsubsection{WAEs}
\paragraph{Background.} WAE is another method to learn latent variable models, which generally produces better samples than VAEs. Similar to a VAE, it contains an encoder $q_\bfphi(\bfz \mid \bfx)$ and a decoder $p_\bftheta(\bfx \mid \bfz)$ and both can be implicit distributions. Let $p(\bfz)$ be a pre-defined prior distribution, and $q_\bfphi(\bfz) \triangleq \int q_\bfphi(\bfz \mid \bfx) p_d(\bfx) \ud \bfx$ denote the aggregated posterior distribution. Using a metric function $c(\cdot, \cdot)$ and KL divergence between $q_\bfphi(\bfz)$ and $p(\bfz)$, WAE minimizes the following objective
\begin{align*}
    \mbb{E}_{p_d}[\mbb{E}_{q_\bfphi(\bfz \mid \bfx)}[c(\bfx, p_\bftheta(\bfx \mid \bfz)) - \lambda \log p(\bfz)]]
    - \lambda H(q_\bfphi(\bfz)).
\end{align*}
Compared to $\nabla_\bfphi H(q_\bfphi(\bfz \mid \bfx))$, it is easier to estimate $\nabla_\bfphi H(q_\bfphi(\bfz))$ for kernel methods, because the samples of $q_\bfphi(\bfz)$ can be collected by first sampling $\bfx_1, \bfx_2, \cdots, \bfx_N \stackrel{\text{\iid}}{\sim} p_d(\bfx)$ and then sample one $\bfz$ for each $\bfx_i$ from $q_\phi(\bfz \mid \bfx_i)$. In constrast, multiple $\bfz$'s need to be sampled for each $\bfx_i$ when estimating $\nabla_\bfphi H(q_\bfphi(\bfz \mid \bfx))$ with kernel approaches. For \SSM, we use a score network $h(\bfz)$ and train it alongside $q_\bfphi(\bfz \mid \bfx)$.

\paragraph{Setup.} The setup for WAE experiments is largely the same as VAE. The architectures are very similar to those used in the VAE experiments, and the only difference is that we made decoders implicit, as suggested in~\citet{tolstikhin2018wasserstein}. More details can be found in Appendix~\ref{app:sec:wae}.

\paragraph{Results.} The negative likelihoods on MNIST are provided in the right part of \tabref{tab:mnist}. \SSM{} outperforms both kernel methods, and achieves a larger performance gap when the latent dimension is higher. 
The likelihoods are lower than the VAE results as the WAE objective does not directly maximize likelihoods. 

We show FID scores for CelebA experiments in the bottom part of \tabref{tab:celeba}. As expected, kernel methods perform much better than before, because it is faster to sample from $q_\bfphi(\bfz)$. The FID scores are generally lower than those in VAE experiments, which supports previous results that WAEs generally obtain better samples. \SSM{} outperforms both kernel methods when the number of iterations is more than 40k. This shows the advantages of training a deep, expressive score network compared to using a fixed kernel in score estimation tasks.
\section{CONCLUSION}
We propose sliced score matching, a scalable method for learning unnormalized statistical models and estimating scores for implicit distributions. %
Compared to the original score matching and its variants, our estimator can scale to deep models on high dimensional data, while remaining easy to implement in modern automatic differentiation frameworks. Theoretically, our estimator is consistent and asymptotically normal under some regularity conditions. %
Experimental results demonstrate that our method outperforms competitors in learning deep energy-based models and provides more accurate estimates than kernel score estimators in training implicit VAEs and WAEs.

\subsection*{Acknowledgements}
This research was supported by Intel Corporation, TRI, NSF (\#1651565, \#1522054, \#1733686 ), ONR  (N00014-19-1-2145), AFOSR (FA9550-
19-1-0024).%

\bibliography{ssm}

\begin{thebibliography}{37}
\providecommand{\natexlab}[1]{#1}
\providecommand{\url}[1]{\texttt{#1}}
\expandafter\ifx\csname urlstyle\endcsname\relax
  \providecommand{\doi}[1]{doi: #1}\else
  \providecommand{\doi}{doi: \begingroup \urlstyle{rm}\Url}\fi

\bibitem[Abadi et~al.(2016)Abadi, Barham, Chen, Chen, Davis, Dean, Devin,
  Ghemawat, Irving, Isard, et~al.]{abadi2016tensorflow}
Abadi, M., Barham, P., Chen, J., Chen, Z., Davis, A., Dean, J., Devin, M.,
  Ghemawat, S., Irving, G., Isard, M., et~al.
\newblock Tensorflow: A system for large-scale machine learning.
\newblock In \emph{12th {USENIX} Symposium on Operating Systems Design and
  Implementation ({OSDI} 16)}, pp.\  265--283, 2016.

\bibitem[Adam et~al.(2017)Adam, Soumith, Gregory, Edward, Zachary, Zeming,
  Alban, Luca, and Adam]{adam2017automatic}
Adam, P., Soumith, C., Gregory, C., Edward, Y., Zachary, D., Zeming, L., Alban,
  D., Luca, A., and Adam, L.
\newblock Automatic differentiation in pytorch.
\newblock In \emph{NIPS Autodiff Workshop}, 2017.

\bibitem[Canu \& Smola(2006)Canu and Smola]{canu2006kernel}
Canu, S. and Smola, A.
\newblock Kernel methods and the exponential family.
\newblock \emph{Neurocomputing}, 69\penalty0 (7):\penalty0 714 -- 720, 2006.
\newblock ISSN 0925-2312.
\newblock New Issues in Neurocomputing: 13th European Symposium on Artificial
  Neural Networks.

\bibitem[Dinh et~al.(2015)Dinh, Krueger, and Bengio]{dinh14nice}
Dinh, L., Krueger, D., and Bengio, Y.
\newblock {NICE}: Non-linear independent components estimation.
\newblock \emph{International Conference in Learning Representations Workshop
  Track}, 2015.

\bibitem[Dua \& Graff(2017)Dua and Graff]{dua2019uci}
Dua, D. and Graff, C.
\newblock {UCI} machine learning repository, 2017.
\newblock URL \url{http://archive.ics.uci.edu/ml}.

\bibitem[Dudley(1967)]{dudley1967sizes}
Dudley, R.~M.
\newblock The sizes of compact subsets of {Hilbert} space and continuity of
  {G}aussian processes.
\newblock \emph{Journal of Functional Analysis}, 1\penalty0 (3):\penalty0
  290--330, 1967.

\bibitem[Goodfellow et~al.(2014)Goodfellow, Pouget-Abadie, Mirza, Xu,
  Warde-Farley, Ozair, Courville, and Bengio]{goodfellow2014generative}
Goodfellow, I., Pouget-Abadie, J., Mirza, M., Xu, B., Warde-Farley, D., Ozair,
  S., Courville, A., and Bengio, Y.
\newblock Generative adversarial nets.
\newblock In \emph{Advances in Neural Information Processing Systems}, pp.\
  2672--2680, 2014.

\bibitem[Gorham \& Mackey(2017)Gorham and Mackey]{gorham2017measuring}
Gorham, J. and Mackey, L.
\newblock Measuring sample quality with kernels.
\newblock In \emph{Proceedings of the 34th International Conference on Machine
  Learning}, pp.\  1292--1301, 2017.

\bibitem[Gutmann \& Hyv{\"a}rinen(2010)Gutmann and
  Hyv{\"a}rinen]{gutmann2010noise}
Gutmann, M. and Hyv{\"a}rinen, A.
\newblock Noise-contrastive estimation: A new estimation principle for
  unnormalized statistical models.
\newblock In \emph{Proceedings of the Thirteenth International Conference on
  Artificial Intelligence and Statistics}, pp.\  297--304, 2010.

\bibitem[Gutmann \& Hirayama(2011)Gutmann and Hirayama]{gutmann2011bregman}
Gutmann, M.~U. and Hirayama, J.-i.
\newblock Bregman divergence as general framework to estimate unnormalized
  statistical models.
\newblock In \emph{Proceedings of the Twenty-Seventh Conference on Uncertainty
  in Artificial Intelligence}, pp.\  283--290. AUAI Press, 2011.

\bibitem[Heusel et~al.(2017)Heusel, Ramsauer, Unterthiner, Nessler, and
  Hochreiter]{heusel2017gans}
Heusel, M., Ramsauer, H., Unterthiner, T., Nessler, B., and Hochreiter, S.
\newblock {GANs} trained by a two time-scale update rule converge to a local
  {Nash} equilibrium.
\newblock In \emph{Advances in Neural Information Processing Systems}, pp.\
  6626--6637, 2017.

\bibitem[Husz{\'a}r(2017)]{huszar2017variational}
Husz{\'a}r, F.
\newblock Variational inference using implicit distributions.
\newblock \emph{arXiv preprint arXiv:1702.08235}, 2017.

\bibitem[Hutchinson(1990)]{hutchinson1990stochastic}
Hutchinson, M.~F.
\newblock A stochastic estimator of the trace of the influence matrix for
  {Laplacian} smoothing splines.
\newblock \emph{Communications in Statistics-Simulation and Computation},
  19\penalty0 (2):\penalty0 433--450, 1990.

\bibitem[Hyv{\"a}rinen(2005)]{hyvarinen2005estimation}
Hyv{\"a}rinen, A.
\newblock Estimation of non-normalized statistical models by score matching.
\newblock \emph{Journal of Machine Learning Research}, 6\penalty0
  (Apr):\penalty0 695--709, 2005.

\bibitem[Kingma \& LeCun(2010)Kingma and LeCun]{kingma2010regularized}
Kingma, D.~P. and LeCun, Y.
\newblock Regularized estimation of image statistics by score matching.
\newblock In \emph{Advances in Neural Information Processing Systems}, pp.\
  1126--1134, 2010.

\bibitem[Kingma \& Welling(2014)Kingma and Welling]{kingma2013autoencoding}
Kingma, D.~P. and Welling, M.
\newblock Auto-encoding variational {Bayes}.
\newblock \emph{International Conference on Learning Representations}, 2014.

\bibitem[LeCun et~al.(2006)LeCun, Chopra, Hadsell, Ranzato, and
  Huang]{lecun2006tutorial}
LeCun, Y., Chopra, S., Hadsell, R., Ranzato, M.~A., and Huang, F.~J.
\newblock A tutorial on energy-based learning.
\newblock In \emph{Predicting structured data}. MIT Press, 2006.

\bibitem[Li \& Turner(2018)Li and Turner]{li2018gradient}
Li, Y. and Turner, R.~E.
\newblock Gradient estimators for implicit models.
\newblock In \emph{International Conference on Learning Representations}, 2018.

\bibitem[Liu et~al.(2016)Liu, Lee, and Jordan]{liu2016kernelized}
Liu, Q., Lee, J., and Jordan, M.
\newblock A kernelized {S}tein discrepancy for goodness-of-fit tests.
\newblock In \emph{Proceedings of The 33rd International Conference on Machine
  Learning}, pp.\  276--284, 2016.

\bibitem[Liu et~al.(2015)Liu, Luo, Wang, and Tang]{liu2015faceattributes}
Liu, Z., Luo, P., Wang, X., and Tang, X.
\newblock Deep learning face attributes in the wild.
\newblock In \emph{Proceedings of International Conference on Computer Vision
  (ICCV)}, 2015.

\bibitem[Martens et~al.(2012)Martens, Sutskever, and
  Swersky]{martens2012estimating}
Martens, J., Sutskever, I., and Swersky, K.
\newblock Estimating the {Hessian} by back-propagating curvature.
\newblock \emph{Proceedings of the 29th International Conference on Machine
  Learning}, 2012.

\bibitem[Neal(2001)]{neal2001annealed}
Neal, R.~M.
\newblock Annealed importance sampling.
\newblock \emph{Statistics and computing}, 11\penalty0 (2):\penalty0 125--139,
  2001.

\bibitem[Owen(2013)]{mcbook}
Owen, A.~B.
\newblock \emph{Monte Carlo theory, methods and examples}.
\newblock 2013.

\bibitem[Rabin et~al.(2012)Rabin, Peyr{\'e}, Delon, and
  Bernot]{10.1007/978-3-642-24785-9_37}
Rabin, J., Peyr{\'e}, G., Delon, J., and Bernot, M.
\newblock Wasserstein barycenter and its application to texture mixing.
\newblock In Bruckstein, A.~M., ter Haar~Romeny, B.~M., Bronstein, A.~M., and
  Bronstein, M.~M. (eds.), \emph{Scale Space and Variational Methods in
  Computer Vision}, pp.\  435--446, Berlin, Heidelberg, 2012. Springer Berlin
  Heidelberg.
\newblock ISBN 978-3-642-24785-9.

\bibitem[Saremi et~al.(2018)Saremi, Mehrjou, Sch{\"o}lkopf, and
  Hyv{\"a}rinen]{saremi2018deep}
Saremi, S., Mehrjou, A., Sch{\"o}lkopf, B., and Hyv{\"a}rinen, A.
\newblock Deep energy estimator networks.
\newblock \emph{arXiv preprint arXiv:1805.08306}, 2018.

\bibitem[Sasaki et~al.(2014)Sasaki, Hyv{\"a}rinen, and Sugiyama]{Sasaki14score}
Sasaki, H., Hyv{\"a}rinen, A., and Sugiyama, M.
\newblock Clustering via mode seeking by direct estimation of the gradient of a
  log-density.
\newblock In \emph{Joint European Conference on Machine Learning and Knowledge
  Discovery in Databases}, pp.\  19--34. Springer, 2014.

\bibitem[Shi et~al.(2018)Shi, Sun, and Zhu]{shi2018spectral}
Shi, J., Sun, S., and Zhu, J.
\newblock A spectral approach to gradient estimation for implicit
  distributions.
\newblock In \emph{Proceedings of the 35th International Conference on Machine
  Learning}, pp.\  4651--4660, 2018.

\bibitem[Sriperumbudur et~al.(2017)Sriperumbudur, Fukumizu, Gretton,
  Hyv\"{a}rinen, and Kumar]{srip2017kernel}
Sriperumbudur, B., Fukumizu, K., Gretton, A., Hyv\"{a}rinen, A., and Kumar, R.
\newblock Density estimation in infinite dimensional exponential families.
\newblock \emph{Journal of Machine Learning Research}, 18\penalty0
  (57):\penalty0 1--59, 2017.

\bibitem[Stein(1981)]{stein1981estimation}
Stein, C.~M.
\newblock Estimation of the mean of a multivariate normal distribution.
\newblock \emph{The annals of Statistics}, pp.\  1135--1151, 1981.

\bibitem[Strathmann et~al.(2015)Strathmann, Sejdinovic, Livingstone, Szabo, and
  Gretton]{strathmann2015gradient}
Strathmann, H., Sejdinovic, D., Livingstone, S., Szabo, Z., and Gretton, A.
\newblock Gradient-free {Hamiltonian} monte carlo with efficient kernel
  exponential families.
\newblock In \emph{Advances in Neural Information Processing Systems}, pp.\
  955--963, 2015.

\bibitem[Sun et~al.(2019)Sun, Zhang, Shi, and Grosse]{sun2018functional}
Sun, S., Zhang, G., Shi, J., and Grosse, R.
\newblock Functional variational {B}ayesian neural networks.
\newblock In \emph{International Conference on Learning Representations}, 2019.

\bibitem[Sutherland et~al.(2018)Sutherland, Strathmann, Arbel, and
  Gretton]{sutherland2018nystrom}
Sutherland, D., Strathmann, H., Arbel, M., and Gretton, A.
\newblock Efficient and principled score estimation with {Nyström} kernel
  exponential families.
\newblock In \emph{Proceedings of the Twenty-First International Conference on
  Artificial Intelligence and Statistics}, pp.\  652--660, 2018.

\bibitem[Tolstikhin et~al.(2018)Tolstikhin, Bousquet, Gelly, and
  Schoelkopf]{tolstikhin2018wasserstein}
Tolstikhin, I., Bousquet, O., Gelly, S., and Schoelkopf, B.
\newblock Wasserstein auto-encoders.
\newblock In \emph{International Conference on Learning Representations}, 2018.

\bibitem[Tran et~al.(2017)Tran, Ranganath, and Blei]{tran2017hierarchical}
Tran, D., Ranganath, R., and Blei, D.
\newblock Hierarchical implicit models and likelihood-free variational
  inference.
\newblock In \emph{Advances in Neural Information Processing Systems}, pp.\
  5523--5533, 2017.

\bibitem[van~der Vaart(1998)]{vaart_1998}
van~der Vaart, A.~W.
\newblock \emph{Asymptotic Statistics}.
\newblock Cambridge Series in Statistical and Probabilistic Mathematics.
  Cambridge University Press, 1998.
\newblock \doi{10.1017/CBO9780511802256}.

\bibitem[Vincent(2011)]{vincent2011connection}
Vincent, P.
\newblock A connection between score matching and denoising autoencoders.
\newblock \emph{Neural computation}, 23\penalty0 (7):\penalty0 1661--1674,
  2011.

\bibitem[Wenliang et~al.(2019)Wenliang, Sutherland, Strathmann, and
  Gretton]{wenliang2018}
Wenliang, L., Sutherland, D., Strathmann, H., and Gretton, A.
\newblock Learning deep kernels for exponential family densities.
\newblock In Chaudhuri, K. and Salakhutdinov, R. (eds.), \emph{Proceedings of
  the 36th International Conference on Machine Learning}, volume~97 of
  \emph{Proceedings of Machine Learning Research}, pp.\  6737--6746, Long
  Beach, California, USA, 09--15 Jun 2019. PMLR.
\newblock URL \url{http://proceedings.mlr.press/v97/wenliang19a.html}.

\end{thebibliography}
\allowdisplaybreaks

\newpage
\appendix
\onecolumn
\section{Samples}\label{app:sec:samples}
\subsection{VAE WITH IMPLICIT ENCODERS}
\subsubsection{MNIST}
\begin{table}[H]
    \centering
    \begin{tabular}{c>{\centering\arraybackslash}m{4.4cm}>{\centering\arraybackslash}m{4.4cm}}
          & Latent Dim 8 & Latent Dim 32\\
    \begin{tabular}{l} ELBO \end{tabular} & \includegraphics[width=0.28\textwidth]{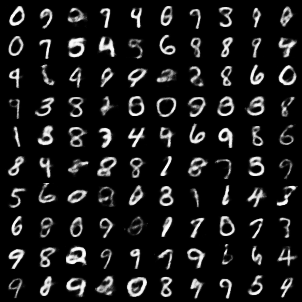} & \includegraphics[width=0.28\textwidth]{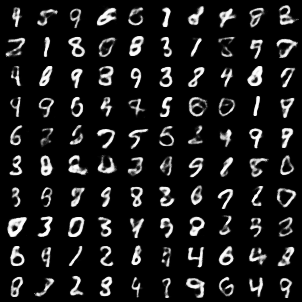}\\
        SSM &\includegraphics[width=0.28\textwidth]{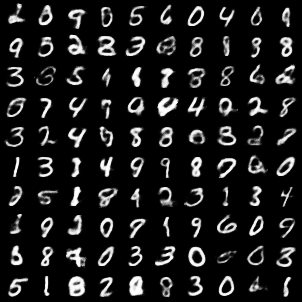} & \includegraphics[width=0.28\textwidth]{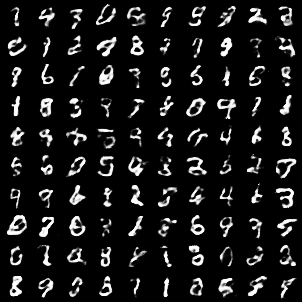}\\
        Stein & \includegraphics[width=0.28\textwidth]{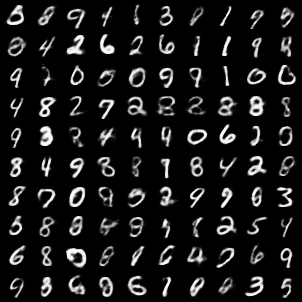} & \includegraphics[width=0.28\textwidth]{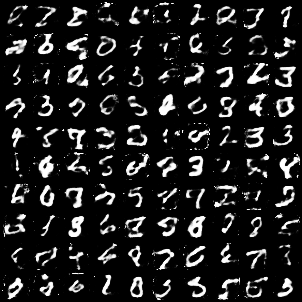}\\
        Spectral & \includegraphics[width=0.28\textwidth]{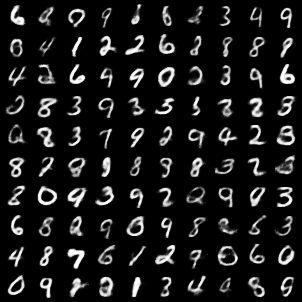} & \includegraphics[width=0.28\textwidth]{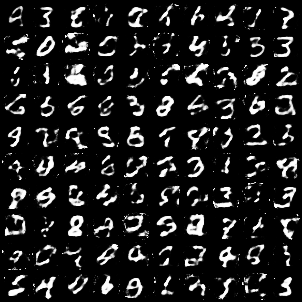}\\
    \end{tabular}
    \hspace{2cm}\caption{VAE samples on MNIST.}
\end{table}
\subsubsection{CelebA}
\FloatBarrier
\begin{center}
\begin{table}[H]
    \centering
    \begin{tabular}{cc}
    (a) ELBO & (b) SSM\\
    \includegraphics[width=0.4\textwidth]{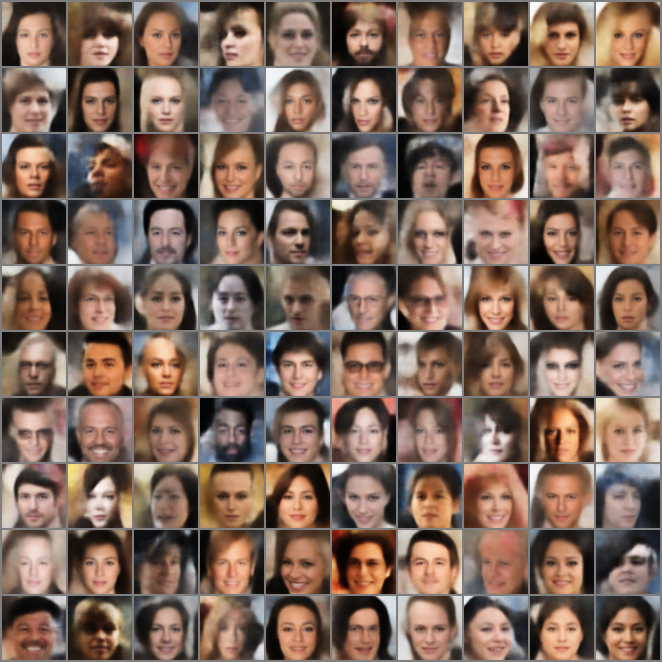}  &\includegraphics[width=0.4\textwidth]{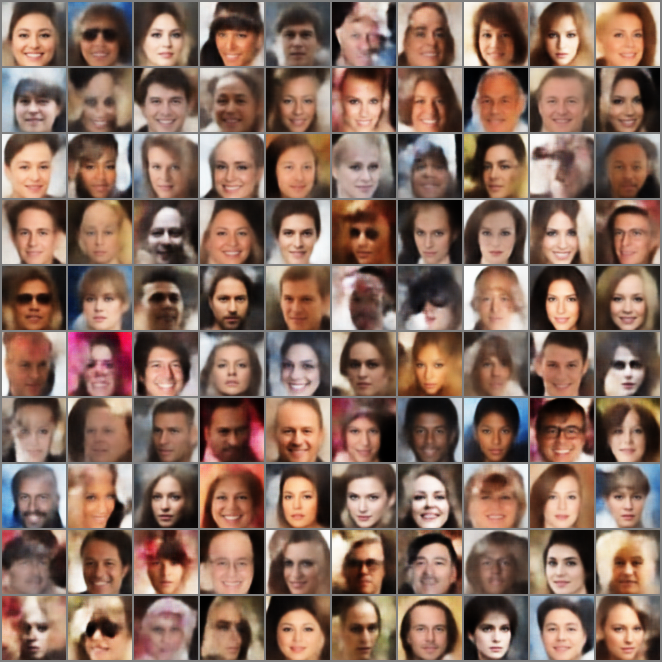}
        \vspace{5mm}\\
    (c) Stein  & (d) Spectral \\
    \includegraphics[width=0.4\textwidth]{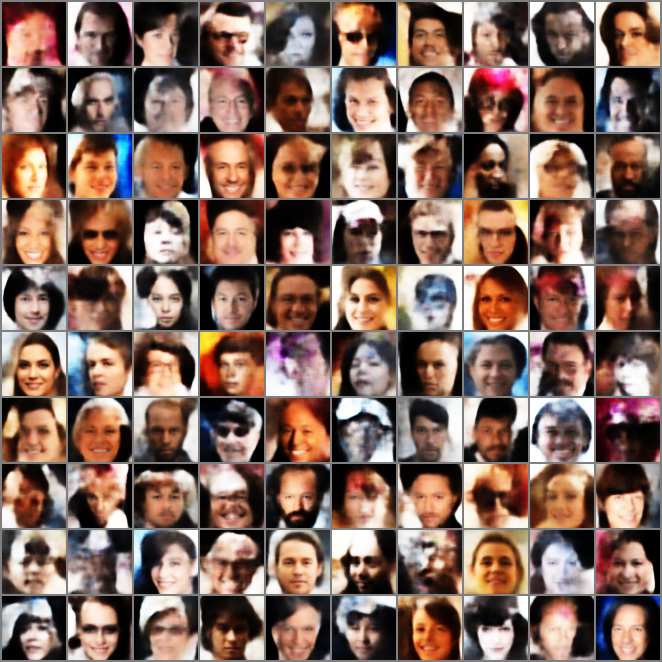} &  \includegraphics[width=0.4\textwidth]{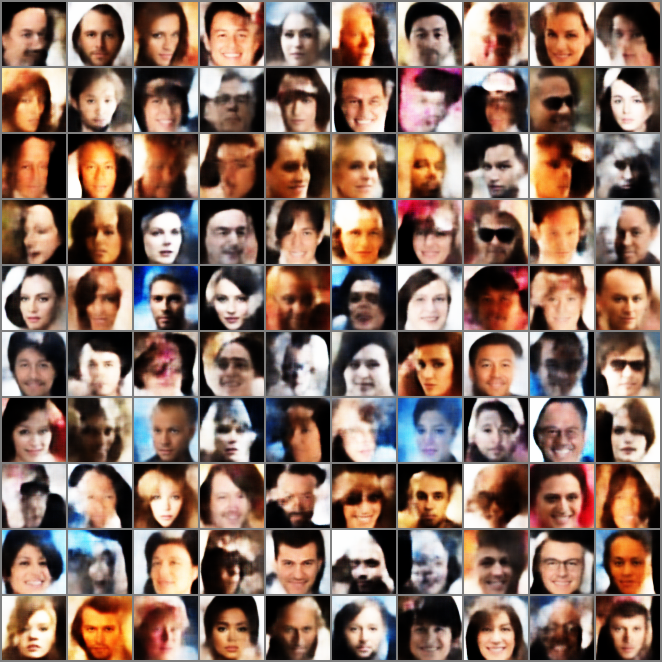}\\
    \end{tabular}
    \caption{VAE samples on CelebA.}
\end{table}
\end{center}
\FloatBarrier
\subsection{WAE}
\subsubsection{MNIST}
\begin{center}
\begin{table}[H]
    \centering
    \begin{tabular}{c>{\centering\arraybackslash}m{4.4cm}>{\centering\arraybackslash}m{4.4cm}}
         & Latent Dim 8 & Latent Dim 32\\
        SSM &\includegraphics[width=0.28\textwidth]{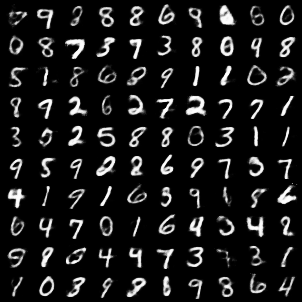} & \includegraphics[width=0.28\textwidth]{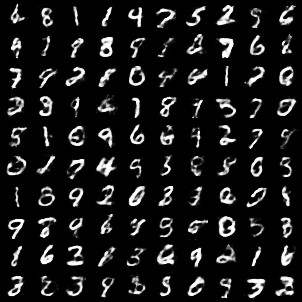}\\
        Stein & \includegraphics[width=0.28\textwidth]{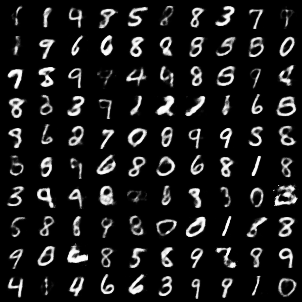} & \includegraphics[width=0.28\textwidth]{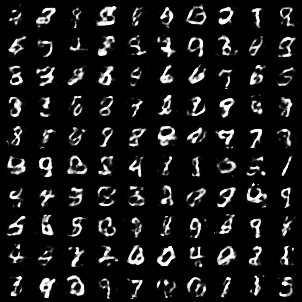}\\
        Spectral & \includegraphics[width=0.28\textwidth]{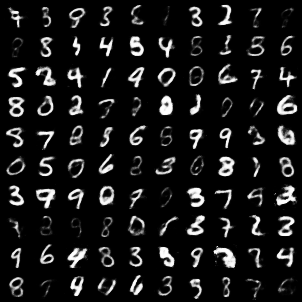} & \includegraphics[width=0.28\textwidth]{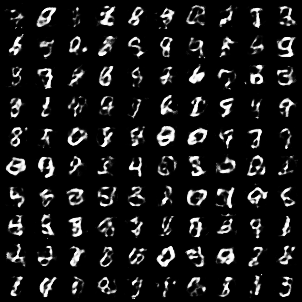}\\
    \end{tabular}
    \hspace{2cm}\caption{WAE samples on MNIST.}
\end{table}
\end{center}
\subsubsection{CelebA}
\begin{table}[H]
    \centering
    \begin{tabular}{c>{\centering\arraybackslash}m{6cm}}
        SSM &\includegraphics[width=0.4\textwidth]{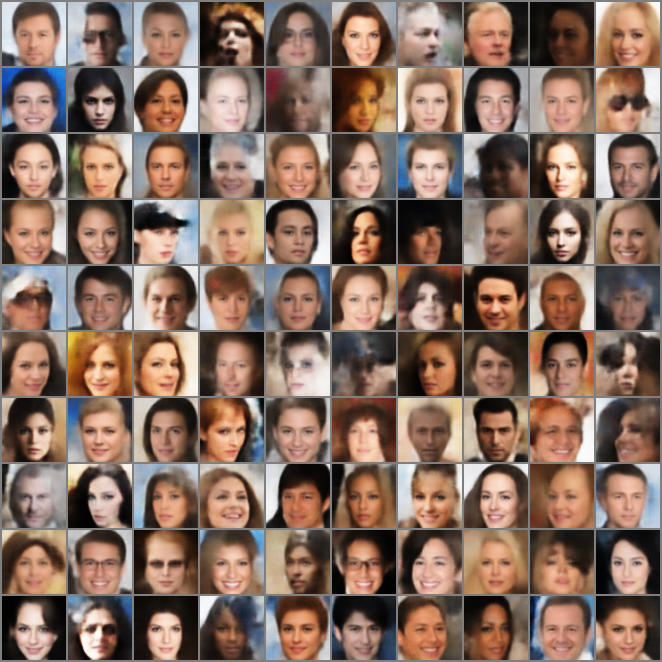}\\
        Stein & \includegraphics[width=0.4\textwidth]{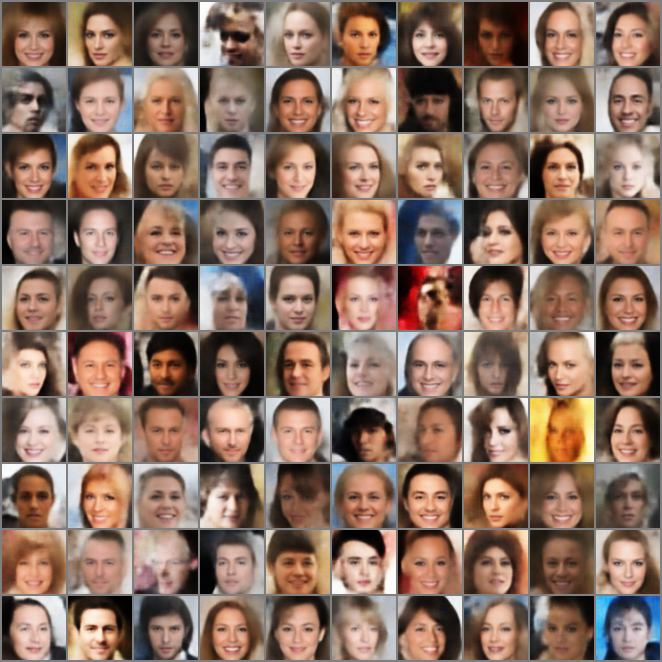}\\
        Spectral & \includegraphics[width=0.4\textwidth]{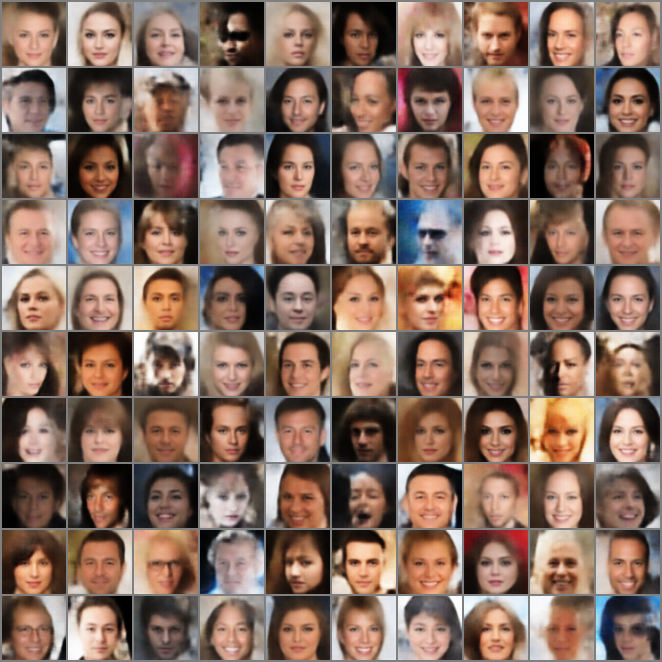}\\
    \end{tabular}
    \hspace{2cm}\caption{WAE samples on CelebA.}
\end{table}
\section{PROOFS}
\subsection{NOTATIONS}
Below we provide a summary of the most commonly used notations used in the proofs. First, we denote the data distribution as $p_d(\bfx)$ and assume that the training/test data $\{\bfx_1, \bfx_2, \cdots, \bfx_N\}$ are \iid samples of $p_d(\bfx)$. The model is denoted as $p_m(\bfx; \bftheta)$, where $\bftheta$ is restricted to a parameter space $\Theta$. Note that $p_m(\bfx;\bfv)$ can be \emph{an unnormalized energy-based model}. We use $\bfv$ to represent a random vector with the same dimension of input $\bfx$. This vector $\bfv$ is often called the \emph{projection vector}, and we use $p_\bfv$ to denote its distribution.

Next, we introduce several shorthand notations for quantities related to $p_m(\bfx; \bftheta)$ and $p_d(\bfx)$. The log-likelihood $\log p_m(\bfx; \bftheta)$ and $\log p_d(\bfx)$ are respectively denoted as $l_m(\bfx; \bftheta)$ and $l_d(\bfx)$. The (Stein) score function $\nabla_\bfx \log p_m(\bfx; \bftheta)$ and $\nabla_\bfx \log p_d(\bfx)$ are written as $\bfs_m(\bfx; \bftheta)$ and $\bfs_d(\bfx)$, and finally the Hessian of $\log p_m(\bfx; \bftheta)$ \wrt $\bfx$ is denoted as $\nabla_\bfx \bfs_m(\bfx; \bftheta)$.

We also adopt some convenient notations for collections. In particular, we use $\mbf{x}_1^N$ to denote a collection of $N$ vectors $\{\bfx_1, \bfx_2, \cdots, \bfx_N\}$ and use $\mbf{v}_{11}^{NM}$ to denote $N\times M$ vectors $\{\bfv_{11}, \bfv_{12}, \cdots, \bfv_{1M}, \bfv_{21}, \allowbreak \bfv_{22}, \cdots, \bfv_{2M}, \cdots, \bfv_{N1}, \bfv_{N2}, \cdots, \bfv_{NM}\}$. 

\subsection{BASIC PROPERTIES} \label{app:basic}
The following regularity conditions are needed for integration by parts and identifiability.
\begin{assumption}[Regularity of score functions]\label{ass:score}
The model score function $\bfs_m(\bfx)$ and data score function $\bfs_d(\bfx)$ are both differentiable. They additionally satisfy $\mbb{E}_{p_d}[\norm{\bfs_m(\bfx)}_2^2] < \infty$ and $\mbb{E}_{p_d}[\norm{\bfs_d(\bfx)}_2^2] < \infty$.
\end{assumption}
\begin{assumption}[Regularity of projection vectors]\label{ass:v}
The projection vectors satisfy $\mbb{E}_{p_\bfv}[\norm{\bfv}^2_2] < \infty$, and $\mbb{E}_{p_\bfv}[\bfv \bfv^\intercal] \succ 0$.
\end{assumption}
\begin{assumption}[Boundary conditions]\label{ass:boundary}
$\forall \bftheta \in \Theta, \lim_{\norm{\bfx}\rightarrow \infty} \bfs_m(\bfx; \bftheta) p_d(\bfx) = 0$.
\end{assumption}
\begin{assumption}[Identifiability]\label{ass:identifiability}
The model family $\{p_m(\bfx; \bftheta) \mid \bftheta \in \Theta \}$ is well-specified, \ie, $p_d(\bfx) = p_m(\bfx; \bftheta^*)$. Furthermore, $p_m(\bfx; \bftheta) \neq p_m(\bfx; \bftheta^*)$ whenever $\bftheta \neq \bftheta^*$.
\end{assumption}
\begin{assumption}[Positiveness]\label{ass:positiveness}
$p_m(\bfx; \bftheta) > 0, \forall \bftheta \in \Theta$ and $\forall \bfx$.
\end{assumption}

\begin{customthm}{\ref{app:thm:1}}
Assume $\bfs_m(\bfx;\bftheta)$, $\bfs_d(\bfx)$ and $p_\bfv$ satisfy some regularity conditions (\assref{ass:score}, \assref{ass:v}). Under proper boundary conditions (\assref{ass:boundary}), we have
\begin{align}
    L(\bftheta; p_\bfv) \triangleq \frac{1}{2}  \mbb{E}_{p_\bfv} \mbb{E}_{p_d}\left[ (\bfv^\intercal \bfs_m (\bfx; \bftheta) - \bfv^\intercal \bfs_d(\bfx))^2 \right]
    = \mbb{E}_{p_\bfv} \mbb{E}_{p_d}\bigg[\bfv^\intercal \nabla_\bfx \bfs_m(\bfx; \bftheta) \bfv + 
    \frac{1}{2} \left(\bfv^\intercal \bfs_m(\bfx; \bftheta) \right)^2 \bigg] + \up{C}, \label{app:t1}
\end{align}
where $\up{C}$ is a constant \wrt $\bftheta$.
\end{customthm} 
\begin{proof}
The basic idea of this proof is similar to that of Theorem 1 in \citet{hyvarinen2005estimation}. First, note that $L(\bftheta, p_\bfv)$ can be expanded to
\begin{align}
    L(\bftheta, p_\bfv) &= \frac{1}{2} \mbb{E}_{p_\bfv} \mbb{E}_{p_d}\left[ (\bfv^\intercal \bfs_m (\bfx; \bftheta) - \bfv^\intercal \bfs_d(\bfx))^2 \right] \notag\\
    &\stackrel{(i)}{=}  \frac{1}{2} \mbb{E}_{p_\bfv} \mbb{E}_{p_d} [(\bfv^\intercal \bfs_m(\bfx; \bftheta))^2 + (\bfv^\intercal \bfs_d(\bfx))^2 - 2(\bfv^\intercal \bfs_m(\bfx; \bftheta))(\bfv^\intercal \bfs_d(\bfx; \bftheta))] \label{app:t1_expand}\\
    &= \mbb{E}_{p_\bfv} \mbb{E}_{p_d}\bigg[-(\bfv^\intercal \bfs_m(\bfx; \bftheta))(\bfv^\intercal \bfs_d(\bfx; \bftheta)) + 
    \frac{1}{2} \left(\bfv^\intercal \bfs_m(\bfx; \bftheta) \right)^2 \bigg] + \up{C},
\end{align}
where $(i)$ is due to the assumptions of bounded expectations. We have absorbed the second term in the bracket of \eqref{app:t1_expand} into $\up{C}$ since it does not depend on $\bftheta$. Now what we need to prove is 
\begin{align}
    -\mbb{E}_{p_\bfv}\mbb{E}_{p_d} [(\bfv^\intercal \bfs_m(\bfx; \bftheta))(\bfv^\intercal \bfs_d(\bfx; \bftheta))] = \mbb{E}_{p_\bfv}\mbb{E}_{p_d} [\bfv^\intercal \nabla_\bfx \bfs_m(\bfx;\bftheta) \bfv]. \label{app:t1_key}
\end{align}
This can be shown by first observing that
\begin{align}
    &-\mbb{E}_{p_\bfv}\mbb{E}_{p_d} [(\bfv^\intercal \bfs_m(\bfx; \bftheta))(\bfv^\intercal \bfs_d(\bfx; \bftheta))]\notag \\
    =& - \mbb{E}_{p_\bfv} \int p_d(\bfx) (\bfv^\intercal \bfs_m(\bfx; \bftheta))(\bfv^\intercal \bfs_d(\bfx; \bftheta)) \ud \bfx \notag\\
    =& -\mbb{E}_{p_\bfv} \int p_d(\bfx) (\bfv^\intercal \nabla_\bfx \log p_m(\bfx;\bftheta))(\bfv^\intercal \nabla_\bfx \log p_d(\bfx)) \ud \bfx \notag\\
    =& -\mbb{E}_{p_\bfv} \int (\bfv^\intercal \nabla_\bfx \log p_m(\bfx;\bftheta))(\bfv^\intercal \nabla_\bfx p_d(\bfx)) \ud \bfx \notag \\
    =& -\mbb{E}_{p_\bfv} \sum_{i=1}^D \int (\bfv^\intercal \nabla_\bfx \log p_m(\bfx;\bftheta)) v_i \frac{ \partial p_d(\bfx)}{\partial x_i} \ud \bfx \label{app:t1_mid},
\end{align}
where we assume $\bfx \in \mbb{R}^D$. Then, applying multivariate integration by parts (\cf, Lemma 4 in \citet{hyvarinen2005estimation}), we obtain
\begin{align*}
    & \bigg| \mbb{E}_{p_\bfv} \sum_{i=1}^D \int (\bfv^\intercal \bfs_m(\bfx; \bftheta)) v_i \frac{\partial p_d(\bfx)}{\partial x_i} \ud \bfx
    + \mbb{E}_{p_\bfv} \sum_{i=1}^D \int v_i p_d(\bfx) \bfv^\intercal \frac{\partial \bfs_m(\bfx; \bftheta)}{\partial x_i} \ud \bfx \bigg| \\
    =& \bigg| \mbb{E}_{p_\bfv} \bigg[\sum_{i=1}^D \lim_{x_i \rightarrow \infty} (\bfv^\intercal \bfs_m(\bfx; \bftheta)) v_i p_d(\bfx) - \sum_{i=1}^D \lim_{x_i \rightarrow -\infty} (\bfv^\intercal \bfs_m(\bfx; \bftheta)) v_i p_d(\bfx)\bigg] \bigg|\\
    \leq & \sum_{i=1}^D \lim_{x_i\rightarrow \infty} \sum_{j=1}^D \mbb{E}_{p_\bfv} |v_i v_j|| s_{m, j}(\bfx; \bftheta) p_d(\bfx)| + \sum_{i=1}^D \lim_{x_i\rightarrow -\infty} \sum_{j=1}^D \mbb{E}_{p_\bfv} |v_i v_j|| s_{m, j}(\bfx; \bftheta) p_d(\bfx)|\\
    \stackrel{(i)}{\leq} & \sum_{i=1}^D \lim_{x_i\rightarrow \infty} \sum_{j=1}^D \sqrt{\mbb{E}_{p_\bfv} v_i^2 \mbb{E}_{p_\bfv}v_j^2} |s_{m, j}(\bfx; \bftheta) p_d(\bfx)| + \sum_{i=1}^D \lim_{x_i\rightarrow -\infty} \sum_{j=1}^D \sqrt{\mbb{E}_{p_\bfv} v_i^2 \mbb{E}_{p_\bfv}v_j^2}| s_{m, j}(\bfx; \bftheta) p_d(\bfx)|\\
    \stackrel{(ii)}{=}& 0,
\end{align*}
where $s_{m,j}(\bfx; \bftheta)$ denotes the $j$-th component of $\bfs_m(\bfx; \bftheta)$. In the above derivation, $(i)$ is due to Cauchy-Schwarz inequality and $(ii)$ is from the assumption that $\mbb{E}_{p_\bfv}[\norm{\bfv}^2] < \infty$ and $s(\bfx;\bftheta) p_d(\bfx)$ vanishes at infinity.

Now returning to \eqref{app:t1_mid}, we have
\begin{align*}
    -\mbb{E}_{p_\bfv} \sum_{i=1}^D \int (\bfv^\intercal \nabla_\bfx \log p_m(\bfx;\bftheta)) v_i \frac{ \partial p_d(\bfx)}{\partial x_i} \ud \bfx &= \mbb{E}_{p_\bfv} \sum_{i=1}^D \int v_i p_d(\bfx) \bfv^\intercal \frac{\partial \bfs_m(\bfx; \bftheta)}{\partial x_i} \ud \bfx\\
    &=\mbb{E}_{p_\bfv} \int p_d(\bfx) \bfv^\intercal \nabla_\bfx \bfs_m(\bfx; \bftheta) \bfv \ud \bfx,
\end{align*}
which proves \eqref{app:t1_key} and the proof is completed.
\end{proof}

\begin{lemma}\label{app:thm:2}
Assume our model family is well-specified and identifiable (\assref{ass:identifiability}). Assume further that the densities are all positive (\assref{ass:positiveness}). When $p_\bfv$ satisfies some regularity conditions (\assref{ass:v}), we have
\begin{align*}
    L(\bftheta; p_\bfv) = 0 \Leftrightarrow \bftheta = \bftheta^*.
\end{align*}
\end{lemma}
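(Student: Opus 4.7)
The plan is to argue both directions separately; the $(\Leftarrow)$ direction is immediate from identifiability (Assumption~\ref{ass:identifiability}), since $\bftheta=\bftheta^*$ gives $p_m(\cdot;\bftheta)=p_d$ hence $\bfs_m(\cdot;\bftheta)=\bfs_d$ pointwise, making the integrand in $L(\bftheta;p_\bfv)$ identically zero. So the work is in the $(\Rightarrow)$ direction.

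Assume $L(\bftheta;p_\bfv)=0$. Because $\bfv$ and $\bfx$ are independent under the product measure, I would first interchange expectations (Fubini is fine since the integrand is nonnegative) and write
\begin{align*}
0 = 2L(\bftheta;p_\bfv) = \mbb{E}_{p_d}\!\left[(\bfs_m(\bfx;\bftheta)-\bfs_d(\bfx))^\intercal \mbb{E}_{p_\bfv}[\bfv\bfv^\intercal] (\bfs_m(\bfx;\bftheta)-\bfs_d(\bfx))\right].
\end{align*}
Let $A\triangleq\mbb{E}_{p_\bfv}[\bfv\bfv^\intercal]$. By Assumption~\ref{ass:v}, $A\succ 0$, so its minimum eigenvalue $\lambda_{\min}(A)>0$, giving the pointwise bound $\Delta(\bfx)^\intercal A\,\Delta(\bfx)\geq \lambda_{\min}(A)\|\Delta(\bfx)\|_2^2$, where $\Delta(\bfx)\triangleq \bfs_m(\bfx;\bftheta)-\bfs_d(\bfx)$. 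Hence $\mbb{E}_{p_d}\|\Delta(\bfx)\|_2^2=0$, so $\bfs_m(\bfx;\bftheta)=\bfs_d(\bfx)$ for $p_d$-almost every $\bfx$.

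Next, I translate this back to a statement about densities. Since $p_m$ and $p_d$ are strictly positive everywhere (Assumption~\ref{ass:positiveness}) and differentiable, the score equality reads $\nabla_\bfx\bigl[\log p_m(\bfx;\bftheta)-\log p_d(\bfx)\bigr]=0$ on a set of full $p_d$-measure, which, by positivity and continuity, extends to all of $\mbb{R}^D$. Since $\mbb{R}^D$ is connected, this forces $\log p_m(\bfx;\bftheta)-\log p_d(\bfx)\equiv c$ for some constant $c$. Integrating $p_m(\bfx;\bftheta)=e^c p_d(\bfx)$ over $\mbb{R}^D$ and using that both densities normalize to one yields $e^c=1$, so $p_m(\bfx;\bftheta)=p_d(\bfx)=p_m(\bfx;\bftheta^*)$. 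Identifiability (Assumption~\ref{ass:identifiability}) then gives $\bftheta=\bftheta^*$.

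The main subtlety is the passage from ``score equal almost everywhere under $p_d$'' to ``densities equal pointwise,'' which needs both positivity (to ensure $p_d$-null sets are not hiding a disconnection in the effective support and to allow taking logs) and the connectedness of $\mbb{R}^D$; without these, one would only be able to conclude proportionality on each connected component of the support, with possibly different constants, and the normalization argument would fail. The $\bfv$ part of the argument is comparatively cheap: all it requires is the positive-definiteness condition $\mbb{E}_{p_\bfv}[\bfv\bfv^\intercal]\succ 0$, which is exactly why Assumption~\ref{ass:v} is imposed.
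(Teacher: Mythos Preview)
Your proof is correct and follows essentially the same approach as the paper: rewrite the sliced loss as a quadratic form in $\Delta(\bfx)=\bfs_m(\bfx;\bftheta)-\bfs_d(\bfx)$ with matrix $\mbb{E}_{p_\bfv}[\bfv\bfv^\intercal]\succ 0$, deduce $\Delta\equiv 0$, integrate to get $\log p_m=\log p_d+c$, normalize to force $c=0$, and invoke identifiability. If anything, you are more careful than the paper about the almost-everywhere-to-pointwise passage (via positivity, continuity, and connectedness of $\mbb{R}^D$), which the paper leaves implicit.
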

\begin{proof}
First, since $p_d(\bfx) = p_m(\bfx; \bftheta^*) > 0$, $L(\bftheta; p_\bfv) = 0$ implies
\begin{align*}
    &\frac{1}{2} \mbb{E}_{p_\bfv} (\bfv^\intercal(\bfs_m(\bfx; \bftheta) - \bfs_d(\bfx)))^2 = 0\\
    \Leftrightarrow~& \mbb{E}_{p_\bfv} \bfv^\intercal (\bfs_m(\bfx; \bftheta) - \bfs_d(\bfx)) (\bfs_m(\bfx; \bftheta) - \bfs_d(\bfx))^\intercal \bfv = 0 \\
    \Leftrightarrow~& (\bfs_m(\bfx; \bftheta) - \bfs_d(\bfx))^\intercal \mbb{E}_{p_\bfv}[\bfv \bfv^\intercal] (\bfs_m(\bfx; \bftheta) - \bfs_d(\bfx)) = 0 \\
    \stackrel{(i)}{\Leftrightarrow}~& \bfs_m(\bfx; \bftheta) - \bfs_d(\bfx) = 0 \\
    \Leftrightarrow~& \log p_m(\bfx; \bftheta) = \log p_d(\bfx) + \up{C}
\end{align*}
where $(i)$ holds because $\mbb{E}_{p_\bfv}[\bfv \bfv^\intercal]$ is positive definite. Because $p_m(\bfx; \bftheta)$ and $p_d(\bfx)$ are normalized probability density functions, we have $p_m(\bfx; \bftheta) = p_d(\bfx)$. The identifiability assumption gives $\bftheta = \bftheta^*$. This concludes the left to right direction of the implication and the converse direction is trivial.
\end{proof}

\subsection{CONSISTENCY}\label{sec:consistency}
In addition to the assumptions in \thmref{app:thm:1} and \lemref{app:thm:2}, we need the following regularity conditions to prove the consistency of $\hat{\bftheta}_{N,M} \triangleq \argmin_{\bftheta\in\Theta} \hat{J}(\bftheta; \bfx_1^N, \bfv_{11}^{NM})$.
\begin{assumption}[Compactness]\label{ass:compact}
The parameter space $\Theta$ is compact.
\end{assumption}
\begin{assumption}[Lipschitz continuity]\label{ass:lipschitz}
Both $\nabla_\bfx \bfs_m(\bfx; \bftheta)$ and $\bfs_m(\bfx; \bftheta) \bfs_m(\bfx; \bftheta)^\intercal$ are Lipschitz continuous in terms of Frobenious norm, \ie, $\forall \bftheta_1\in \Theta, \bftheta_2 \in \Theta$, $\norm{\nabla_\bfx \bfs_m(\bfx; \bftheta_1) - \nabla_\bfx \bfs_m(\bfx; \bftheta_2)}_F \leq L_1(\bfx)\norm{\bftheta_1 - \bftheta_2}_2$ and $\norm{\bfs_m(\bfx; \bftheta_1) \bfs_m(\bfx; \bftheta_1)^\intercal - \bfs_m(\bfx; \bftheta_2) \bfs_m(\bfx; \bftheta_2)^\intercal}_F \leq L_2(\bfx)\norm{\bftheta_1 - \bftheta_2}_2$. In addition, we require $\mbb{E}_{p_d}[L_1^2(\bfx)] < \infty$ and $\mbb{E}_{p_d}[L_2^2(\bfx)] < \infty$.
\end{assumption}
\begin{assumption}[Bounded moments of projection vectors]\label{ass:v2}
$\mbb{E}_{p_\bfv}[\norm{\bfv \bfv^\intercal}_F^2] < \infty$.
\end{assumption}
\begin{lemma}\label{app:lemma:lipschitz}
Suppose $\bfs_m(\bfx; \bftheta)$ is sufficiently smooth (\assref{ass:lipschitz}) and $p_\bfv$ has bounded higher-order moments (\assref{ass:v2}). Let $f(\bftheta; \bfx, \bfv) \triangleq \bfv^\intercal \nabla_\bfx \bfs_m(\bfx; \bftheta) \bfv + \frac{1}{2}(\bfv^\intercal \bfs_m(\bfx; \bftheta))^2$. Then $f(\bftheta; \bfx, \bfv)$ is Lipschitz continuous with constant $L(\bfx, \bfv)$ and $\mbb{E}_{p_d, p_\bfv}[L^2(\bfx, \bfv)] < \infty$.
\end{lemma}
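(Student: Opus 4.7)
The plan is to bound $|f(\bftheta_1;\bfx,\bfv) - f(\bftheta_2;\bfx,\bfv)|$ by the triangle inequality, splitting the difference into a ``Hessian'' piece and a ``squared-score'' piece, and then apply the two Lipschitz hypotheses in \assref{ass:lipschitz} together with the moment condition \assref{ass:v2}. First I would rewrite the quadratic form in terms of a Frobenius inner product, namely $\bfv^\intercal A \bfv = \Tr(A\,\bfv\bfv^\intercal)$, so that by Cauchy--Schwarz for the Frobenius inner product
\[
|\bfv^\intercal A \bfv| \;\leq\; \|A\|_F\,\|\bfv\bfv^\intercal\|_F \;=\; \|A\|_F\,\|\bfv\|_2^2.
\]
This is the single identity that will let both terms of $f$ be handled uniformly.

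Applied to $A = \nabla_\bfx \bfs_m(\bfx;\bftheta_1) - \nabla_\bfx \bfs_m(\bfx;\bftheta_2)$, the first term of $f$ contributes
\[
|\bfv^\intercal \nabla_\bfx \bfs_m(\bfx;\bftheta_1)\bfv - \bfv^\intercal \nabla_\bfx \bfs_m(\bfx;\bftheta_2)\bfv| \;\leq\; L_1(\bfx)\,\|\bfv\|_2^2\,\|\bftheta_1-\bftheta_2\|_2,
\]
using the first part of \assref{ass:lipschitz}. For the squared-score term, I would use the identity $(\bfv^\intercal \bfs)^2 = \bfv^\intercal(\bfs\bfs^\intercal)\bfv$ and apply the same Frobenius Cauchy--Schwarz bound with $A = \bfs_m(\bfx;\bftheta_1)\bfs_m(\bfx;\bftheta_1)^\intercal - \bfs_m(\bfx;\bftheta_2)\bfs_m(\bfx;\bftheta_2)^\intercal$, which by the second part of \assref{ass:lipschitz} gives an upper bound $\tfrac{1}{2}L_2(\bfx)\|\bfv\|_2^2\,\|\bftheta_1-\bftheta_2\|_2$. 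Adding the two contributions yields
\[
|f(\bftheta_1;\bfx,\bfv)-f(\bftheta_2;\bfx,\bfv)| \;\leq\; L(\bfx,\bfv)\,\|\bftheta_1-\bftheta_2\|_2,\qquad L(\bfx,\bfv) \triangleq \bigl(L_1(\bfx)+\tfrac{1}{2}L_2(\bfx)\bigr)\,\|\bfv\|_2^2.
\]

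Finally I would verify the integrability claim by independence of $\bfx\sim p_d$ and $\bfv\sim p_\bfv$:
\[
\mbb{E}_{p_d,p_\bfv}[L(\bfx,\bfv)^2] \;=\; \mbb{E}_{p_d}\!\bigl[\bigl(L_1(\bfx)+\tfrac{1}{2}L_2(\bfx)\bigr)^2\bigr]\cdot \mbb{E}_{p_\bfv}[\|\bfv\|_2^4].
\]
The first factor is finite by $(a+b)^2\leq 2(a^2+b^2)$ together with $\mbb{E}_{p_d}[L_1^2]<\infty$ and $\mbb{E}_{p_d}[L_2^2]<\infty$ from \assref{ass:lipschitz}. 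The second factor equals $\mbb{E}_{p_\bfv}[\|\bfv\bfv^\intercal\|_F^2]<\infty$ by \assref{ass:v2}, since $\|\bfv\bfv^\intercal\|_F^2 = \Tr(\bfv\bfv^\intercal\bfv\bfv^\intercal) = \|\bfv\|_2^4$.

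There is no real obstacle here; the only subtle step is spotting that $|\bfv^\intercal A \bfv|\leq \|A\|_F\|\bfv\|_2^2$, which is the mechanism converting the hypothesized Frobenius-norm Lipschitz bounds on the Hessian and on $\bfs_m\bfs_m^\intercal$ into the desired scalar Lipschitz bound on $f$. The factorization of the expectation into a $\bfx$-moment and a $\bfv$-moment is immediate from independence, so the remaining work is purely bookkeeping of constants.
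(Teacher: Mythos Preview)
Your proposal is correct and follows essentially the same approach as the paper: both arguments reduce to the bound $|\bfv^\intercal A \bfv| \le \|A\|_F\,\|\bfv\|_2^2$, invoke the two Frobenius-Lipschitz hypotheses of \assref{ass:lipschitz}, and then factor the second-moment expectation via independence together with $\|\bfv\bfv^\intercal\|_F^2 = \|\bfv\|_2^4$ from \assref{ass:v2}. The only cosmetic difference is that the paper applies Cauchy--Schwarz once to the combined sum (yielding $L(\bfx,\bfv)=\|\bfv\|_2^2\sqrt{2L_1^2+\tfrac{1}{2}L_2^2}$), whereas you split by the triangle inequality first and obtain the slightly sharper constant $L(\bfx,\bfv)=\|\bfv\|_2^2\bigl(L_1+\tfrac{1}{2}L_2\bigr)$; this does not affect the argument.
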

\begin{proof}
Let $A(\bftheta) \triangleq \nabla_\bfx \bfs_m(\bfx; \bftheta)$ and $B(\bftheta) \triangleq \bfs_m(\bfx; \bftheta) \bfs_m(\bfx; \bftheta)^\intercal$. Consider $\bftheta_1 \in \Theta, \bftheta_2 \in \Theta$ and let $D$ be the dimension of $\bfv$, we have
\begin{align*}
&|f(\bftheta_1; \bfx, \bfv) - f(\bftheta_2; \bfx, \bfv)|\\
=& \sum_{i=1}^D \sum_{j=1}^D \bigg[v_i v_j (A(\bftheta_1)_{ij}- A(\bftheta_2)_{ij}) + \frac{1}{2} v_i v_j (B(\bftheta_1)_{ij} -B(\bftheta_2)_{ij}) \bigg] \\
\overset{(i)}{\leq} & \sqrt{\sum_{i=1}^D\sum_{j=1}^D v_i^2 v_j^2 } \sqrt{\sum_{i=1}^D \sum_{j=1}^D \bigg[(A(\bftheta_1)_{ij}- A(\bftheta_2)_{ij}) + \frac{1}{2} (B(\bftheta_1)_{ij} -B(\bftheta_2)_{ij})\bigg]^2}\\
\overset{(ii)}{\leq} & \sqrt{\sum_{i=1}^D\sum_{j=1}^D v_i^2 v_j^2 } \sqrt{\sum_{i=1}^D \sum_{j=1}^D 2 (A(\bftheta_1)_{ij}- A(\bftheta_2)_{ij})^2 + \frac{1}{2} (B(\bftheta_1)_{ij} -B(\bftheta_2)_{ij})^2}\\
\overset{(iii)}{\leq} & \sqrt{\sum_{i=1}^D\sum_{j=1}^D v_i^2 v_j^2 } \sqrt{2 L_1^2(\bfx) \norm{\bftheta_1 - \bftheta_2}^2_2 + \frac{1}{2} L_2^2(\bfx) \norm{\bftheta_1 - \bftheta_2}^2_2}\\
= & \sqrt{\sum_{i=1}^D\sum_{j=1}^D v_i^2 v_j^2 } \sqrt{2 L_1^2(\bfx) + \frac{1}{2} L_2^2(\bfx)} \norm{\bftheta_1 - \bftheta_2}_2,
\end{align*}
where $(i)$ is Cauchy-Schwarz inequality, $(ii)$ is Jensen's inequality, and $(iii)$ is due to \assref{ass:lipschitz}. Now let 
\begin{align*}
    L(\bfx, \bfv) \triangleq \sqrt{\sum_{i=1}^D\sum_{j=1}^D v_i^2 v_j^2 } \sqrt{2 L_1^2(\bfx) + \frac{1}{2} L_2^2(\bfx)}.
\end{align*}
Then
\begin{align*}
    \mbb{E}_{p_d, p_\bfv}[L^2(\bfx, \bfv)] &\overset{(i)}{=} \mbb{E}_{p_\bfv}\bigg[\sum_{i=1}^D\sum_{j=1}^D v_i^2 v_j^2 \bigg] \mbb{E}_{p_d}\bigg[2 L_1^2(\bfx) + \frac{1}{2} L_2^2(\bfx)\bigg]\\
    &\overset{(ii)}{<} \infty,
\end{align*}
where $(i)$ results from the independence of $\bfv, \bfx$, and $(ii)$ is due to  \assref{ass:lipschitz} and \assref{ass:v2}.
\end{proof}
\begin{lemma}[Uniform convergence of the expected error]\label{app:lemma:1}
Under \assref{ass:compact}-\ref{ass:v2}, we have
\begin{align}
    \mbb{E}_{p_\bfv, p_d}\bigg[\sup_{\bftheta \in \Theta} \big|\hat{J}(\bftheta; \bfx_1^N, \bfv_{11}^{NM}) - J(\bftheta; p_\bfv) \big| \bigg] \leq O\left(\operatorname{diam}(\Theta) \sqrt{\frac{D}{N}} \right)
\end{align}
where $\operatorname{diam}(\cdot)$ denotes the diameter and $D$ is the dimension of $\Theta$.
\end{lemma}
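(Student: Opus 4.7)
The plan is to establish this uniform convergence bound via a standard empirical-process argument that combines symmetrization, a Lipschitz continuity control on the parameter, and a covering-number estimate for the compact set $\Theta \subset \mathbb{R}^D$. The key preparatory tool is Lemma~\ref{app:lemma:lipschitz}, which says that $f(\bftheta; \bfx, \bfv) \triangleq \bfv^\intercal \nabla_\bfx \bfs_m(\bfx; \bftheta) \bfv + \tfrac{1}{2}(\bfv^\intercal \bfs_m(\bfx; \bftheta))^2$ is Lipschitz in $\bftheta$ with a random constant $L(\bfx, \bfv)$ whose second moment is finite under Assumptions~\ref{ass:lipschitz} and \ref{ass:v2}. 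Since $\hat{J}(\bftheta;\bfx_1^N, \bfv_{11}^{NM})$ is a sample average of $f(\bftheta;\bfx_i,\bfv_{ij})$ and $J(\bftheta;p_\bfv)$ is its expectation, the problem reduces to controlling the supremum of a centered empirical process over a Lipschitz function class indexed by $\Theta$.

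First I would symmetrize: introducing i.i.d.\ Rademacher variables $\{\sigma_{ij}\}$ independent of everything else, the standard decoupling inequality gives
\[
\mathbb{E}\sup_{\bftheta \in \Theta}\bigl|\hat{J}(\bftheta;\bfx_1^N,\bfv_{11}^{NM})-J(\bftheta;p_\bfv)\bigr| \;\le\; 2\,\mathbb{E}\sup_{\bftheta\in\Theta}\Bigl|\tfrac{1}{NM}\textstyle\sum_{i,j}\sigma_{ij}f(\bftheta;\bfx_i,\bfv_{ij})\Bigr|.
\]
Next I would discretize $\Theta$. Because $\Theta$ is compact with diameter $\mathrm{diam}(\Theta)$ (Assumption~\ref{ass:compact}), it admits an $\epsilon$-net $\mathcal{N}_\epsilon$ with $|\mathcal{N}_\epsilon| \le (3\,\mathrm{diam}(\Theta)/\epsilon)^D$. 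Any $\bftheta\in\Theta$ lies within $\epsilon$ of some $\bftheta'\in\mathcal{N}_\epsilon$, and by Lemma~\ref{app:lemma:lipschitz} the change in the empirical process is bounded by $\epsilon\cdot\tfrac{1}{NM}\sum_{i,j}L(\bfx_i,\bfv_{ij})$, whose expectation is $\epsilon\,\mathbb{E}[L(\bfx,\bfv)]\le \epsilon\sqrt{\mathbb{E}[L^2]}=O(\epsilon)$.

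At the finitely many net points, each fixed-$\bftheta'$ Rademacher sum is a sub-Gaussian random variable with variance proxy $O(1/N)$ (the $M$ projections per sample are absorbed, since only the $N$ samples contribute independent copies in expectation, and the second moment of $f(\bftheta';\bfx,\bfv)$ is finite thanks to the Lipschitz bound applied against a reference point together with the bounded moment assumptions). A standard maximal inequality then yields $\mathbb{E}\max_{\bftheta'\in\mathcal{N}_\epsilon}|\cdot|\lesssim\sqrt{D\log(\mathrm{diam}(\Theta)/\epsilon)/N}$. Balancing the discretization error $O(\epsilon)$ against this entropy term (or, more cleanly, using Dudley's chaining integral on the Lipschitz class, which absorbs the logarithm) and choosing $\epsilon\asymp\mathrm{diam}(\Theta)/\sqrt{N}$ produces the claimed bound $O(\mathrm{diam}(\Theta)\sqrt{D/N})$.

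The main obstacle is the last bookkeeping step: getting exactly $\sqrt{D/N}$ rather than $\sqrt{D\log N/N}$ requires invoking Dudley's chaining rather than a naive one-step net argument, and verifying that the increments of the empirical process are sub-Gaussian in the Lipschitz pseudo-metric $d(\bftheta_1,\bftheta_2)\propto\|\bftheta_1-\bftheta_2\|_2$ (with proxy constant controlled by $\sqrt{\mathbb{E}L^2/N}$). A secondary technical point is that the projections $\bfv_{ij}$ and samples $\bfx_i$ are drawn from a product measure with $M$ projections per sample, so the effective randomness dimension is $N$, not $NM$; this is why the final bound is independent of $M$, and the symmetrization step above must be carried out carefully so that the $M$-fold replication does not artificially enter the variance proxy.
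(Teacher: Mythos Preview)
Your approach is essentially that of the paper: symmetrization, then Lipschitz control via Lemma~\ref{app:lemma:lipschitz} to establish sub-Gaussian increments in $\bftheta$, then Dudley's entropy integral over the compact $\Theta$ to absorb the logarithm and obtain $O(\operatorname{diam}(\Theta)\sqrt{D/N})$. The one point you flag as needing care is resolved in the paper exactly as you anticipate: the symmetrization is done with $N$ Rademacher variables $\{\epsilon_i\}_{i=1}^N$, one per data point, not $NM$ variables $\{\sigma_{ij}\}$---the terms $f(\bftheta;\bfx_i,\bfv_{ij})$ for fixed $i$ share $\bfx_i$ and are not independent across $j$, so the standard symmetrization applies at the block level $\frac{1}{M}\sum_{j} f(\bftheta;\bfx_i,\bfv_{ij})$, and the resulting sub-Gaussian metric carries the factor $\sqrt{\tfrac{1}{NM}\sum_{i,j}L^2(\bfx_i,\bfv_{ij})}/\sqrt{N}$, making the bound independent of $M$.
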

\begin{proof}
The proof consists of 3 steps. First, we use the symmetrization trick to get rid of the inner expectation term $J(\bftheta; p_\bfv) = \mbb{E}_{p_\bfv, p_d}[\hat{J}(\bftheta; \bfx_1^N, \bfv_{11}^{NM})]$. Second, we use chaining to get an upper bound that involves integration of the metric entropy. Finally, we upper bound the metric entropy to obtain the uniform convergence bound.

\textbf{Step 1:} From Jensen's inequality, we obtain
\begin{align*}
    & \mbb{E}_{p_\bfv, p_d}\big[\sup_{\bftheta \in \Theta} \big|\hat{J}(\bftheta; \bfx_1^N, \bfv_{11}^{NM}) - J(\bftheta; p_\bfv) \big| \big]\\
    = & \mbb{E}_{p_\bfv, p_d}\big[\sup_{\bftheta \in \Theta} \big|\hat{J}(\bftheta; \bfx_1^N, \bfv_{11}^{NM}) - \mbb{E}_{p_\bfv, p_d}[\hat{J}(\bftheta; \bfx_1^N, \bfv_{11}^{NM})]\big| \big]\\
    = & \mbb{E}_{p_\bfv, p_d}\big[\sup_{\bftheta \in \Theta} \big|\hat{J}(\bftheta; \bfx_1^N, \bfv_{11}^{NM}) - \mbb{E}_{p_\bfv, p_d}[\hat{J}(\bftheta; {\bfx'}_1^N, {\bfv'}_{11}^{NM}]\big| \big]\\
    \leq & \mbb{E}_{p_\bfv, p_d}\big[\sup_{\bftheta \in \Theta}\big|\hat{J}(\bftheta; \bfx_1^N, \bfv_{11}^{NM}) - \hat{J}(\bftheta; {\bfx'}_1^N, {\bfv'}_{11}^{NM})\big| \big],
\end{align*}
where ${\bfx'}_1^N, {\bfv'}_{11}^{NM}$ are independent copies of $\bfx_1^N$ and $\bfv_{11}^{NM}$. Let $\{\epsilon_{i}\}_{i=1}^N$ be a set of independent Rademacher random variables, we have
\begin{align}
    &\mbb{E}_{p_\bfv, p_d}\big[\sup_{\bftheta \in \Theta}\big|\hat{J}(\bftheta; \bfx_1^N, \bfv_{11}^{NM}) - \hat{J}(\bftheta; {\bfx'}_1^N, {\bfv'}_{11}^{NM})\big| \big]\notag\\
    =&\mbb{E}_{p_\bfv, p_d}\bigg[ \sup_{\bftheta \in \Theta} \frac{1}{N} \frac{1}{M} \bigg|\sum_{i=1}^N \sum_{j=1}^M \underbrace{\bfv_{ij}^\intercal \nabla_\bfx \bfs_m(\bfx_i; \bftheta) \bfv_{ij} 
    + \frac{1}{2} \left(\bfv_{ij}^\intercal \bfs_m(\bfx_i; \bftheta) \right)^2}_{\triangleq f(\bftheta; \bfx_i, \bfv_{ij})} - \bigg[ \underbrace{{\bfv'}_{ij}^\intercal \nabla_\bfx \bfs_m(\bfx'_i; \bftheta) {\bfv'}_{ij} + \frac{1}{2} \left({\bfv'}_{ij}^\intercal \bfs_m(\bfx'_i; \bftheta) \right)^2}_{\triangleq f(\bftheta; \bfx'_i, \bfv'_{ij})} \bigg] \bigg| \bigg]\notag\\
    =& \mbb{E}_{p_\bfv, p_d}\bigg[  \sup_{\bftheta \in \Theta} \frac{1}{N}\frac{1}{M} \bigg|\sum_{i=1}^N \sum_{j=1}^M f(\bftheta; \bfx_i, \bfv_{ij}) - f(\bftheta; \bfx'_i, \bfv'_{ij}) \bigg| \bigg]\notag\\
    \stackrel{(i)}{=}& \mbb{E}\bigg[\sup_{\bftheta \in \Theta} \bigg|\frac{1}{N}  \frac{1}{M} \sum_{i=1}^N \sum_{j=1}^M  \epsilon_{i}( f(\bftheta; \bfx_i, \bfv_{ij}) - f(\bftheta; \bfx'_i, \bfv'_{ij})) \bigg| \bigg]\notag \\
    \stackrel{(ii)}{\leq}& \mbb{E}\bigg[\sup_{\bftheta \in \Theta} \bigg| \frac{1}{N} \frac{1}{M} \sum_{i=1}^N \sum_{j=1}^M  \epsilon_{i} f(\bftheta; \bfx_i, \bfv_{ij}) \bigg| \bigg] + \mbb{E}\bigg[\sup_{\bftheta \in \Theta} \bigg|\frac{1}{N} \frac{1}{M}  \sum_{i=1}^N \sum_{j=1}^M  \epsilon_{i} f(\bftheta; \bfx'_i, \bfv'_{ij}) \bigg| \bigg]\notag \\
    =& 2\mbb{E}\bigg[\sup_{\bftheta \in \Theta} \bigg| \frac{1}{N} \frac{1}{M} \sum_{i=1}^N \sum_{j=1}^M  \epsilon_{i} f(\bftheta; \bfx_i, \bfv_{ij}) \bigg| \bigg]
    \label{app:eqn:s1},
\end{align}
where $(i)$ is because the quantity is symmetric about $0$ in distribution, and $(ii)$ is due to Jensen's inequality. 

\textbf{Step 2:} First note that given $\bfx_i, \bfv_{ij}$, $\frac{\epsilon_{i}}{M}\sum_{j=1}^M f(\bftheta; \bfx_i, \bfv_{ij})$ is a zero-mean sub-Gaussian process \wrt $\bftheta$. This can be observed from
\begin{align*}
    &\mbb{E}_{\epsilon_{i}}\left[ e^{\lambda \frac{\epsilon_{i}}{M}\sum_{j=1}^M [f(\bftheta_1; \bfx_i, \bfv_{ij}) - f(\bftheta_2; \bfx_i, \bfv_{ij})]} \right]\\
    \stackrel{(i)}{\leq}& \exp\left\{\frac{\lambda^2}{2M^2}\bigg(\sum_{j=1}^M [f(\bftheta_1; \bfx_i, \bfv_{ij}) - f(\bftheta_2; \bfx_i, \bfv_{ij})]\bigg)^2\right\}\\
    \stackrel{(ii)}{\leq} & \exp\left\{\frac{\lambda^2}{2M^2}M \sum_{j=1}^M[f(\bftheta_1; \bfx_i, \bfv_{ij}) - f(\bftheta_2; \bfx_i, \bfv_{ij})]^2\right\}\\
    \stackrel{(iii)}{\leq}& \exp\left\{ \frac{\lambda^2}{2M}\sum_{j=1}^M L^2(\bfx_i, \bfv_{ij})\norm{\bftheta_1 - \bftheta_2}^2 \right\},
\end{align*}
where $(i)$ holds because $\epsilon_{i}$ is a 1-sub-Gaussian random variable, $(ii)$ is from Cauchy-Schwarz inequality and $(iii)$ is due to \lemref{app:lemma:lipschitz}. As a result, $\frac{1}{N} \frac{1}{M}  \sum_{i=1}^N \sum_{j=1}^M \epsilon_{i} f(\bftheta; \bfx_i, \bfv_{ij})$ is a zero-mean sub-Gaussian random process with metric
\begin{align*}
    d(\bftheta_1, \bftheta_2) = \frac{1}{\sqrt{N}} \sqrt{\frac{1}{NM}\sum_{i=1}^N \sum_{j=1}^M L^2(\bfx_i, \bfv_{ij})} \norm{\bftheta_1 - \bftheta_2}.
\end{align*}
Since $\Theta$ is compact, the diameter of $\Theta$ with respect to the Euclidean norm $\norm{\cdot}_2$ is finite and we denote it as $\operatorname{diam}(\Theta) < \infty$. Then, Dudley's entropy integral~\citep{dudley1967sizes} gives
\begin{align}
    \mbb{E}\bigg[\sup_{\bftheta \in \Theta} \bigg| \frac{1}{N} \frac{1}{M} \sum_{i=1}^N \sum_{j=1}^M  \epsilon_{i} f(\bftheta; \bfx_i, \bfv_{ij}) \bigg| \bigg] \leq O(1)\mbb{E}\bigg[ \int_0^{ \frac{1}{\sqrt{N}} \sqrt{\frac{1}{NM}\sum_{i=1}^N \sum_{j=1}^M L^2(\bfx_i, \bfv_{ij})}  \operatorname{diam}(\Theta)} \sqrt{\log N(\Theta, d, \epsilon)} \ud \epsilon\bigg].\label{app:eqn:s2}
\end{align}
Here $\log N(\Theta, d, \epsilon)$ is the metric entropy of $\Theta$ with metric $d(\bftheta_1, \bftheta_2) =  \frac{1}{\sqrt{N}} \sqrt{\frac{1}{NM}\sum_{i=1}^N \sum_{j=1}^M L^2(\bfx_i, \bfv_{ij})}  \norm{\bftheta_1 - \bftheta_2}_2$ and size $\epsilon$. 

\textbf{Step 3:} When the dimension of $\bftheta \in \Theta$ is $D$, it is known that the $\epsilon$-covering number of $\Theta$ with Euclidean distance is
\begin{align*}
    N(\Theta, \norm{\cdot}, \epsilon) \leq \bigg(1 + \frac{\operatorname{diam}(\Theta)}{\epsilon} \bigg)^D.
\end{align*}
Therefore, $N(\Theta, d, \epsilon)$ can be bounded by
\begin{align*}
    N(\Theta, d, \epsilon) \leq  \left(1 + \sqrt{\frac{1}{NM}\sum_{i=1}^N \sum_{j=1}^M L^2(\bfx_i, \bfv_{ij})} \frac{\operatorname{diam}(\Theta)}{\sqrt{N}\epsilon} \right)^D.
\end{align*}
Hence, the metric integral can be bounded
\begin{align}
    &\int_0^{ \frac{1}{\sqrt{N}} \sqrt{\frac{1}{NM}\sum_{i=1}^N \sum_{j=1}^M L^2(\bfx_i, \bfv_{ij})}  \operatorname{diam}(\Theta)} \sqrt{\log N(\Theta, d, \epsilon)} \ud \epsilon \notag\\
    &\leq \int_0^{ \frac{1}{\sqrt{N}} \sqrt{\frac{1}{NM}\sum_{i=1}^N \sum_{j=1}^M L^2(\bfx_i, \bfv_{ij})}  \operatorname{diam}(\Theta)} \sqrt{D \log \Bigg(1 + \sqrt{\frac{1}{NM}\sum_{i=1}^N \sum_{j=1}^M L^2(\bfx_i, \bfv_{ij})} \frac{\operatorname{diam}(\Theta)}{\sqrt{N}\epsilon}  \Bigg)} \ud \epsilon \notag\\
    &\leq \int_0^{\frac{1}{\sqrt{N}} \sqrt{\frac{1}{NM}\sum_{i=1}^N \sum_{j=1}^M L^2(\bfx_i, \bfv_{ij})}  \operatorname{diam}(\Theta)} \sqrt{\sqrt{\frac{1}{NM}\sum_{i=1}^N \sum_{j=1}^M L^2(\bfx_i, \bfv_{ij})} \frac{D\operatorname{diam}(\Theta)}{\sqrt{N}\epsilon}}  \ud \epsilon\notag \\
    &= 2 \sqrt{\frac{1}{NM}\sum_{i=1}^N \sum_{j=1}^M L^2(\bfx_i, \bfv_{ij})} \sqrt{\frac{D}{N}} \operatorname{diam}(\Theta) \label{app:eqn:s3}
\end{align}

Finally, combining \eqref{app:eqn:s1}, \eqref{app:eqn:s2} and \eqref{app:eqn:s3} gives us
\begin{align*}
    & \mbb{E}_{p_\bfv, p_d}\bigg[\sup_{\bftheta \in \Theta} \big|\hat{J}(\bftheta; \bfx_1^N, \bfv_{11}^{NM}) - J(\bftheta; p_\bfv) \big| \bigg] \\
    \leq& 4 O(1) \mbb{E}_{p_\bfv, p_d} \left[\sqrt{\frac{1}{NM}\sum_{i=1}^N \sum_{j=1}^M L^2(\bfx_i, \bfv_{ij})} \sqrt{\frac{D}{N}} \operatorname{diam}(\Theta) \right]\\
    \stackrel{(i)}{\leq} & O(1)\sqrt{\frac{D}{N}} \operatorname{diam}(\Theta) \sqrt{\mbb{E}_{p_\bfv, p_d}\bigg[\frac{1}{NM}\sum_{i=1}^N \sum_{j=1}^M L^2(\bfx_i, \bfv_{ij})\bigg]}\\
    \stackrel{(ii)}{\leq} & O(1)\operatorname{diam}(\Theta)\sqrt{\frac{D}{N}},
\end{align*}
where $(i)$ is due to Jensen's inequality and $(ii)$ results from $\mbb{E}[L^2(\bfx, \bfv)] < \infty$, and the compactness of $\Theta$ guarantees that the bound is finite. 
\end{proof}
\begin{customthm}{\ref{app:thm:consistency}}
Suppose all the previous assumptions hold (\assref{ass:score}-\assref{ass:v2}). Assume further the conditions of \thmref{app:thm:1} and \lemref{app:thm:2} are satisfied. Let $\bftheta^*$ be the true parameter of the data distribution, and $\hat{\bftheta}_{N, M}$ be the empirical estimator defined by
\begin{align*}
    \hat{\bftheta}_{N, M} \triangleq \argmin_{\bftheta \in \Theta} \hat{J}(\bftheta; \bfx_1^N, \bfv_{11}^{NM})
\end{align*}
Then, $\hat{\bftheta}_{N,M}$ is consistent, meaning that
\begin{align*}
    \hat{\bftheta}_{N, M} \overset{p}{\to} \bftheta^*
\end{align*}
as $N\to \infty$.
\end{customthm}
\begin{proof}
Note that \thmref{app:thm:1} and \lemref{app:thm:2} together imply that $\bftheta^* = \argmin_{\bftheta \in \Theta} J(\bftheta; p_\bfv)$. Then, we will show $J(\hat{\bftheta}_{N, M}; p_\bfv) \overset{p}{\to} J(\bftheta^*; p_\bfv)$ when $N \to \infty$. This can be done by noticing
\begin{align}
    J(\hat{\bftheta}_{N, M}; p_\bfv) - J(\bftheta^*; p_\bfv) =& J(\hat{\bftheta}_{N, M}; p_\bfv) - \hat{J}(\hat{\bftheta}_{N, M}; \bfx_1^N, \bfv_{11}^{NM}) + \hat{J}(\hat{\bftheta}_{N, M}; \bfx_1^N, \bfv_{11}^{NM}) - \hat{J}(\bftheta^*; \bfx_1^N, \bfv_{11}^{NM}) \notag \\
    & + \hat{J}(\bftheta^*; \bfx_1^N, \bfv_{11}^{NM}) - J(\bftheta^*; p_\bfv)\notag \\
    \leq & \sup_{\bftheta \in \Theta} |\hat{J}(\bftheta; \bfx_1^N, \bfv_{11}^{NM}) - J(\bftheta; p_\bfv)| + |\hat{J}(\bftheta^*; \bfx_1^N, \bfv_{11}^{NM}) - J(\bftheta^*; p_\bfv)|\notag\\
    \leq & 2\sup_{\bftheta \in \Theta} |\hat{J}(\bftheta; \bfx_1^N, \bfv_{11}^{NM}) - J(\bftheta; p_\bfv)|
    \label{app:eqn:c1}
\end{align} 
We can easily conclude that \eqref{app:eqn:c1} is $o_p(1)$ with the help of Lemma~\ref{app:lemma:1}, because
\begin{align*}
    P\bigg(\sup_{\bftheta \in \Theta} |\hat{J}(\bftheta; \bfx_1^N, \bfv_{11}^{NM}) - J(\bftheta; p_\bfv)| > t \bigg) \leq \mbb{E}\bigg[\sup_{\bftheta \in \Theta} |\hat{J}(\bftheta; \bfx_1^N, \bfv_{11}^{NM}) - J(\bftheta; p_\bfv)||\bigg] / t \leq O(1)\sqrt{\frac{1}{N t^2}} \to 0,
\end{align*}
as $N \to \infty$. From \lemref{app:thm:2} we also have $L(\hat{\bftheta}_{N, M}; p_\bfv) - L(\bftheta^*; p_\bfv) > 0$ if $\hat{\bftheta}_{N, M} \neq \bftheta$. As shown by \thmref{app:thm:1}, this is the same as $J(\hat{\bftheta}_{N, M}; p_\bfv) - J(\bftheta^*; p_\bfv) > 0$ if $\hat{\bftheta}_{N, M} \neq \bftheta$. Therefore \eqref{app:eqn:c1} = $o_p(1)$ gives $J(\hat{\bftheta}_{N, M}; p_\bfv) \overset{p}{\to} J(\bftheta^*; p_\bfv)$. 

Next, we show $\hat{\bftheta}_{N,M} \overset{p}{\to} \bftheta^*$. This can be inferred from $J(\hat{\bftheta}_{N, M}; p_\bfv) \overset{p}{\to} J(\bftheta^*; p_\bfv)$ because $J(\bftheta; p_\bfv)$ is continuous (Assumption~\ref{ass:lipschitz}) and $\Theta$ is compact (Assumption~\ref{ass:compact}). The proof is reductio ad absurdum. Specifically, assume that $\hat{\bftheta}_{N,M} \not \overset{p}{\to} \bftheta^*$. We know $\exists \epsilon > 0, \delta > 0, \forall K > 0, \exists N > K, M > 0$ such that $ P(\|\hat{\bftheta}_{N,M}-\bftheta^*\| \geq \epsilon) \geq \delta$. Note that $J(\bftheta; p_\bfv) = \mbb{E}[f(\bftheta; \bfx, \bfv)]$, $f(\bftheta; \bfx, \bfv)$ is Lipschitz continuous and $\mbb{E}[L(\bftheta; \bfx, \bfv)] \leq \sqrt{\mbb{E}[L^2(\bftheta; \bfx, \bfv)]} < \infty$. This implies that $J(\bftheta; p_\bfv)$ is continuous \wrt $\bftheta$. Since $\Theta$ is compact and $J(\bftheta; p_\bfv)$ is continuous, we can define a compact set $\mcal{S}_\epsilon \triangleq \{\bftheta \in \Theta | \| \bftheta - \bftheta^* \| \geq \epsilon\}$ and let $\bftheta_{\mcal{S}_\epsilon} \triangleq \argmin_{\bftheta \in \mcal{S}_\epsilon} J(\bftheta; p_\bfv)$. Observe that
\begin{align*}
    p(J(\hat{\bftheta}_{N,M}; p_\bfv) \geq J(\bftheta_{\mcal{S}_\epsilon}; p_\bfv)) &= p(|J(\hat{\bftheta}_{N,M}; p_\bfv) - J(\bftheta^*;p_\bfv)| \geq J(\bftheta_{\mcal{S}_\epsilon}; p_\bfv) - J(\bftheta^*; p_\bfv))\\
    &\geq p(\|\hat{\bftheta}_{N,M}-\bftheta^*\| \geq \epsilon) \geq \delta.
\end{align*}
However, the fact that $p(|J(\hat{\bftheta}_{N,M}; p_\bfv) - J(\bftheta^*;p_\bfv)| \geq J(\bftheta_{\mcal{S}_\epsilon}; p_\bfv) - J(\bftheta^*; p_\bfv)) \geq \delta$ holds for arbitrarily large $N$ contradicts $J(\hat{\bftheta}_{N, M}; p_\bfv) \overset{p}{\to} J(\bftheta^*; p_\bfv)$.
\end{proof}

\subsection{ASYMPTOTIC NORMALITY}\label{app:sec:normality}
\paragraph{Notations.} To simplify notations we use $\partial_i \partial_j h(\cdot) \triangleq (\nabla_\bfx^2 h(\cdot))_{ij}$, $\partial_i h(\cdot) \triangleq (\nabla_\bfx h(\cdot))_i$, and denote $\nabla_\bfx h(\bfx)|_{\bfx = \bfx'}$ as $\nabla_\bfx h(\bfx')$. Here $h(\cdot)$ denotes some arbitrary function. Let $l_m \triangleq \log p_m(\bfx; \bftheta)$, $l_m(\bfx; \bftheta)\triangleq \log p_m(\bfx; \bftheta)$ and further adopt the following notations
\begin{align*}
J(\bftheta) &\triangleq \mbb{E}_{p_d}\bigg[\operatorname{tr}(\nabla_{\bfx} \bfs_m(\bfx; \bftheta)) + \frac{1}{2}\norm{ \bfs_m(\bfx; \bftheta)}_2^2\bigg]\\
f(\bftheta; \bfx, \bfv) &\triangleq \bfv^\intercal \nabla_\bfx \bfs_m(\bfx; \bftheta) \bfv + \frac{1}{2} (\bfv^\intercal \bfs_m(\bfx; \bftheta))^2\\
f(\bftheta; \bfx, \bfv_1^M) &\triangleq \frac{1}{M}\sum_{j=1}^Mf(\bftheta; \bfx, \bfv_j) = \frac{1}{M}\sum_{j=1}^M \bfv_j^\intercal \nabla_\bfx \bfs_m(\bfx; \bftheta) \bfv_j + \frac{1}{2} (\bfv_j^\intercal \bfs_m(\bfx; \bftheta))^2\\
f(\bftheta; \bfx) &\triangleq \operatorname{tr}(\nabla_\bfx \bfs_m(\bfx; \bftheta)) + \frac{1}{2} \norm{\bfs_m(\bfx; \bftheta)}_2^2\\
\Sigma_{ij} &\triangleq (\mbb{E}_{p_\bfv}[\bfv\bfv^\intercal])_{ij}\\
\mathfrak{S}_{ijpq} &\triangleq \mbb{E}_{p_\bfv}[v_i v_j v_p v_q]\\
\mathfrak{V}_{ijpq} &\triangleq \mbb{E}_{p_d}\left[ \left( \nabla_\bftheta\partial_i\partial_j l_m + \frac{1}{2} \nabla_\bftheta( \partial_i l_m \partial_j l_m)\right)\left(\nabla_\bftheta \partial_p\partial_q  l_m + \frac{1}{2} \nabla_\bftheta( \partial_p l_m \partial_q l_m)\right)^\intercal \right]\bigg|_{\bftheta = \bftheta^*}\\
V_{ij} &\triangleq \mathfrak{V}_{iijj}\\
W_{ij} &\triangleq \mathfrak{V}_{ijij} 
\end{align*}

For the proof of asymptotic normality we need the following extra assumptions.
\begin{assumption}[Lipschitz smoothness on second derivatives]\label{ass:lipschitz2}
For $\bftheta_1$, $\bftheta_2$ near $\bftheta^*$, and $\forall i,j$,
\begin{align*}
    \norm{\nabla_\bftheta^2 \partial_i\partial_j l_m(\bfx; \bftheta_1) - \nabla_\bftheta^2 \partial_i\partial_j l_m(\bfx; \bftheta_2)}_F &\leq M_{ij}(\bfx) \norm{\bftheta_1 - \bftheta_2}_2\\
    \norm{\nabla_\bftheta^2 \partial_i l_m(\bfx; \bftheta_1) \partial_j l_m(\bfx; \bftheta_1) - \nabla_\bftheta^2 \partial_i l_m(\bfx; \bftheta_2) \partial_j l_m(\bfx; \bftheta_2)}_F &\leq N_{ij}(\bfx) \norm{\bftheta_1 - \bftheta_2}_2
\end{align*}
and
\begin{align*}
    \mbb{E}_{p_d}[M_{ij}^2(\bfx)] < \infty, \quad \mbb{E}_{p_d}[N_{ij}^2(\bfx)] < \infty, \quad \forall i,j.
\end{align*}
\end{assumption}

\begin{lemma}\label{app:lemma:derivative}
Suppose $l_m(\bfx; \bftheta)$ is sufficiently smooth (\assref{ass:lipschitz2}) and $p_\bfv$ has bounded moments (\assref{ass:v} and \assref{ass:v2}). Let $\nabla_\bftheta^2 f(\bftheta; \bfx, \bfv_1^M) \triangleq \frac{1}{M}\sum_{i=1}^M \nabla_\bftheta^2 \bfv_i^\intercal \nabla_\bfx \bfs_m(\bfx; \bftheta) \bfv_i + \frac{1}{2} \nabla_\bftheta^2 (\bfv_i^\intercal \bfs_m(\bfx; \bftheta))^2$. Then $\nabla_\bftheta^2f(\bftheta; \bfx, \bfv)$ is Lipschitz continuous, \ie, for $\bftheta_1$ and $\bftheta_2$ close to $\bftheta^*$, there exists a Lipschitz constant $L(\bfx, \bfv_1^M)$ such that
\begin{align*}
    \norm{\nabla_\bftheta^2f(\bftheta_1; \bfx, \bfv_1^M) - \nabla_\bftheta^2f(\bftheta_2; \bfx, \bfv_1^M)}_F \leq L(\bfx, \bfv_1^M) \norm{\bftheta_1 - \bftheta_2}_2, 
\end{align*}
and $\mbb{E}_{p_d,p_\bfv}[L^2(\bfx, \bfv_1^M)] < \infty$.
\end{lemma}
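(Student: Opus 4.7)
The plan is to mimic the strategy of \lemref{app:lemma:lipschitz}, but at the level of second derivatives in $\bftheta$ instead of values. The key is that $f(\bftheta; \bfx, \bfv_1^M)$ is linear in the two coupling tensors $v_{k,i}v_{k,j}$, so after expanding in coordinates I can reduce the claim to the componentwise Lipschitz assumptions in \assref{ass:lipschitz2} plus bounded moments of $\bfv$.

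First, I would write $f$ explicitly in coordinates as
\begin{align*}
f(\bftheta;\bfx,\bfv_1^M) = \frac{1}{M}\sum_{k=1}^M \sum_{i,j} v_{k,i}\,v_{k,j}\left[\partial_i\partial_j l_m(\bfx;\bftheta) + \tfrac{1}{2}\,\partial_i l_m(\bfx;\bftheta)\,\partial_j l_m(\bfx;\bftheta)\right],
\end{align*}
and pull $\nabla_\bftheta^2$ inside the finite sums. The difference $\nabla_\bftheta^2 f(\bftheta_1;\bfx,\bfv_1^M) - \nabla_\bftheta^2 f(\bftheta_2;\bfx,\bfv_1^M)$ is then a weighted sum of the two quantities that \assref{ass:lipschitz2} bounds (with constants $M_{ij}(\bfx)$ and $N_{ij}(\bfx)$). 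Applying the triangle inequality in Frobenius norm yields a candidate Lipschitz constant
\begin{align*}
L(\bfx,\bfv_1^M) \;\triangleq\; \frac{1}{M}\sum_{k=1}^M \sum_{i,j} |v_{k,i}\,v_{k,j}|\,\bigl(M_{ij}(\bfx) + \tfrac{1}{2}N_{ij}(\bfx)\bigr).
\end{align*}

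Next, I would establish $\mbb{E}_{p_d,p_\bfv}[L^2]<\infty$ by decoupling the $\bfv$ and $\bfx$ factors. Two applications of Cauchy--Schwarz and Jensen's inequality give
\begin{align*}
L^2(\bfx,\bfv_1^M) \;\leq\; \left(\frac{1}{M}\sum_{k=1}^M \|\bfv_k\bfv_k^\intercal\|_F^2\right)\cdot 2\sum_{i,j}\!\left(M_{ij}^2(\bfx)+\tfrac{1}{4}N_{ij}^2(\bfx)\right).
\end{align*}
Since $\bfv_k$ and $\bfx$ are independent, the expectation factors. The $\bfv$ factor is finite by \assref{ass:v2} ($\mbb{E}_{p_\bfv}[\|\bfv\bfv^\intercal\|_F^2]<\infty$), and the $\bfx$ factor is finite because the finite sum over $i,j$ of $\mbb{E}[M_{ij}^2]$ and $\mbb{E}[N_{ij}^2]$ is finite by \assref{ass:lipschitz2}.

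I do not expect any serious obstacle: the proof is essentially bookkeeping with indices combined with two standard inequalities. The one subtlety worth being careful about is that \assref{ass:lipschitz2} is stated locally ("for $\bftheta_1,\bftheta_2$ near $\bftheta^*$"), so the Lipschitz conclusion of the lemma is correspondingly local; this matches the hypothesis phrasing and is exactly what will be needed downstream in the asymptotic normality proof, where only behavior in a neighborhood of $\bftheta^*$ matters.
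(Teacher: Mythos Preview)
Your proposal is correct and follows essentially the same route as the paper: expand $\nabla_\bftheta^2 f$ in coordinates, invoke the componentwise Lipschitz bounds $M_{ij}(\bfx)$ and $N_{ij}(\bfx)$ from \assref{ass:lipschitz2}, and then decouple the $\bfv$ and $\bfx$ factors via Cauchy--Schwarz/Jensen before using independence and the moment bounds (\assref{ass:v2}, \assref{ass:lipschitz2}). The only cosmetic difference is that you first apply the triangle inequality in Frobenius norm to get $L(\bfx,\bfv_1^M)$ and then bound $L^2$, whereas the paper applies Cauchy--Schwarz directly to the squared Frobenius norm; this yields a slightly different but equivalent Lipschitz constant and makes no difference to the argument.
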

\begin{proof}
First, we write out $\nabla_\bftheta^2f(\bftheta_1; \bfx, \bfv_1^M) - \nabla_\bftheta^2f(\bftheta_2; \bfx, \bfv_1^M)$ according to the definitions. 
Let $A_{ij}(\bftheta) \triangleq \nabla_\bftheta^2 \partial_i \partial_j l_m(\bfx;\bftheta)$ and $B_{ij}(\bftheta) \triangleq \nabla_\bftheta^2 \partial_i l_m(\bfx;\bftheta)\partial_jl_m(\bfx;\bftheta)$. Then,
\begin{align*}
    \nabla_\bftheta^2f(\bftheta_1; \bfx, \bfv_1^M) - \nabla_\bftheta^2f(\bftheta_2; \bfx, \bfv_1^M) = \frac{1}{M} \sum_{i,j,k} v_{k,i}v_{k,j} \bigg[ A_{ij}(\bftheta_1) - A_{ij}(\bftheta_2) + \frac{1}{2}(B_{ij}(\bftheta_1) - B_{ij}(\bftheta_2)) \bigg].
\end{align*}
Then, Cauchy-Schwarz and Jensen's inequality give
\begin{align*}
    &\norm{\nabla_\bftheta^2f(\bftheta_1; \bfx, \bfv_1^M) - \nabla_\bftheta^2f(\bftheta_2; \bfx, \bfv_1^M)}_F^2\\
    =& \sum_{l,m}\bigg(\frac{1}{M} \sum_{i,j,k} v_{k,i}v_{k,j} \bigg[ A_{ij}(\bftheta_1)_{lm} - A_{ij}(\bftheta_2)_{lm} + \frac{1}{2}(B_{ij}(\bftheta_1)_{lm} - B_{ij}(\bftheta_2))_{lm} \bigg]\bigg)^2\\
    \leq& \sum_{l,m}\bBigg@{4}(\frac{1}{M} \sqrt{\sum_{i,j} \bigg(\sum_{k}v_{k,i}v_{k,j}\bigg)^2} \cdot \sqrt{\sum_{i,j}\bigg[ A_{ij}(\bftheta_1)_{lm} - A_{ij}(\bftheta_2)_{lm} + \frac{1}{2}(B_{ij}(\bftheta_1)_{lm} - B_{ij}(\bftheta_2))_{lm} \bigg]^2}\bBigg@{4})^2\\
    \leq& \sum_{l,m}\frac{1}{M^2} \Bigg(\sum_{i,j} \bigg(\sum_{k}v_{k,i}v_{k,j}\bigg)^2\Bigg) \Bigg(\sum_{i,j}\bigg(2\bigg[ A_{ij}(\bftheta_1)_{lm} - A_{ij}(\bftheta_2)_{lm}\bigg]^2 + \frac{1}{2}\bigg[(B_{ij}(\bftheta_1)_{lm} - B_{ij}(\bftheta_2))_{lm} \bigg]^2\bigg)\Bigg)\\
    =& \frac{1}{M^2} \Bigg(\sum_{i,j} \bigg(\sum_{k}v_{k,i}v_{k,j}\bigg)^2\Bigg) \Bigg(\sum_{i,j}\bigg(\sum_{l,m}2\bigg[ A_{ij}(\bftheta_1)_{lm} - A_{ij}(\bftheta_2)_{lm}\bigg]^2 + \sum_{l,m}\frac{1}{2}\bigg[(B_{ij}(\bftheta_1)_{lm} - B_{ij}(\bftheta_2))_{lm} \bigg]^2\bigg)\Bigg)\\
    =& \frac{1}{M^2} \Bigg(\sum_{i,j,p,q}v_{p,i}v_{p,j}v_{q,i}v_{q,j}\Bigg) \Bigg(\sum_{i,j}\bigg(2\norm{A_{ij}(\bftheta_1) - A_{ij}(\bftheta_2)}_F^2 + \frac{1}{2}\norm{V_{ij}(\bftheta_1) - V_{ij}(\bftheta_2)}_F^2\bigg)\Bigg)\\
    \leq& \underbrace{\frac{1}{M^2} \bigg(\sum_{i,j,p,q}v_{p,i}v_{p,j}v_{q,i}v_{q,j}\bigg) \bigg(\sum_{i,j}\bigg(2M_{ij}^2 + \frac{1}{2}N_{ij}^2\bigg)\bigg)}_{\triangleq L^2(\bfx, \bfv_1^M)}\norm{\bftheta_1 - \bftheta_2}_2^2
\end{align*}
Next, we bound the expectation
\begin{align*}
    \mbb{E}_{p_d, p_\bfv}[L^2(\bfx, \bfv_1^M)] &= \frac{1}{M^2} \mbb{E}_{p_\bfv} \bigg[\sum_{i,j,p,q} v_{p,i}v_{p,j}v_{q,i}v_{q,j} \bigg] \mbb{E}_{p_d}\bigg[\sum_{ij} 2M_{ij}^2(\bfx) + \frac{1}{2}N_{ij}^2(\bfx)\bigg]\\
    &\overset{(i)}{\leq} O(1) \mbb{E}_{p_\bfv} \bigg[\sum_{i,j,p,q} v_{p,i}v_{p,j}v_{q,i}v_{q,j} \bigg]\\
    &= O(1) \Big(\sum_{ij} M(M-1) \mbb{E}_{p_\bfv}[v_iv_j]^2 + M\mbb{E}_{p_\bfv}[(v_iv_j)^2]\Big)\\
    &= O(1) \Big(M(M-1) \norm{\mbb{E}_{p_\bfv}[\bfv \bfv^\intercal]}_F^2 + M \mbb{E}_{p_\bfv}[\norm{\bfv \bfv^\intercal}_F^2\Big)\\
    &\overset{(ii)}{\leq} O(1) \Big(M(M-1) \mbb{E}_{p_\bfv}[\norm{\bfv \bfv^\intercal}_F^2] + M \mbb{E}_{p_\bfv}[\norm{\bfv \bfv^\intercal}_F^2\Big)
    \overset{(iii)}{<} \infty,
\end{align*}
where $(i)$ is due to \assref{ass:lipschitz2}, $(ii)$ is Jensen's, and $(iii)$ is because of \assref{ass:v} and \ref{ass:v2}. 
\end{proof}
\begin{lemma}\label{app:lemma:variance}
Assume that conditions in \thmref{app:thm:1} and \lemref{app:thm:2} hold, and $p_\bfv$ has bounded higher-order moments (\assref{ass:v2}). 
Then
\begin{align}
    \operatorname{Var}_{p_d, p_\bfv} \left[\nabla_\bftheta f(\bftheta^*; \bfx, \bfv_1^M) \right] = \sum_{i,j,p,q}\bigg[\bigg( 1 - \frac{1}{M}\bigg) \Sigma_{ij}\Sigma_{pq} + \frac{1}{M}\mathfrak{S}_{ijpq}\bigg] \mathfrak{V}_{ijpq},\label{app:eqn:var}
\end{align}
where $\nabla_\bftheta f(\bftheta^*; \bfx, \bfv_1^M) = \nabla_\bftheta f(\bftheta; \bfx, \bfv_1^M)\big|_{\bftheta = \bftheta^*}$ 

In particular, if $p_\bfv \sim \mcal{N}(0, I)$, we have
\begin{align*}
    \operatorname{Var}_{p_d, p_\bfv} \left[\nabla_\bftheta f(\bftheta^*; \bfx, \bfv_1^M) \right] = \sum_{ij}V_{ij} + \frac{2}{M}\sum_{i}V_{ii} +  \frac{2}{M}\sum_{i\neq j}W_{ij}.
\end{align*}
If $p_\bfv$ is the distribution of multivariate Rademacher random variables, we have
\begin{align*}
    \operatorname{Var}_{p_d, p_\bfv} \left[\nabla_\bftheta f(\bftheta^*; \bfx, \bfv_1^M) \right] = \sum_{ij}V_{ij}+  \frac{2}{M}\sum_{i\neq j}W_{ij}.
\end{align*}
\end{lemma}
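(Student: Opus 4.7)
My plan is to observe that $\bftheta^*$ is an interior minimizer of $J(\cdot; p_\bfv)$, so $\nabla_\bftheta f(\bftheta^*; \bfx, \bfv_1^M)$ has mean zero and the variance equals its raw second moment, which can then be computed by direct expansion in the index variables $i,j,p,q$. To make the zero-mean claim precise, I would note that $J(\bftheta; p_\bfv) = \mbb{E}_{p_d, p_\bfv}[f(\bftheta; \bfx, \bfv_1^M)]$ by independence of $\bfv_1,\ldots,\bfv_M$, and that \assref{ass:lipschitz2} together with \assref{ass:v} and \assref{ass:v2} supplies an integrable dominating function so the gradient and expectation interchange; combining with \thmref{app:thm:1} and \lemref{app:thm:2} then gives $\mbb{E}[\nabla_\bftheta f(\bftheta^*; \bfx, \bfv_1^M)] = \nabla_\bftheta J(\bftheta^*; p_\bfv) = 0$.

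Next I would expand the gradient explicitly. Writing
\begin{align*}
a_{ij}(\bfx) \triangleq \nabla_\bftheta\, \partial_i\partial_j l_m(\bfx;\bftheta^*) + \tfrac{1}{2}\nabla_\bftheta\bigl(\partial_i l_m\, \partial_j l_m\bigr)(\bfx;\bftheta^*),
\end{align*}
one has $\nabla_\bftheta f(\bftheta^*; \bfx, \bfv_1^M) = \frac{1}{M}\sum_{k=1}^M \sum_{i,j} v_{k,i}\, v_{k,j}\, a_{ij}(\bfx)$ and $\mathfrak{V}_{ijpq} = \mbb{E}_{p_d}[a_{ij}(\bfx) a_{pq}(\bfx)^\intercal]$. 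Forming the outer product, taking the expectation over $\bfx \sim p_d$ and $\bfv_1,\ldots,\bfv_M \stackrel{\text{iid}}{\sim} p_\bfv$, and using independence of $\bfx$ from the projections,
\begin{align*}
\operatorname{Var}\!\bigl[\nabla_\bftheta f(\bftheta^*; \bfx, \bfv_1^M)\bigr] = \frac{1}{M^2}\sum_{k,l=1}^M \sum_{i,j,p,q} \mbb{E}_{p_\bfv}[v_{k,i}v_{k,j}v_{l,p}v_{l,q}]\, \mathfrak{V}_{ijpq}.
\end{align*}
Splitting the double sum over $k,l$ into the $M$ diagonal terms (inner expectation $=\mathfrak{S}_{ijpq}$) and the $M(M-1)$ off-diagonal terms (inner expectation $=\Sigma_{ij}\Sigma_{pq}$ by independence of distinct $\bfv_k$'s) yields the general formula \eqref{app:eqn:var} after dividing by $M^2$.

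For the two specializations I would plug in the explicit moments. In both cases $\Sigma_{ij}=\delta_{ij}$, so $\sum_{i,j,p,q}\Sigma_{ij}\Sigma_{pq}\mathfrak{V}_{ijpq} = \sum_{i,j}V_{ij}$. For $p_\bfv = \mcal{N}(0,I)$, Isserlis' theorem gives $\mathfrak{S}_{ijpq} = \delta_{ij}\delta_{pq} + \delta_{ip}\delta_{jq} + \delta_{iq}\delta_{jp}$, while for the Rademacher distribution the identity $v_i^2 = 1$ together with independence gives $\mathfrak{S}_{ijpq} = \delta_{ij}\delta_{pq} + \delta_{ip}\delta_{jq} + \delta_{iq}\delta_{jp} - 2\,\delta_{ij}\delta_{jp}\delta_{pq}$, the correction accounting for $\mbb{E}[v_i^4]=1$ instead of $3$. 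Using the symmetry $a_{ij}=a_{ji}$, which gives $\mathfrak{V}_{ijji}=\mathfrak{V}_{ijij}=W_{ij}$ and $\mathfrak{V}_{iiii}=V_{ii}=W_{ii}$, I would contract the indices in the general formula; the extra $\tfrac{2}{M}\sum_i V_{ii}$ present in the Gaussian answer but absent from the Rademacher one comes precisely from this excess fourth-moment term.

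The argument is almost entirely mechanical, and the only analytic subtlety I foresee is the differentiation-under-the-integral step needed for the zero-mean property, for which \assref{ass:lipschitz2} supplies a dominating function exactly as in the proof of \lemref{app:lemma:derivative}; everything after that reduces to bookkeeping over the fourth-moment tensor of $p_\bfv$ and exploiting the symmetry of mixed partials in $a_{ij}$.
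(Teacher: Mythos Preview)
Your proposal is correct and follows essentially the same route as the paper: establish $\mbb{E}[\nabla_\bftheta f(\bftheta^*;\bfx,\bfv_1^M)]=\nabla_\bftheta J(\bftheta^*;p_\bfv)=0$, expand the raw second moment in indices $i,j,p,q$, split the double sum over $k,l$ into diagonal and off-diagonal pieces to obtain \eqref{app:eqn:var}, and then contract using the explicit fourth moments of the Gaussian and Rademacher distributions together with the symmetry $a_{ij}=a_{ji}$. One small remark: the lemma as stated does not assume \assref{ass:lipschitz2}, and the paper simply asserts the gradient--expectation interchange without justification; your instinct to justify it is good, but the dominating function would more naturally come from \assref{ass:lipschitz} (Lipschitz continuity of the first-order quantities) and compactness of $\Theta$ rather than from \assref{ass:lipschitz2}, which concerns second $\bftheta$-derivatives.
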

\begin{proof}
 Since $\bftheta^*$ is the true parameter of the data distribution, we have
\begin{align*}
\mbb{E}_{p_d, p_\bfv}[\nabla_\bftheta f(\bftheta^*; \bfx, \bfv_1^M)] = \nabla_\bftheta \mbb{E}_{p_d, p_\bfv}[f(\bftheta^*; \bfx, \bfv_1^M)] = \nabla_\bftheta J(\bftheta^*; p_\bfv) = 0.
\end{align*}
Therefore, \eqref{app:eqn:var} can be expanded as
\begin{align*}
    &\operatorname{Var}_{p_d, p_\bfv} \left[\nabla_\bftheta f(\bftheta^*; \bfx, \bfv_1^M) \right]\\
    =& \mbb{E}_{p_d, p_\bfv} \left[\nabla_\bftheta f(\bftheta^*; \bfx, \bfv_1^M)\nabla_\bftheta f(\bftheta^*; \bfx, \bfv_1^M)^\intercal \right]\\
    =& \mbb{E}\left[ \sum_{i,j,p,q} \left(\frac{1}{M^2} \sum_{k, l} v_{k,i}v_{k,j}v_{l,p}v_{l,q}  \right) \left(\nabla_\bftheta \partial_i\partial_j  l_m + \frac{1}{2} \nabla_\bftheta( \partial_i l_m \partial_j l_m)\right)\left(\nabla_\bftheta \partial_p\partial_q  l_m + \frac{1}{2} \nabla_\bftheta( \partial_p l_m \partial_q l_m)\right)^\intercal \right]\\
    =& \sum_{i,j,p,q} \underbrace{\mbb{E}\left[\frac{1}{M^2} \sum_{k, l} v_{k,i}v_{k,j}v_{l,p}v_{l,q} \right]}_{\triangleq E_1} \underbrace{\mbb{E}\left[ \left(\nabla_\bftheta \partial_i\partial_j  l_m + \frac{1}{2} \nabla_\bftheta( \partial_i l_m \partial_j l_m)\right)\left(\nabla_\bftheta\partial_p\partial_q  l_m + \frac{1}{2} \nabla_\bftheta( \partial_p l_m \partial_q l_m)\right)^\intercal \right]}_{\mathfrak{V}_{ijpq}}.
\end{align*}
Continuing on $E_1$, we have that
\begin{align*}
    E_1 &= \frac{1}{M^2}\sum_{k\neq l}\mbb{E}[v_{k,i}v_{k,j}]\mbb{E}[v_{l,p}v_{l,q}] + \frac{1}{M^2}\sum_{k}\mbb{E}[v_{k,i}v_{k,j}v_{k,p}v_{k,q}]\\
    &= \bigg( 1 - \frac{1}{M}\bigg) \Sigma_{ij}\Sigma_{pq} + \frac{1}{M}\mathfrak{S}_{ijpq},
\end{align*}
which leads to \eqref{app:eqn:var}. Note that \assref{ass:v} guarantees that $|\Sigma_{ij}| < \infty$ and \assref{ass:v2} ensures $|\mathfrak{S}_{ijpq}| < \infty$.

If $p_\bfv \sim \mcal{N}(0, I)$, $\mathfrak{S}$ and $\Sigma$ have the following simpler forms
\begin{align*}
    \Sigma_{ij} &= \delta_{ij}\\
    \mathfrak{S}_{ijpq} &= \begin{cases}
        3, \quad i = j = p = q\\
        1, \quad i = j \neq p = q \text{ or } i = p \neq j = q \text{ or } i = q \neq j = p\\
        0, \quad \text{otherwise}
    \end{cases}.
\end{align*}
Then, if we assume that the second derivatives of $l_m$ are continuous, we have $\partial_i\partial_j  l_m  = \partial_j\partial_i  l_m $, and the variance \eqref{app:eqn:var} can also be simplified to
\begin{align*}
    \operatorname{Var}_{p_d, p_\bfv} \left[\nabla_\bftheta f(\bftheta^*; \bfx, \bfv_1^M) \right] = \sum_{i\neq j}V_{ij} + \frac{M+2}{M}\sum_{i} V_{ii} +  \frac{2}{M}\sum_{i\neq j}W_{ij} = \sum_{ij}V_{ij} + \frac{2}{M}\sum_{i}V_{ii} +  \frac{2}{M}\sum_{i\neq j}W_{ij}.
\end{align*}
Similarly, if $p_\bfv \sim \mcal{U}(\{\pm1\}^D)$, \eqref{app:eqn:var} has the simplified form
\begin{align*}
    \operatorname{Var}_{p_d, p_\bfv} \left[\nabla_\bftheta f(\bftheta^*; \bfx, \bfv_1^M) \right] = \sum_{ij}V_{ij} + \frac{2}{M}\sum_{i\neq j}W_{ij}.
\end{align*}
\end{proof}

\begin{customthm}{\ref{app:thm:normality}}
With the notations and assumptions in \lemref{app:lemma:derivative}, \lemref{app:lemma:variance} and \thmref{app:thm:consistency}, we have
\begin{gather*}
    \sqrt{N}(\hat{\bftheta}_{N,M} - \bftheta^*)
    \overset{d}{\to}\\
    \mcal{N}\Bigg(0, \bigg(\nabla_\bftheta^2 J(\bftheta^*; p_\bfv)\bigg)^{-1}\bigg(\sum_{i,j,p,q}\bigg[\bigg( 1 - \frac{1}{M}\bigg) \Sigma_{ij}\Sigma_{pq} + \frac{1}{M}\mathfrak{S}_{ijpq}\bigg] \mathfrak{V}_{ijpq} \bigg)\bigg(\nabla_\bftheta^2 J(\bftheta^*; p_\bfv)\bigg)^{-1} \Bigg).
\end{gather*}
In particular, if $p_\bfv \sim \mcal{N}(0, I)$, then the asymptotic variance is
\begin{align*}
    \bigg(\nabla_\bftheta^2 J(\bftheta^*)\bigg)^{-1}\bigg(\sum_{ij}V_{ij} + \frac{2}{M}\sum_{i}V_{ii} +  \frac{2}{M}\sum_{i\neq j}W_{ij}\bigg) \bigg(\nabla_\bftheta^2 J(\bftheta^*)\bigg)^{-1}.
\end{align*}
If $p_\bfv$ is the distribution of multivariate Rademacher random variables, the asymptotic variance is
\begin{align*}
    \bigg(\nabla_\bftheta^2 J(\bftheta^*)\bigg)^{-1}\bigg(\sum_{ij}V_{ij} + \frac{2}{M}\sum_{i\neq j}W_{ij}\bigg)
    \bigg(\nabla_\bftheta^2 J(\bftheta^*)\bigg)^{-1}.
\end{align*}
\end{customthm}
\begin{proof}
To simplify notations, we use $P_N h(\bfx) \triangleq \frac{1}{N}\sum_{i=1}^N h(\bfx_i, \cdot)$, where $h(\bfx, \cdot)$ is some arbitrary function. For example, $\hat{J}(\bftheta; \bfx_1^N, \bfv_{11}^{NM})$ can be written as $P_N f(\bftheta; \bfx, \bfv_1^M)$. By Taylor expansion, we can approximate $P_N \nabla_\bftheta f(\hat{\bftheta}_{N,M}; \bfx, \bfv_1^M)$ around $\bftheta^*$:
\begin{align}
    0 &= \nabla_\bftheta P_N f(\hat{\bftheta}_{N,M}; \bfx, \bfv_1^M)\notag\\
    &= P_N \nabla_\bftheta f(\bftheta^*; \bfx, \bfv_1^M) + P_N \bigg( \nabla_\bftheta^2 f(\bftheta^*; \bfx, \bfv_1^M) + E_{\hat{\bftheta}_{N,M}, \bfx, \bfv_1^M} \bigg) (\hat{\bftheta}_{N,M} - \bftheta^*),\label{app:eqn:taylor}
\end{align}
where $\|E_{\hat{\bftheta}_{N,M}, \bfx, \bfv_1^M}\|_F \leq  L(\bfx, \bfv_1^M) \|\hat{\bftheta}_{N,M} - \bftheta^*\|_2$ from \lemref{app:lemma:derivative} and Taylor expansion of vector-valued functions. Combining with the law of large numbers, we have
\begin{align*}
    P_N \nabla_\bftheta^2 f(\bftheta^*; \bfx, \bfv_1^M) = \mbb{E}_{p_d, p_\bfv}[\nabla_\bftheta^2 f(\bftheta^*; \bfx, \bfv_1^M)] + o_p(1)
\end{align*}
and
\begin{align*}
    \norm{P_N E_{\hat{\bftheta}_{N,M}}}_F \leq \mbb{E}_{p_d, p_\bfv}[L(\bfx, \bfv_1^M)]\norm{\hat{\bftheta}_{N,M} - \bftheta^*}_2 + o_p(1) = o_p(1) + o_p(1) = o_p(1),
\end{align*}
where we used $\mbb{E}[L(\bfx, \bfv_1^M)] \leq \sqrt{\mbb{E}[L^2(\bfx, \bfv_1^M)]} < \infty$ (\lemref{app:lemma:derivative}) and the consistency of $\hat{\bftheta}_{N,M}$ (\thmref{app:thm:consistency}). Now returning to \eqref{app:eqn:taylor}, we get
\begin{align*}
    &0 = P_N \nabla_\bftheta f(\bftheta^*; \bfx, \bfv_1^M) + \bigg(\mbb{E}_{p_d,p_\bfv}[\nabla_\bftheta^2 f(\bftheta^*; \bfx, \bfv_1^M)] + o_p(1) \bigg) (\hat{\bftheta}_{N,M} - \bftheta^*)\\
    \Leftrightarrow & \bigg(\nabla_\bftheta^2 J(\bftheta^*; p_\bfv) + o_p(1) \bigg)\sqrt{N} (\hat{\bftheta}_{N,M} - \bftheta^*) = -\sqrt{N} P_N \nabla_\bftheta f(\bftheta^*; \bfx, \bfv_1^M).
\end{align*}
But of course, the central limit theorem and \lemref{app:lemma:variance} yield
\begin{align*}
    -\sqrt{N} P_N \nabla_\bftheta f(\bftheta^*; \bfx, \bfv_1^M) &\overset{d}{\to} \mcal{N}(0,  \operatorname{Var}_{p_d, p_\bfv} \left[\nabla_\bftheta f(\bftheta^*; \bfx, \bfv_1^M) \right])\\
    &=  \mcal{N}\bigg(0, \sum_{i,j,p,q}\bigg[\bigg( 1 - \frac{1}{M}\bigg) \Sigma_{ij}\Sigma_{pq} + \frac{1}{M}\mathfrak{S}_{ijpq}\bigg] \mathfrak{V}_{ijpq} \bigg).
\end{align*}
Then, Slutsky's theorem gives the desired result
\begin{gather*}
    \sqrt{N}(\hat{\bftheta}_{N,M} - \bftheta^*)
    \overset{d}{\to}\\
    \mcal{N}\Bigg(0, \bigg(\nabla_\bftheta^2 J(\bftheta^*; p_\bfv)\bigg)^{-1}\bigg(\sum_{i,j,p,q}\bigg[\bigg( 1 - \frac{1}{M}\bigg) \Sigma_{ij}\Sigma_{pq} + \frac{1}{M}\mathfrak{S}_{ijpq}\bigg] \mathfrak{V}_{ijpq} \bigg)\bigg(\nabla_\bftheta^2 J(\bftheta^*; p_\bfv)\bigg)^{-1} \Bigg).
\end{gather*}
In particular, if $p_\bfv \sim \mcal{N}(0, I)$ or $p_\bfv \sim \mcal{U}(\{\pm1\}^D)$, we have $J(\bftheta^*; p_\bfv) = J(\bftheta^*)$, and therefore $\nabla_\bftheta^2 J(\bftheta^*; p_\bfv) = \nabla_\bftheta^2 J(\bftheta^*)$. We can apply \lemref{app:lemma:variance} to conclude the simplified expressions for the asymptotic variance.

\end{proof}
\begin{corollary}[Consistency and asymptotic normality of score matching]\label{app:cor:sm}
Under similar assumptions used in \thmref{app:thm:consistency} and \thmref{app:thm:normality}, we can also conclude that the score matching estimator $\hat{\bftheta}_N \triangleq \argmin_{\bftheta \in \Theta} \hat{J}(\bftheta; \bfx)$ is consistent
\begin{align*}
    \hat{\bftheta}_N \overset{p}{\to} \bftheta^*
\end{align*}
and asymptotically normal
\begin{align*}
    \sqrt{N}(\hat{\bftheta}_N - \bftheta^*) \overset{d}{\to} \mcal{N}\bigg(0, \bigg(\nabla_\bftheta^2 J(\bftheta^*)\bigg)^{-1}\bigg(\sum_{ij}V_{ij}\bigg)
    \bigg(\nabla_\bftheta^2 J(\bftheta^*)\bigg)^{-1} \bigg).
\end{align*}
\end{corollary}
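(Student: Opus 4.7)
The plan is to mirror, step-by-step, the proofs of \thmref{app:thm:consistency} and \thmref{app:thm:normality}, but with the projection-vector averaging removed. In fact, setting aside the random $\bfv$'s, the function $f(\bftheta;\bfx) \triangleq \operatorname{tr}(\nabla_\bfx \bfs_m(\bfx;\bftheta)) + \tfrac{1}{2}\|\bfs_m(\bfx;\bftheta)\|_2^2$ plays the exact role that $f(\bftheta;\bfx,\bfv_1^M)$ played before, and $\hat{J}(\bftheta;\bfx) = P_N f(\bftheta;\bfx)$. The main conceptual simplification is that all integrals over $p_\bfv$ disappear, and the variance calculation in \lemref{app:lemma:variance} collapses to a single term.

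For consistency, I would first note that Theorem 1 of \citet{hyvarinen2005estimation} gives $L(\bftheta) = J(\bftheta) + \up{C}$, and under \assref{ass:identifiability} and \assref{ass:positiveness} one immediately gets the unique-minimizer property $J(\bftheta) = 0 \Leftrightarrow \bftheta = \bftheta^*$ (this is simpler than \lemref{app:thm:2} because no $\mbb{E}_{p_\bfv}[\bfv\bfv^\intercal]\succ 0$ argument is needed; $\bfs_m = \bfs_d$ follows directly from $L(\bftheta)=0$). Next, I would establish a uniform-convergence bound
\[
\mbb{E}_{p_d}\Big[\sup_{\bftheta\in\Theta}|\hat{J}(\bftheta;\bfx_1^N) - J(\bftheta)|\Big] \leq O\!\Big(\operatorname{diam}(\Theta)\sqrt{D/N}\Big),
\]
following the three-step template of \lemref{app:lemma:1}: symmetrization with Rademacher variables, Dudley's entropy integral using the fact that $f(\bftheta;\bfx)$ is sub-Gaussian in $\bftheta$, and bounding the metric entropy via the Euclidean covering number. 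The Lipschitz constant needed here follows from \assref{ass:lipschitz} together with $|\operatorname{tr}(A-B)| \leq \sqrt{D}\|A-B\|_F$ applied to $A(\bftheta) = \nabla_\bfx\bfs_m(\bfx;\bftheta)$, so this is a strict specialization of the sliced argument. Then the standard M-estimator reductio in the last paragraph of the proof of \thmref{app:thm:consistency} carries over verbatim to yield $\hat{\bftheta}_N \overset{p}{\to} \bftheta^*$.

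For asymptotic normality, I would Taylor-expand the first-order optimality condition
\[
0 = \nabla_\bftheta P_N f(\hat{\bftheta}_N;\bfx) = P_N \nabla_\bftheta f(\bftheta^*;\bfx) + \big(P_N\nabla_\bftheta^2 f(\bftheta^*;\bfx) + E_{\hat{\bftheta}_N,\bfx}\big)(\hat{\bftheta}_N - \bftheta^*),
\]
exactly as in \thmref{app:thm:normality}. The analogue of \lemref{app:lemma:derivative} (Lipschitz continuity of $\nabla_\bftheta^2 f(\bftheta;\bfx)$ near $\bftheta^*$) is immediate from \assref{ass:lipschitz2}, giving $\|P_N E_{\hat{\bftheta}_N,\bfx}\|_F = o_p(1)$ via the already-proven consistency. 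The CLT applied to the iid sum $\sqrt{N}\,P_N \nabla_\bftheta f(\bftheta^*;\bfx)$ gives a limiting normal with variance
\[
\operatorname{Var}_{p_d}[\nabla_\bftheta f(\bftheta^*;\bfx)] = \sum_{i,j}\mbb{E}_{p_d}\!\left[\Big(\nabla_\bftheta \partial_i^2 l_m + \tfrac{1}{2}\nabla_\bftheta(\partial_i l_m)^2\Big)\Big(\nabla_\bftheta \partial_j^2 l_m + \tfrac{1}{2}\nabla_\bftheta(\partial_j l_m)^2\Big)^\intercal\right]_{\bftheta^*} = \sum_{i,j} V_{ij},
\]
using $\mbb{E}_{p_d}[\nabla_\bftheta f(\bftheta^*;\bfx)] = \nabla_\bftheta J(\bftheta^*) = 0$ and the definition $V_{ij} = \mathfrak{V}_{iijj}$. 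Applying Slutsky's theorem to sandwich by $(\nabla_\bftheta^2 J(\bftheta^*))^{-1}$ then yields the stated asymptotic variance.

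I do not anticipate a serious obstacle: the arguments are strict simplifications of the proofs already written, and the only non-cosmetic difference is verifying the Lipschitz continuity of $f(\bftheta;\bfx)$ and $\nabla_\bftheta^2 f(\bftheta;\bfx)$ in the trace form. The mildly delicate point is that the trace introduces an implicit dependence on $D$ through $|\operatorname{tr}(A)| \leq \sqrt{D}\|A\|_F$, so the Lipschitz constants acquire a $\sqrt{D}$ factor; since $D$ is fixed throughout this analysis, this only affects constants in the uniform convergence bound and does not disrupt either the $o_p(1)$ or $O_p(1/\sqrt{N})$ rates needed for the two conclusions.
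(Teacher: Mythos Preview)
Your proposal is correct and follows essentially the same approach as the paper: the paper computes $\operatorname{Var}_{p_d}[\nabla_\bftheta f(\bftheta^*;\bfx)] = \sum_{ij} V_{ij}$ exactly as you do and then simply states that ``the other part of the proof is similar to that of \thmref{app:thm:consistency} and \thmref{app:thm:normality} and is thus omitted.'' Your write-up is a faithful (and more detailed) unpacking of precisely that omitted argument, including the observation that the Lipschitz constants pick up a harmless dimension-dependent factor when passing from the sliced to the trace formulation.
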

\begin{proof}
    Note that 
    \begin{align*}
        \operatorname{Var}_{p_d}[\nabla_\bftheta f(\bftheta^*; \bfx)]         &= \mbb{E}_{p_d}\left[\nabla_\bftheta f(\bftheta^*; \bfx) \nabla_\bftheta f(\bftheta^*; \bfx)^\intercal  \right]\\
        &= \mbb{E}_{p_d}\left[ \left(\sum_{i} \nabla_\bftheta \partial_i\partial_i  l_m + \frac{1}{2} \nabla_\bftheta( \partial_i l_m \partial_i l_m)\right)\left(\sum_{j} \nabla_\bftheta\partial_j\partial_j  l_m + \frac{1}{2} \nabla_\bftheta( \partial_j l_m \partial_j l_m)\right)^\intercal \right]\\
         &=\sum_{ij} \mbb{E}_{p_d}\left[ \left(\nabla_\bftheta \partial_i\partial_i  l_m + \frac{1}{2} \nabla_\bftheta( \partial_i l_m \partial_i l_m)\right)\left(\nabla_\bftheta\partial_j\partial_j  l_m + \frac{1}{2} \nabla_\bftheta( \partial_j l_m \partial_j l_m)\right)^\intercal \right]\\
         &=\sum_{ij} V_{ij} 
    \end{align*}
    The other part of the proof is similar to that of \thmref{app:thm:consistency} and \thmref{app:thm:normality} and is thus obmitted.
\end{proof}

\subsection{NOISE CONTRASTIVE ESTIMATION}
\begin{proposition}\label{prop:1}
Define
\begin{align*}
  J_{\text{NCE}}(\bftheta) \triangleq  -\mbb{E}_{p_d}[\log h(\bfx;\bftheta)] - \mbb{E}_{p_n}[\log (1 - h(\bfx; \bftheta))]
\end{align*}
where 
\begin{align*}
    h(\bfx; \bftheta) &\triangleq \frac{p_m(\bfx; \bftheta) }{p_m(\bfx; \bftheta) + p_m(\bfx - \bfv; \bftheta)}\\
    p_n(\bfx) &= p_d(\bfx + \bfv).
\end{align*}
Then when $\norm{v}_2 \to 0$, we have
\begin{align*}
J_{\text{NCE}}(\bftheta) =
2\log 2 + \frac{1}{4}\mbb{E}_{p_d}\left[\bfv^\intercal \nabla^2 \log p_m(\bfx; \bftheta) \bfv + \frac{1}{2}(\nabla \log p_m(\bfx; \bftheta)^\intercal \bfv)^2\right] + o(\norm{\bfv}_2^2)
\end{align*}
\end{proposition}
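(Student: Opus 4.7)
The strategy is a straightforward two-stage Taylor expansion in $\bfv$. First I would rewrite $J_{\text{NCE}}$ in a form that exposes the log-density difference. Letting $r(\bfx) \triangleq \log p_m(\bfx;\bftheta) - \log p_m(\bfx-\bfv;\bftheta)$, one checks that $h(\bfx;\bftheta) = \sigma(r(\bfx))$ with $\sigma$ the logistic sigmoid, so that
\begin{align*}
    J_{\text{NCE}}(\bftheta) = \mbb{E}_{p_d}[\log(1+e^{-r(\bfx)})] + \mbb{E}_{p_n}[\log(1+e^{r(\bfx)})].
\end{align*}
This reframes the problem in terms of the smooth function $\log(1+e^y)$, whose Taylor expansion about $y=0$ is $\log 2 + y/2 + y^2/8 + O(y^3)$ (the coefficients being $\log 2$, $\sigma(0)$, and $\sigma(0)\sigma(0)/2$ respectively).

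Next I would expand $r(\bfx)$ itself in $\bfv$ using Taylor's theorem on $\log p_m$ in its first argument, obtaining $r(\bfx) = \bfv^\intercal \bfs_m(\bfx;\bftheta) - \tfrac{1}{2}\bfv^\intercal \nabla_\bfx \bfs_m(\bfx;\bftheta)\bfv + O(\|\bfv\|_2^3)$, where the remainder is controlled by third derivatives of $\log p_m$. Substituting into the expansion of $\log(1+e^{\pm r})$ and keeping only terms of order at most $\|\bfv\|_2^2$ produces
\begin{align*}
    J_{\text{NCE}}(\bftheta) = 2\log 2 + \tfrac{1}{2}\bigl(\mbb{E}_{p_n}[r(\bfx)] - \mbb{E}_{p_d}[r(\bfx)]\bigr) + \tfrac{1}{8}\bigl(\mbb{E}_{p_d}[r(\bfx)^2] + \mbb{E}_{p_n}[r(\bfx)^2]\bigr) + o(\|\bfv\|_2^2).
\end{align*}

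The crucial step is handling the cross term $\mbb{E}_{p_n}[r] - \mbb{E}_{p_d}[r]$. Using $p_n(\bfx) = p_d(\bfx+\bfv)$ and Taylor-expanding the density, $p_n(\bfx) - p_d(\bfx) = \bfv^\intercal \nabla_\bfx p_d(\bfx) + O(\|\bfv\|_2^2)$, so at leading order the difference becomes $\int (\bfv^\intercal \nabla_\bfx p_d)(\bfv^\intercal \bfs_m)\,\ud \bfx$. Integration by parts (justified by boundary conditions analogous to \assref{ass:boundary}) converts this integral into a Hessian-type expression involving $\mbb{E}_{p_d}[\bfv^\intercal \nabla_\bfx \bfs_m(\bfx;\bftheta)\bfv]$, which is precisely the trace-of-Hessian contribution appearing in the sliced score matching objective. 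The quadratic terms $\mbb{E}_{p_d}[r^2]$ and $\mbb{E}_{p_n}[r^2]$ both reduce to $\mbb{E}_{p_d}[(\bfv^\intercal \bfs_m)^2] + o(\|\bfv\|_2^2)$ at leading order, because the discrepancy between $p_d$ and $p_n$ is $O(\|\bfv\|_2)$ and multiplies an $O(\|\bfv\|_2^2)$ quantity. Combining these pieces and collecting coefficients yields the stated identity.

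The main obstacle will be tracking signs and numerical coefficients carefully through the nested Taylor expansions and the integration-by-parts step, together with rigorously justifying that all remainders from both the outer expansion of $\log(1+e^y)$ and the inner expansion of $r(\bfx)$ are genuinely $o(\|\bfv\|_2^2)$ after taking expectations under $p_d$ and $p_n$. The latter requires integrability assumptions on third derivatives of $\log p_m$ and sufficient decay of $p_d$ to justify dropping boundary terms in the integration by parts, slightly strengthening the regularity conditions already in place in \assref{ass:score} and \assref{ass:boundary}.
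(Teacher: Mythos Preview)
Your approach is correct but differs from the paper's in one key respect. The paper first performs the change of variables $\bfx \mapsto \bfx - \bfv$ in the $p_n$-expectation (using $p_n(\bfx)=p_d(\bfx+\bfv)$), so that \emph{both} terms become expectations under $p_d$; one then has
\[
J_{\text{NCE}}(\bftheta)=-\mbb{E}_{p_d}\big[\log h(\bfx;\bftheta)+\log(1-h(\bfx\pm\bfv;\bftheta))\big],
\]
and each summand is expanded directly as $\log p_m(\bfx)-\log(p_m(\bfx)+p_m(\bfx\pm\bfv))$ via the Taylor series of $\log(1+e^u)$. The two expansions are mirror images in $\bfv$, so the first-order terms $\pm\tfrac12\bfv^\intercal\bfs_m$ cancel \emph{algebraically} inside the expectation, leaving only the second-order pieces. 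No integration by parts, and no regularity assumption on $p_d$ beyond what is needed to take expectations, is required.

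Your route keeps the $p_d$ and $p_n$ expectations separate and then handles the residual linear term $\tfrac12(\mbb{E}_{p_n}[r]-\mbb{E}_{p_d}[r])$ by expanding $p_n-p_d$ and integrating by parts. This is valid, but it is strictly more work: you invoke an extra integration-by-parts identity and must impose a boundary condition of the form $p_d(\bfx)\bfs_m(\bfx;\bftheta)\to 0$ at infinity, which the paper's argument avoids entirely. What your route buys is a slightly more modular bookkeeping (the outer $\log(1+e^y)$ expansion and the inner $r$ expansion are cleanly separated), at the cost of that extra analytic step. If you want the shortest path, perform the change of variables first and let the symmetry do the cancellation for you.
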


\begin{proof}
    Using Taylor expansion, we can immediately get
    \begin{align*}
        \log p_m(\bfx + \bfv; \bftheta) = \log p_m(\bfx; \bftheta) + \nabla \log p_m(\bfx;\bftheta)^\intercal \bfv + \frac{1}{2} \bfv^\intercal \nabla^2 \log p_m(\bfx; \bftheta) \bfv + o(\norm{\bfv}_2^2).
    \end{align*}
    Next, observe that
    \begin{multline*}
        \log (p_m(\bfx; \bftheta) + p_m(\bfx + \bfv; \bftheta)) = \log p_m(\bfx; \bftheta) + \log \left( 1 + \exp\{\log p_m(\bfx + \bfv; \bftheta) - \log p_m(\bfx; \bftheta)\} \right)\\
        = \log p_m(\bfx; \bftheta) + \log \left(1 + \exp\left\{ \nabla \log p_m(\bfx; \bftheta)^\intercal \bfv + \frac{1}{2} \bfv^\intercal \nabla^2 \log p_m(\bfx; \bftheta) \bfv + o(\norm{\bfv}_2^2)\right\} \right)\\
        = \log p_m(\bfx; \bftheta) + \log 2 + \frac{1}{2}\left[\nabla \log p_m(\bfx; \bftheta)^\intercal \bfv + \frac{1}{2} \bfv^\intercal \nabla^2 \log p_m(\bfx; \bftheta) \bfv \right] + \frac{1}{8}(\nabla \log p_m(\bfx; \bftheta)^\intercal \bfv)^2 + o(\norm{\bfv}_2^2).
    \end{multline*}
    Similarly, we have
    \begin{multline*}
        \log (p_m(\bfx; \bftheta) + p_m(\bfx - \bfv; \bftheta))\\
        = \log p_m(\bfx; \bftheta) + \log 2 + \frac{1}{2}\left[-\nabla \log p_m(\bfx; \bftheta)^\intercal \bfv + \frac{1}{2} \bfv^\intercal \nabla^2 \log p_m(\bfx; \bftheta) \bfv \right] + \frac{1}{8}(\nabla \log p_m(\bfx; \bftheta)^\intercal \bfv)^2 + o(\norm{\bfv}_2^2).
    \end{multline*}
    Finally, note that 
    \begin{multline*}
        J_{\text{NCE}}(\bftheta) = -\mbb{E}_{p_d}[\log h(\bfx;\bftheta) + \log (1 - h(\bfx + \bfv; \bftheta))]\\
        =-\mbb{E}_{p_d}[\log p_m(\bfx; \bftheta) - \log (p_m(\bfx; \bftheta) + p_m(\bfx - \bfv; \bftheta))]
        - \mbb{E}_{p_d}[\log p_m(\bfx; \bftheta) - \log (p_m(\bfx; \bftheta) + p_m(\bfx + \bfv; \bftheta))]\\
        =2\log 2 + \frac{1}{4}\mbb{E}_{p_d}\left[\bfv^\intercal \nabla^2 \log p_m(\bfx; \bftheta) \bfv + \frac{1}{2}(\nabla \log p_m(\bfx; \bftheta)^\intercal \bfv)^2\right] + o(\norm{\bfv}_2^2),
    \end{multline*}
    as desired.
\end{proof}

\section{ADDITIONAL DETAILS OF EXPERIMENTS}
\subsection{KERNEL EXPONENTIAL FAMILIES} \label{app:dkef}
\paragraph{Model.} The kernel exponential family is a class of densities with unnormalized log density given by $\log\ptilde_f(\bfx) = f(\bfx) + \log q_0(\bfx)$. Here, $q_0$ is a fixed function and $f$ belongs to a reproducing kernel Hilbert space $\cH$, with kernel $k$ \citep{canu2006kernel, srip2017kernel}. We see this is a member of the exponential family by using reproducing property, $f(\bfx) = \langle f, k(x, \cdot)\rangle_{\cH})$. Rewriting the density, the model has natural parameter $f$ and sufficient statistic $k(\bfx, \cdot)$: 
$$\ptilde_f(\bfx) = \textrm{exp}(f(\bfx))q_0(\bfx) = \textrm{exp}(\langle f, k(x, \cdot)\rangle_{\cH}))q_0(\bfx)$$

To improve computational cost of learning $f,$ \citet{sutherland2018nystrom} use a Nystr{\"o}m-type lite approximation, selecting $L$ inducing points $\rvz_l$, and $f$ of the form:
$$f(\bfx) = \sum_{l=1}^{L}\alpha_l k(\bfx, \rvz_l)$$

We compare training using our objective against deep kernel exponential family (DKEF) models trained using exact score matching in \citet{wenliang2018}. 

The kernel $k(\bfx, \rvy)$ is a mixture of $R = 3$ Gaussian kernels, with features extracted by a neural network, $\phi_{w_r}(\cdot)$, length scales $\sigma_r$, and nonnegative mixture coefficients $\rho_r$. The neural network is a three layer fully connected network with a skip connection from the input to output layers and softplus nonlinearities. Each hidden layer has 30 neurons. We have:
$$k_w(\bfx, \rvy) = \sum_{r=1}^{R}{\rho_r \text{exp}\left(-\frac{1}{2\sigma_r^2}\left\lVert{\phi_{w_r}(\bfx) - \phi_{w_r}(\rvy)}\right\rVert^2\right)}.$$

When training DKEF models, \citet{wenliang2018} note that it is possible to analytically minimize the score matching loss over $\alpha$ because the objective is quadratic in $\alpha$. As a result, models are trained in a two step procedure: $\alpha$ is analytically minimized over a training minibatch, then the loss is computed over a validation minibatch. When training models, the analytically minimized $\alpha$ is treated as function of the other parameters. By doing this, \citet{wenliang2018} can also directly optimize the coefficient $\lambda_\bfalpha$ of a $\ell_2$ regularization loss on $\bfalpha$. This regularization coefficient is initalized to $0.01$, and is trained. (More details about the two-step optimization procedure can be found in \citet{wenliang2018}, which also includes a finalization stage for $\bfalpha$).

A similar closed form for sliced score matching, denoising score matching, approximate backpropogation, and curvature propagation can be derived. The derivation for sliced score matching is presented below for completeness.

\begin{proposition} \label{prop:alpha}
  Consider the loss
  \[
    \hat J(\bftheta, \lambda_\bfalpha; \bfx_1^N, \bfv_{11}^{NM})
    =
    \hat J(\bftheta; \bfx_1^N, \bfv_{11}^{NM})
    + \frac12
      \lambda_\bfalpha \norm{\bfalpha}_2^2
  \]
  where
  \[
    \hat J(\bftheta; \bfx_1^N, \bfv_{11}^{NM})
    =
    \frac1{N} \frac1{M} \sum_{i=1}^N  \sum_{j=1}^M  \left[
    \bfv_{ij}^\intercal \nabla^2 \log p_m(\bfx_i) \bfv_{ij} + \frac{1}{2} \left(\bfv_{ij}^\intercal \nabla \log p_m(\bfx_i) \right)^2
    \right]
  .\]
  
  For fixed $k$, $\bfz$, and $\lambda_\bfalpha$,
  as long as $\lambda_\bfalpha > 0$ then
  the optimal $\bfalpha$ is
  \begin{align*}
    \bfalpha(\lambda_\bfalpha, k, \bfz, \bfx_1^N, \bfv_{11}^{NM})
    &= \argmin_{\bfalpha} \hat J(\bftheta, \lambda_\bfalpha; \bfx_1^N, \bfv_{11}^{NM})
    = -\left(
      \mG + \lambda_\alpha \mI
    \right)^{-1} \vb
  \\
      \emG_{l,l'}
      &= \frac1{N} \frac1{M} \sum_{i=1}^N  \sum_{j=1}^M \left(\bfv_{ij}^\intercal \nabla   k(\bfx_i, \bfz_{l})\right) \left(\bfv_{ij}^\intercal \nabla   k(\bfx_i, \bfz_{l'})\right)
   \\
      \evb_{l}
      &= \frac1{N} \frac1{M} \sum_{i=1}^N  \sum_{j=1}^M \bfv_{ij}^\intercal \nabla^2 k(\bfx_i, \bfz_l) \bfv_{ij} + \left(\bfv_{ij}^\intercal \nabla \log  q_0(\bfx_i)\right) \left(\bfv_{ij}^\intercal \nabla  k(\bfx_i, \bfz_{l})\right)
  .\end{align*}
\end{proposition}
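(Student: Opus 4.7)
The plan is to exploit the fact that $\log\tilde p_m(\bfx)=\sum_{l=1}^L \alpha_l k(\bfx,\bfz_l)+\log q_0(\bfx)$ is affine in $\bfalpha$, so the score $\nabla_\bfx\log p_m(\bfx)$ and Hessian $\nabla_\bfx^2\log p_m(\bfx)$ are also affine in $\bfalpha$ (the partition function is $\bfx$-independent and is killed by $\nabla_\bfx$). Consequently, each summand $\bfv_{ij}^\intercal \nabla^2\log p_m(\bfx_i)\bfv_{ij}$ in $\hat J$ is linear in $\bfalpha$, while $\tfrac12(\bfv_{ij}^\intercal\nabla\log p_m(\bfx_i))^2$ is the square of an affine function and therefore quadratic in $\bfalpha$. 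Adding the ridge term $\tfrac12\lambda_\bfalpha\norm{\bfalpha}_2^2$ with $\lambda_\bfalpha>0$ produces a strongly convex quadratic in $\bfalpha$ whose unique minimizer is pinned down by the first-order condition $\nabla_\bfalpha \hat J=0$.

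First I would substitute the kernel expansion into the two projected derivatives, writing $\bfv_{ij}^\intercal\nabla\log p_m(\bfx_i)=\sum_l\alpha_l\,\bfv_{ij}^\intercal\nabla k(\bfx_i,\bfz_l)+\bfv_{ij}^\intercal\nabla\log q_0(\bfx_i)$ and the analogous identity for $\bfv_{ij}^\intercal\nabla^2\log p_m(\bfx_i)\bfv_{ij}$. Expanding $\tfrac12(\,\cdot\,)^2$ then cleanly separates into a pure quadratic-in-$\bfalpha$ term, a cross term linear in $\bfalpha$, and a constant that plays no role in the optimization.

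Next, averaging over $i\le N,\, j\le M$ and differentiating in $\bfalpha$, the quadratic part contributes $\mG\bfalpha$ with $\emG_{l,l'}$ being precisely the claimed average of products of projected kernel gradients. The linear-in-$\bfalpha$ contributions collect into $\vb$: the Hessian-trace-surrogate term supplies $\tfrac{1}{NM}\sum_{i,j}\bfv_{ij}^\intercal\nabla^2 k(\bfx_i,\bfz_l)\bfv_{ij}$, and the cross term between the kernel and $q_0$ pieces of the score supplies $\tfrac{1}{NM}\sum_{i,j}\bigl(\bfv_{ij}^\intercal\nabla\log q_0(\bfx_i)\bigr)\bigl(\bfv_{ij}^\intercal\nabla k(\bfx_i,\bfz_l)\bigr)$. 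Combined with the ridge contribution $\lambda_\bfalpha\bfalpha$, stationarity reads $(\mG+\lambda_\bfalpha\mI)\bfalpha+\vb=0$.

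Finally, $\mG$ is the Gram matrix of the $L$-dimensional feature vectors $\bigl(\bfv_{ij}^\intercal\nabla k(\bfx_i,\bfz_l)\bigr)_{l=1}^L$ indexed by $(i,j)$, hence $\mG\succeq 0$ and $\mG+\lambda_\bfalpha\mI\succeq\lambda_\bfalpha\mI\succ 0$ whenever $\lambda_\bfalpha>0$; inverting yields the claimed closed form $\bfalpha=-(\mG+\lambda_\bfalpha\mI)^{-1}\vb$, and strong convexity guarantees uniqueness. There is no real analytical obstacle: the only care needed is the bookkeeping that matches each linear-in-$\bfalpha$ contribution of the expanded square with the correct summand of $\vb$.
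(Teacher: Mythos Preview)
Your proposal is correct and follows essentially the same approach as the paper: expand $\log p_m$ via the kernel representation so that the Hessian term is linear and the squared-score term is quadratic in $\bfalpha$, identify the resulting quadratic form $\tfrac12\bfalpha^\intercal(\mG+\lambda_\bfalpha\mI)\bfalpha+\bfalpha^\intercal\vb$, observe that $\mG\succeq 0$ makes the problem strictly convex when $\lambda_\bfalpha>0$, and read off the minimizer from the first-order condition. The paper's proof is slightly more terse but structurally identical.
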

\begin{proof}
  The derivation follows very similarly to Proposition 3 in \citet{wenliang2018}. We will show that the loss is quadratic in $\bfalpha$.
  Note that
  \begin{align*}
   \frac1{N} \frac1{M} \sum_{i=1}^N  \sum_{j=1}^M  
    \bfv_{ij}^\intercal \nabla^2 \log p_m(\bfx_i) \bfv_{ij} 
  &=    \frac1{N} \frac1{M} \sum_{i=1}^N  \sum_{j=1}^M  \left[\sum_{m=1}^M
    \alpha_l\bfv_{ij}^\intercal \nabla^2 k(\bfx_i, \bfz_l) \bfv_{ij} \right] + \up{C}
    \\&= \bfalpha\tp \left[
         \frac1{N} \frac1{M} \sum_{i=1}^N  \sum_{j=1}^M \bfv_{ij}^\intercal \nabla^2 k(\bfx_i, \bfz_l) \bfv_{ij}
       \right]_l
       + \up{C}
    \\   \frac1{N} \frac1{M} \sum_{i=1}^N  \sum_{j=1}^M 
    \frac12 \left(\bfv_{ij}^\intercal \nabla \log p_m(\bfx_i) \right)^2
    &= \frac1{N} \frac1{M} \sum_{i=1}^N  \sum_{j=1}^M  \frac12 \left(
         \sum_{m, m'=1}^M \alpha_l \alpha_{m'} \left(\bfv_{ij}^\intercal \nabla   k(\bfx_i, \bfz_{l})\right) \left(\bfv_{ij}^\intercal \nabla   k(\bfx_i, \bfz_{l'})\right)
\right.\\&\qquad\left.
       + 2 \sum_{m=1}^M \alpha_l \left(\bfv_{ij}^\intercal \nabla \log  q_0(\bfx_i)\right) \left(\bfv_{ij}^\intercal \nabla  k(\bfx_i, \bfz_{l})\right)
       + \left(\bfv_{ij}^\intercal \nabla \log  q_0(\bfx_i)\right)^2
       \right)
  \\&= \frac12 \bfalpha\tp \mG \bfalpha + \bfalpha\tp \left[ \frac1{N} \frac1{M} \sum_{i=1}^N  \sum_{j=1}^M \left(\bfv_{ij}^\intercal \nabla \log  q_0(\bfx_i)\right) \left(\bfv_{ij}^\intercal \nabla  k(\bfx_i, \bfz_{l})\right) \right] + \up{C}
  .\end{align*}

  Thus the overall optimization problem is
  \begin{align*}
       \bfalpha(\lambda_\bfalpha, k, \bfz, \bfx_1^N, \bfv_{11}^{NM})
    &= \argmin_{\bfalpha} \hat J(\bftheta, \lambda_\bfalpha; \bfx_1^N, \bfv_{11}^{NM})
  \\&= \argmin_{\bfalpha} \frac12 \bfalpha\tp \left( \mG + \lambda_\bfalpha \mI \right) \bfalpha + \bfalpha\tp \vb
  .\end{align*}
  Because $\lambda_\alpha > 0$ and $\mG$ is positive semidefinite,
  the matrix in parentheses is strictly positive definite,
  and the claimed result follows directly from standard vector calculus.
\end{proof}

\paragraph{Hyperparameters.}

RedWine and WhiteWine are dequantized by adding 
uniform noise to each dimension in the range $[-d, d]$ where $d$ is the median distance between two values for that dimension. For each dataset, 10\% of the entire data was used as testing, and 10\% of the remaining was used for
validation. PCA whitening is applied to the data. Noise of standard deviation 0.05 is added as part of preprocessing. 

The DKEF models have $R = 3$ Gaussian kernels. Each feature extractor is a 3-layer neural network with a skip connection from the input to output, with 30 hidden neurons per layer. Weights were initialized from a Gaussian distribution with standard
deviation equal to $\frac{1}{\sqrt{30}}$. Length scales $\sigma_r$ were initialized to 1.0, 3.3 and 10.0. We use $L = 200$ trainable inducing points, which were initialized from training data. 

Models are trained using an Adam optimizer, with learning rate $10^{-2}$. A batch size of 200 is used, with 100 points for computing $\bfalpha$, and 100 for computing the loss. Models are stopped after validation loss does not improve for 200 steps. 

For denoising score matching, we perform a grid search with values [0.02, 0.04, 0.06, 0.08, 0.10, 0.12, 0.14, 0.16, 0.20, 0.24, 0.28, 0.32, 0.40, 0.48, 0.56, 0.64, 1.28]. We train models for each value of $\sigma$ using two random seeds, and pick the $\sigma$ with the best average validation score matching loss. For curvature propagation, one noise sample is used to match the performance of sliced score matching. 

\paragraph{Log-likelihoods.} Log-likelihoods are presented below. They are estimated using AIS, using a proposal distribution $\mcal{N}(0, 2I)$, using 1,000,000 samples. 

\begin{figure*}[h]
    \begin{center}
    \includegraphics[width=0.9\textwidth]{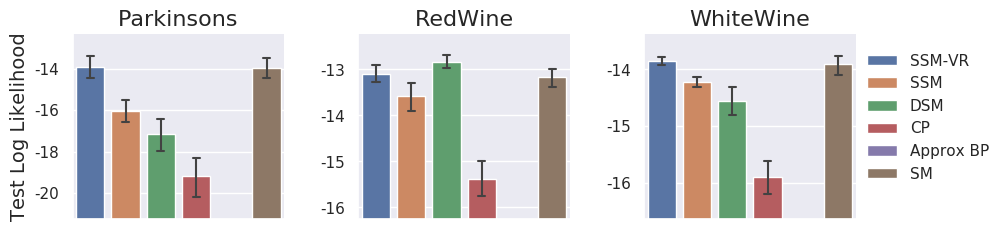}
    \end{center}
    \caption{Log-likelihoods after training DKEF models on UCI datasets with different loss functions; higher is better. Results for approximate backpropagation are not shown because log-likelihoods were smaller than $-10^{6}$.}
    \label{fig:dkef}
\end{figure*}

\subsection{NICE} \label{app:nice}

\paragraph{Hyperparameters and Model Architecture.}
The model has four coupling layers, each with five hidden layers, for a total of 20 hidden layers, as well as a final scale layer \citep{dinh14nice}. Softplus nonlinearities are used between hidden layers. 

Models are trained using the Adam optimizer with learning rate $10^{-3}$ for 100 epochs. The best checkpoint on exact score matching loss, evaluation every 100 iterations, is used to report test set performance. We use a batch size of 128.

Data are dequantized by adding uniform noise in the range $[-\frac{1}{512}, \frac{1}{512}]$, clipped to be in the range $[-0.001, 0.001]$, and then transformed using a logit transformation $\log(x) - \log(1-x)$. 90\% of the training set is used for training, and 10\% for validation, and the standard test set is used. 

For grid search for the optimal value of $\sigma$, eight values are used: [0.01, 0.05, 0.10, 0.20, 0.28, 0.50, 1.00, 1.50]. We also evaluate $\sigma = 1.74$, chosen by the heuristic in \citet{saremi2018deep}. The model with the best performance on validation score matching loss is used. Only nine values of $\sigma$ are evaluated because training each model takes approximately two hours.

\subsection{SCORE ESTIMATION FOR IMPLICIT DISTRIBUTIONS}\label{app:sec:vi}\label{app:sec:wae}
\paragraph{Architectures.} We put the architectures of all networks used in the MNIST and CelebA experiments in \tabref{tab:mnist_arch} and \tabref{tab:celeba_arch} respectively. 
\paragraph{Training.} For MNIST experiments, we use RMSProp optimizer with a learning rate of 0.001 for all methods. On CelebA, the learning rate is changed to 0.0001. All algorithms are trained for 100000 iterations with a batch size of 128.
\paragraph{Samples.} All samples are generated after 100000 training iterations.
\begin{table}
	\small
	\centering
	\begin{tabular}{l|c|c}
		\Xhline{3\arrayrulewidth}\bigstrut
		\bf Name & \bf Configuration & \bf Algorithm \\
		\hline\bigstrut
		\multirow{3}{*}{Encoder} & Linear(784, 256), Tanh & \multirow{3}{*}{ELBO VAE, WAE} \\
		& Linear(256, 256), Tanh & \\
		& Linear(256, $D_\bfz$) & \\
		\hline\bigstrut
		\multirow{3}{*}{Implicit Encoder} & Linear(784 + $D_\bfe$, 256), Tanh & \multirow{3}{*}{Implicit VAE} \\
		& Linear(256, 256), Tanh & \\
		& Linear(256, $D_\bfz$) & \\
		\hline\bigstrut
		\multirow{3}{*}{Decoder} & Linear($D_\bfz$, 256), Tanh & \multirow{3}{*}{All} \\
		& Linear(256, 256), Tanh & \\
		& Linear(256, 784), Sigmoid & \\
		\hline\bigstrut
		\multirow{3}{*}{Score Estimator} & Linear(784 + $D_\bfz$, 256), Tanh & \multirow{3}{*}{Implicit VAE} \\
		& Linear(256, 256), Tanh & \\
		& Linear(256, $D_\bfz$) & \\
		\hline\bigstrut
		\multirow{3}{*}{Score Estimator} & Linear($D_\bfz$, 256), Tanh & \multirow{3}{*}{WAE} \\
		& Linear(256, 256), Tanh & \\
		& Linear(256, $D_\bfz$) & \\
		\Xhline{3\arrayrulewidth}
	\end{tabular}
	\caption{Architectures on MNIST. In our models, $D_\bfe = D_\bfz$. $D_\bfz$ takes the values 8 and 32 in different experiments.}\label{tab:mnist_arch}
\end{table}

\begin{table}[h]
	\small
	\centering
	\begin{tabular}{l|c|c}
		\Xhline{3\arrayrulewidth}\bigstrut
		\bf Name & \bf Configuration & \bf Algorithm \\
		\hline\bigstrut
		\multirow{6}{*}{Encoder} & $5 \times 5$ conv; $m$ maps; stride $2 \times 2$; padding $2$, ReLU & \multirow{6}{*}{ELBO VAE, WAE} \\
		& $5 \times 5$ conv; $2m$ maps; stride $2 \times 2$; padding $2$, ReLU & \\
		& $5 \times 5$ conv; $4m$ maps; stride $2 \times 2$; padding $2$, ReLU & \\
		& $5 \times 5$ conv; $8m$ maps; stride $2 \times 2$; padding $2$, ReLU & \\
		& 512 Dense, ReLU & \\
		& $D_\bfz$ Dense & \\
		\hline\bigstrut
		\multirow{7}{*}{Implicit Encoder} & concat $[\bfx,$ ReLU(Dense($\bfe$))$]$ along channels & \multirow{7}{*}{Implicit VAE} \\
		& $5 \times 5$ conv; $m$ maps; stride $2 \times 2$; padding $2$, ReLU & \\
		& $5 \times 5$ conv; $2m$ maps; stride $2 \times 2$; padding $2$, ReLU & \\
		& $5 \times 5$ conv; $4m$ maps; stride $2 \times 2$; padding $2$, ReLU & \\
		& $5 \times 5$ conv; $8m$ maps; stride $2 \times 2$; padding $2$, ReLU & \\
		& 512 Dense, ReLU & \\
		& $D_\bfz$ Dense & \\
		\hline\bigstrut
		\multirow{6}{*}{Decoder} & Dense, ReLU & \multirow{6}{*}{All} \\
		& $5 \times 5$ $\textrm{conv}^\intercal$; $4m$ maps; stride $2 \times 2$; padding $2$; out padding $1$, ReLU & \\
		& $5 \times 5$ $\textrm{conv}^\intercal$; $2m$ maps; stride $2 \times 2$; padding $2$; out padding $1$, ReLU & \\
		& $5 \times 5$ $\textrm{conv}^\intercal$; $1m$ maps; stride $2 \times 2$; padding $2$; out padding $1$, ReLU & \\
		& $5 \times 5$ $\textrm{conv}^\intercal$; $c$ maps; stride $2 \times 2$; padding $2$; out padding $1$, Tanh & \\
		\hline\bigstrut
		\multirow{7}{*}{Score Estimator} & concat $[\bfx,$ ReLU(Dense($\bfz$))$]$ along channels & \multirow{7}{*}{Implicit VAE} \\
		& $5 \times 5$ conv; $m$ maps; stride $2 \times 2$; padding $2$, ReLU & \\
		& $5 \times 5$ conv; $2m$ maps; stride $2 \times 2$; padding $2$, ReLU & \\
		& $5 \times 5$ conv; $4m$ maps; stride $2 \times 2$; padding $2$, ReLU & \\
		& $5 \times 5$ conv; $8m$ maps; stride $2 \times 2$; padding $2$, ReLU & \\
		& 512 Dense, ReLU & \\
		& $D_\bfz$ Dense & \\
		\hline\bigstrut
		\multirow{7}{*}{Score Estimator} & Reshape(ReLU(Dense($\bfz$))$]$) to 1 channel & \multirow{7}{*}{WAE} \\
		& $5 \times 5$ conv; $m$ maps; stride $2 \times 2$; padding $2$, ReLU & \\
		& $5 \times 5$ conv; $2m$ maps; stride $2 \times 2$; padding $2$, ReLU & \\
		& $5 \times 5$ conv; $4m$ maps; stride $2 \times 2$; padding $2$, ReLU & \\
		& $5 \times 5$ conv; $8m$ maps; stride $2 \times 2$; padding $2$, ReLU & \\
		& 512 Dense, ReLU & \\
		& $D_\bfz$ Dense & \\
		\Xhline{3\arrayrulewidth}
	\end{tabular}
	\caption{Architectures on CelebA. In our models, $D_\bfe = D_\bfz$. $D_\bfz$ takes the values 8 or 32 in different experiments.}\label{tab:celeba_arch}
\end{table}

\section{VARIANCE REDUCTION}\label{app:sec:vr}
Below we discuss approaches to reduce the variance of $\hat{J}(\bftheta; \bfx_1^N, \bfv_{11}^{NM})$, which can lead to better performance in practice. The most na\"{i}ve approach, of course, is using a larger $M$ to compute $\hat{J}(\bftheta; \bfx_1^N, \bfv_{11}^{NM})$. However, this requires more computation and when $M$ is close to the data dimension, sliced score matching will lose its computational advantage over score matching. 

An alternative approach is to leverage control variates~\citep{mcbook}. A control variate is a random variable whose expectation is tractable, and is highly correlated with another random variable without a tractable expectation. Define $c(\bftheta; \bfx, \bfv) \triangleq \frac{1}{2}(\bfv^\intercal \bfs_m(\bfx;\bftheta))^2$. Note that when $p_\bfv$ is a multivariate standard normal or multivariate Rademacher distribution, $c(\bftheta; \bfx, \bfv)$ will have a tractable expectation, \ie,
\begin{align*}
    \mbb{E}_{p_\bfv}[c(\bftheta; \bfx, \bfv)] = \frac{1}{2}\norm{\bfs_m(\bfx;\bftheta)}^2_2,
\end{align*}
which is easily computable. Now let $\beta(\bfx) c(\bftheta; \bfx, \bfv)$ be our control variate, where $\beta(\bfx)$ is a function to be determined. Due to the structural similarity between $\beta(\bfx) c(\bftheta; \bfx, \bfv)$ and $\hat{J}(\bftheta; \bfx_1^N, \bfv_{11}^{NM})$, it is easy to believe that $\beta(\bfx) c(\bftheta; \bfx, \bfv)$ can be a correlated control variate with an appropriate $\beta(\bfx)$. We thus consider the following objective
\begin{align*}
    \hat{J}_{\text{vr}}(\bftheta; \bfx_1^N, \bfv_{11}^{NM}) \triangleq \hat{J}(\bftheta; \bfx_1^N, \bfv_{11}^{NM}) - \frac{1}{N}\sum_{i=1}^N \beta(\bfx_i)\bigg( \frac{1}{M}\sum_{j=1}^M c(\bftheta; \bfx_i, \bfv_{ij}) - \frac{1}{2}\norm{\bfs_m(\bfx_i; \bftheta)}_2^2\bigg).
\end{align*}
Note that $\mbb{E}[\hat{J}_{\text{vr}}(\bftheta; \bfx_1^N, \bfv_{11}^{NM})] = J(\bftheta; p_\bfv)$. The theory of control variates guarantees the existence of $\beta(\bfx)$ that can reduce the variance. In practice, there can be many heuristics of choosing $\beta(\bfx)$, and we found that $\beta(\bfx) \equiv 1$ can often be a good choice in our experiments.

\newpage
\section{PSEUDOCODE}
\begin{algorithm}
	\caption{Score Matching}
	\label{alg:sm}
	\begin{algorithmic}[1]
	    \Require{$\tilde{p}_m(\cdot; \bftheta), \bfx$}
        \State{$\bfs_m(\bfx;\bftheta) \gets \texttt{grad}(\log \tilde{p}_m(\bfx; \bftheta), \bfx)$}
        \State{$J \gets \frac12\norm{\bfs_m(\bfx;\bftheta)}_2^2$}
        \For{$d \gets 1$ to $D$}
		\Comment{For each diagonal entry}
        \State{$(\nabla_\bfx\bfs_m(\bfx;\bftheta))_d \gets \texttt{grad}((\bfs_m(\bfx; \bftheta))_d, \bfx)_d$}
        \State{$J \gets J + (\nabla_\bfx\bfs_m(\bfx;\bftheta))_d$}
        \EndFor
        \item[]
        \Return{$J$}
	\end{algorithmic}
\end{algorithm}

\end{document}